\documentclass{article} 
\usepackage{iclr2024_conference,times}


\usepackage{amsmath,amsfonts,bm}









\def\eqref#1{equation~\ref{#1}}









\def\1{\bm{1}}








\def\vb{{\bm{b}}}
\def\vc{{\bm{c}}}

\def\ve{{\bm{e}}}
\def\vf{{\bm{f}}}
\def\vg{{\bm{g}}}
\def\vh{{\bm{h}}}
\def\vi{{\bm{i}}}

\def\vm{{\bm{m}}}

\def\vo{{\bm{o}}}

\def\vu{{\bm{u}}}
\def\vv{{\bm{v}}}
\def\vw{{\bm{w}}}
\def\vx{{\bm{x}}}
\def\vy{{\bm{y}}}
\def\vz{{\bm{z}}}


\def\mA{{\bm{A}}}
\def\mB{{\bm{B}}}
\def\mC{{\bm{C}}}
\def\mD{{\bm{D}}}

\def\mG{{\bm{G}}}

\def\mI{{\bm{I}}}
\def\mJ{{\bm{J}}}

\def\mM{{\bm{M}}}

\def\mP{{\bm{P}}}

\def\mS{{\bm{S}}}

\def\mU{{\bm{U}}}
\def\mV{{\bm{V}}}
\def\mW{{\bm{W}}}

\def\mSigma{{\bm{\Sigma}}}

\DeclareMathAlphabet{\mathsfit}{\encodingdefault}{\sfdefault}{m}{sl}
\SetMathAlphabet{\mathsfit}{bold}{\encodingdefault}{\sfdefault}{bx}{n}














\usepackage[colorlinks=true, citecolor=blue, linkcolor=black]{hyperref}
\usepackage{url}
\usepackage{color}
\usepackage{subcaption}
\usepackage{makecell}
\usepackage{multirow}
\usepackage{booktabs,floatrow}
\usepackage{siunitx}
\usepackage{cleveref}
\usepackage{amsthm}

\newtheorem{theorem}{Theorem}
\newtheorem{lemma}{Lemma}

\usepackage{graphicx,wrapfig,lipsum}

\title{Leveraging Low-Rank and Sparse \newline Recurrent Connectivity for Robust \newline Closed-Loop Control}




\author{
  Neehal Tumma$^1$ \hspace{0.15cm}
  Mathias Lechner$^2$ \hspace{0.15cm}
  Noel Loo$^2$ \hspace{0.15cm}
  Ramin Hasani$^2$ \hspace{0.15cm}
  Daniela Rus$^2$ \\
  $^1$Department of Computer Science, Harvard University \\
  $^2$Computer Science and Artificial Intelligence Lab, MIT \\
  \texttt{neehaltumma@college.harvard.edu}
}

\iclrfinalcopy 

\begin{document}

\maketitle

\begin{abstract}
Developing autonomous agents that can interact with changing environments is an open challenge in machine learning. Robustness is particularly important in these settings as agents are often fit offline on expert demonstrations but deployed online where they must generalize to the closed feedback loop within the environment. In this work, we explore the application of recurrent neural networks to tasks of this nature and understand how a parameterization of their recurrent connectivity influences robustness in closed-loop settings. Specifically, we represent the recurrent connectivity as a function of rank and sparsity and show both theoretically and empirically that modulating these two variables has desirable effects on network dynamics. The proposed low-rank, sparse connectivity induces an interpretable prior on the network that proves to be most amenable for a class of models known as closed-form continuous-time neural networks (CfCs). We find that CfCs with fewer parameters can outperform their full-rank, fully-connected counterparts in the online setting under distribution shift. This yields memory-efficient and robust agents while opening a new perspective on how we can modulate network dynamics through connectivity.


\end{abstract}

\section{Introduction}
Building models that are robust under natural distribution shift has long been a goal in artificial intelligence \citep{NEURIPS2020_d8330f85}. Existing techniques have sought to address this challenge through various approaches, including domain adaptation \citep{farahani2020brief}, transfer learning \citep{DBLP:journals/corr/abs-1911-02685} and data augmentation \citep{DBLP:journals/corr/abs-1712-04621}. However, these techniques come with drawbacks: domain adaptation and transfer learning can be computationally expensive, and data augmentation often suffers from robustness gains that are not uniform across corruption types \citep{DBLP:journals/corr/abs-1901-10513}. More generally, machine learning systems tend to perform poorly under distribution shift because approaches like these only serve to ameliorate a problem that is rooted in the model architecture itself. 

Natural learning systems serve to address this problem by interacting with their
environment to understand the world. They make use of biologically-inspired frameworks that account for distribution shifts by modeling temporal structure \citep{pmlr-v119-hasani20a}. These models are particularly well-suited in a paradigm where they are fit on offline, passive datasets using imitation learning, but deployed in closed-loop, active testing settings \citep{DBLP:journals/corr/abs-1905-11979}. This domain is commonly referred to as the open-loop to closed-loop causality gap \citep{lechner2022vision} in which generalization requires the agent to learn a coherent representation of its world \citep{NCPpaper}. One particularly effective framework known as closed-form continuous-time neural networks (CfCs) were shown to outperform many state-of-the-art recurrent models in closed-loop settings \citep{cfc}. These models leverage a sparse wiring structure induced at initialization, yielding compact and interpretable networks \citep{doi:10.1126/scirobotics.adc8892}. 
However, the reason as to why sparse connectivity is useful in closed-loop systems is poorly understood. 
In this work, we will address this question by proposing and analyzing an interpretable parameterization of connectivity in CfCs. 

To do so, we turn to another class of models: low-rank recurrent neural networks (low-rank RNNs). A low-rank RNN is a network whose \emph{recurrent} connectivity matrix is low-rank. Deriving inspiration from neural activity in the brain, these networks have provably low-dimensional patterns of activity which makes them successful in many simple neuroscience-based tasks \citep{MASTROGIUSEPPE2018609}. 
In this paper, we leverage ideas from previous work on CfCs and low-rank RNNs in order to devise a novel parameterization of recurrent connectivity that improves robustness in closed-loop settings and offers interpretable measures for the network dynamics it incites. In doing so, we show the following:

\begin{itemize}
\item Parameterizing recurrent connectivity as a function of rank and sparsity yields provable and interpretable network dynamics (Section \ref{sec:interpretable_prior})

\item Low-rank and sparse recurrent neural networks can outperform their full-rank, fully-connected counterparts under distribution shift in closed-loop environments (Section \ref{sec:performance})

\item Pruning by inducing sparsity and pruning by enforcing a low-rank structure are distinct in the types of network dynamics each induces (Section \ref{sec:robustness})

\item CfCs are more amenable to a low-rank and sparse connectivity prior than other canonical recurrent architectures such as LSTMs (Section \ref{sec:task_dim})

\end{itemize}

\section{Related work}
In this section, we describe previous works that are closely related to the core findings of the paper.

\textbf{Robustness in closed-loop settings.} 
A class of continuous-time recurrent models known as liquid neural networks \citep{hasani2020liquid} have been shown to achieve state-of-the-art performance under distribution shift by explicitly accounting for external interventions to environment conditions. Liquid neural networks are a prime example of natural learning systems that learn robust, and provably causal, representations  \citep{vorbach2021causal}. CfCs are a specific instance of liquid neural networks that can leverage sparse connectivity to learn robust representations in the closed-loop causality gap setting. 

\begin{wrapfigure}[27]{r}{5.5cm}
\label{fig:closedloop}
\includegraphics[scale=0.485]{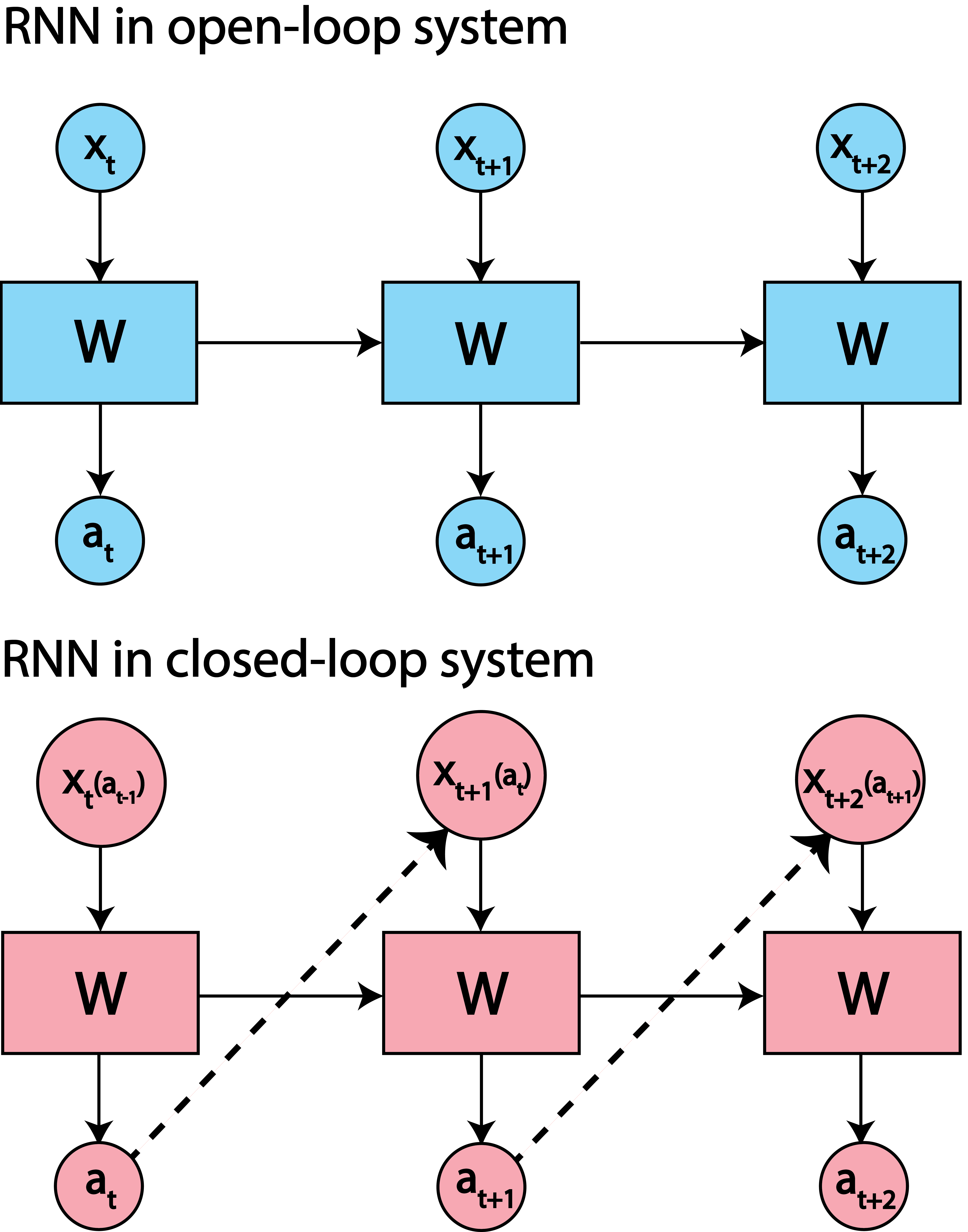}
\caption{Open-loop systems receive ground-truth observations $x_i$. Closed-loop systems receive observations $x_i (a_{i-1})$ that are a function of the previous actions the agent takes. There is no external feedback to correct the agent in the closed-loop setting.}
\end{wrapfigure} 

Outside of modifying the model itself, other approaches to training robust models under an imitation learning framework include augmentation strategies, human interventions \citep{DBLP:journals/corr/abs-1011-0686}, goal-conditioning \citep{DBLP:journals/corr/abs-1710-02410}, reward conditioning \citep{DBLP:journals/corr/abs-1912-02877}, task-embedding \citep{DBLP:journals/corr/abs-1810-03237} and meta-learning \citep{DBLP:journals/corr/abs-1709-04905}. These advances fail to consider the structure of the underlying policy; in contrast, liquid neural networks improve the robustness of the decision-making process in autonomous agents, leading to better generalization under the same training distribution \citep{vorbach2021causal}. 


\textbf{Relating connectivity and network dynamics.} A longstanding goal in computational neuroscience is to understand the relationship between structure and function in the brain \citep{doi:10.31887/DCNS.2013.15.3/osporns}. We can ask the analogous question in the context of artificial neural networks: what role does model connectivity at initialization play in learned network dynamics? Amongst recurrent neural networks, echo state networks \citep{goodfellow2016deep} present one example of a model that induces static sparsity in order to modulate the complexity of dynamics along the recurrent dimension. However, unlike in our case, echo state networks fix their recurrent connectivity during training. Other works have found that sparse networks can yield provably more robust models \citep{DBLP:journals/corr/abs-1810-09619} that generalize across many tasks \citep{chen2022sparsity} outside the recurrent setting. With respect to modulating network rank, the advent of the low-rank RNN framework \citep{MASTROGIUSEPPE2018609} is one of the first attempts at explicitly modeling the low-dimensional dynamics of the brain. Some follow-ups on this work include examining how low-rank connectivity arises in full-rank settings \citep{NEURIPS2020_9ac1382f} and an analysis of how some tasks are more amenable to low-rank connectivity than others \citep{complowrank}. Our work borrows approaches and intuition from many of these prior studies, but is distinct with respect to the proposed connectivity, analysis of network dynamics and task setting.

\textbf{Pruning at initialization.} Neural network pruning typically either removes connections in a costly iterative training-retraining paradigm \citep{lth} or modifies the objective function to promote sparsity \citep{goodfellow2016deep} which can lead to difficulties in optimization. Recently, pruning at initialization has emerged as an approach to resolve these issues. Pruning at initialization attempts to take a randomly initialized network and remove weights before training \citep{pai}. Various approaches include using connectivity sensitivity \citep{snip}, maintaining dynamical isometry \citep{grasp} and training on supermasks \citep{hidden}. Generally, these techniques outperform sparsity induced randomly at initialization. Another approach to pruning is using rank decomposition. One example is low-rank matrix factorization, which is good at reducing size but at the cost of performance \citep{6638949}. The most effective low-rank compression techniques usually leverage a low-rank approximation on the full-rank matrix either during or after training \citep{lra}. In contrast, in this work, we propose a parameterization of connectivity that randomly induces sparsity and leverages a low-rank decomposition of the recurrent weights at initialization -- the most straightforward and least costly form of pruning. 

\textbf{Alternative RNN parameterizations.} Existing works proposed other recurrent architectures such as Lipschitz RNNs \citep{erichson2021lipschitz}, antisymmetric RNNs \citep{chang2019antisymmetricrnn} and Cayley RNNs \citep{helfrich2018orthogonal} which, like us, propose alternative parameterizations of recurrent connectivity. However, unlike our study, these works introduce changes to the underlying network architecture, whereas our work focuses on the applications of low-rank and sparse recurrent connectivity in more canonical recurrent networks. This is in line with previous works in the closed-loop causality gap setting \citep{doi:10.1126/scirobotics.adc8892} which also restrict the scope of their models as such. 

\section{Parameterization of connectivity}
Consider a standard RNN whose input is denoted by $ \displaystyle \vx_{t} \in \mathbb{R}^{n}$ and hidden state is given by $ \displaystyle \vh_{t} \in \mathbb{R}^{h}$ for time step $t$, hidden size $h$ and input size $n$. Then the functional form of the RNN is given by

\begin{equation}
\label{equation:rnn_update}
    \displaystyle \vh_{t} = \tanh(\mW_{rec} \displaystyle \vh_{t - 1} + \mW_{inp} \displaystyle \vx_{t} + \displaystyle \vb)
\end{equation}

such that $\mW_{rec} \in \mathbb{R}^{h \times h}$ denotes the recurrent weights, $\mW_{inp} \in \mathbb{R}^{n \times h}$ denotes the input weights and $\displaystyle \vb$ denotes the bias. We now propose an alternative parameterization of connectivity for the recurrent weights. Consider a singular value decomposition of $\mW_{rec}$ given by $\mW_{rec} = \mU \mSigma \mV^{T}$. The canonical rank $r$ approximation of $\mW_{rec}$ is $\mU_r \mSigma_r \mV_r^{T}$ such that $\mSigma_r$ consists of the top-$r$ singular values and $\mU_r$ and $\mV_r^{T}$ consist of the corresponding singular vectors. Using this low-rank approximation, we can construct the following parameterization of the recurrent weights for a given rank $r$ and sparsity level $s$:

\begin{equation}\label{eq:wrec}
    \mW_{rec}(r, s) =  \mW_{1} (r) \mW_{2} (r) \odot \mM(s)
\end{equation}

where $\mW_{1} = \mU_r (\mSigma_r)^{1/2}$, $\mW_{2} = (\mSigma_r)^{1/2} \mV_r^{T}$ and $\mM \in \mathbb{F}_2$ is a random binary mask such that each entry $\mM_{ij} \sim 1 - \textrm{Bernoulli} (s)$. This low-rank and sparse parameterization of $\mW_{rec}$ is a generalization of the parameterization studied in \cite{Herbert2022.03.31.486515} to recurrent connectivity of arbitrary rank and trainable weights $\mW_{1}, \mW_{2}$ as opposed to restricting ourselves to the setting of rank-1 matrices and fixed weights in which dynamical analysis is more tractable, but the network is constrained in expressivity. Note that the random mask $\mM$ is fixed throughout training. 

\subsection{Network dynamics at initialization}
\label{sec:interpretable_prior}
We propose this parameterization of recurrent connectivity as it allows us to modulate aspects of the network that instill an inductive bias, which proves to be beneficial for performance in the closed-loop setting, particularly under distribution shift. We care about three measures of $\mW_{rec}(r, s)$ in particular: the spectral radius, the spectral norm and the rate of decay of the singular value spectrum.

The spectral radius $\lambda_{max}$ of $\mW_{rec}(r, s)$ is widely accepted as a proxy for the rate at which the gradient evolves backwards in time \citep{vanishinggradient}. Namely, if $\lambda_{max} < 1$, an RNN has a vanishing gradient (refer to \ref{sec:memory} for details). In many time series applications, having a constant error flow is a desirable property, as arbitrary data sequences may have long-term relations. However, in the case of many closed-loop systems, learning long-term dependencies can be detrimental in the online setting due to the short-term causality inherent to the task \citep{NCPpaper}. To put this into context, in this work we consider agents learning to play games in various Arcade Learning Environments (ALEs). Agents that perform well in ALEs often use frame stacking \citep{apex} which enforces a strict short-horizon temporal prior by leveraging a look-back window of only a few input frames. Hence, networks like LSTMs or GRUs may capture spurious long-term dependencies and thus learn inadequate models. In contrast, the vanishing of gradients that tends to be more pronounced in RNNs counterintuitively enhances the performance of the agent, as it places a short-horizon prior on the temporal attention span of the network. 

We will motivate the importance of the singular values in the distribution shift setting. Consider an SVD on the recurrent weights given by $\mW_{rec}(r, s) = \mU \mSigma \mV^T$. Furthermore, consider a perturbation $\ve$ sampled uniformly at random from the set of norm-1 vectors. If we apply this perturbation to the hidden state $\vh_t$, then to measure robustness we want to quantify the deviation between $\mW_{rec}(r, s) \vh_t$ and $\mW_{rec}(r, s) (\vh_t + \ve)$. This is given by $\mW_{rec}(r, s) \ve = \mU \mSigma \mV^T \ve$. Note that both $\mU$ and $\mV^T$ are unitary matrices and thus do not affect the magnitude of $\ve$. This means that $\mSigma$ captures the nature of the transformation on $\ve$. The two relevant aspects of the transformation are its magnitude and direction. The spectral norm, the maximum singular value, measures the former (i.e. smaller spectral norm implies better robustness). To quantify the latter, we examine the rate at which the singular values decay. This provides a proxy for the effective number of directions $\ve$ is expanded in (i.e. faster decay implies better robustness). For details on this argument, refer to \ref{sec:robustness_motivation}. 


\label{sec:connectivity}
\begin{figure}[t]
\begin{center}
\includegraphics[scale=0.7]{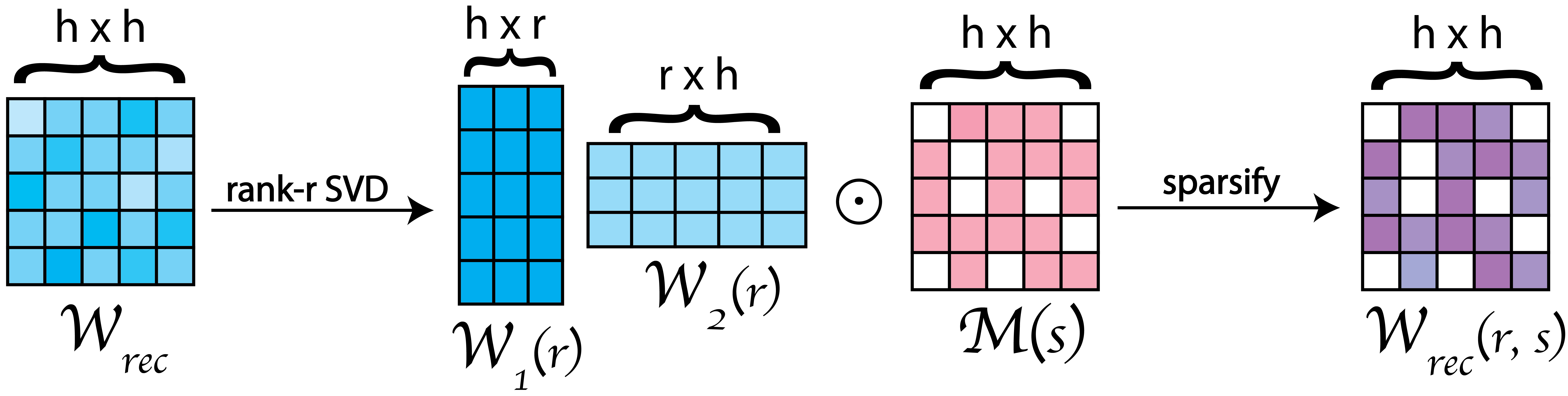}
\end{center}
\caption{Parameterization of recurrent connectivity as a function of rank and sparsity.}
\end{figure}

Now that we have motivated the spectral properties of $\mW_{rec} (r, s)$, we next aim to understand how they change as a function of rank and sparsity. We consider two random initialization schemes for $\mW_{rec} (r, s)$: Glorot uniform spectral initialization (GU-spec) and orthogonal spectral initialization (ortho-spec). To thoroughly motivate the proposed parameterization, we provide theoretical proof of the relationships between rank/sparsity and spectral radius/spectral norm for both initialization schemes where possible. For the cases we do not prove, we provide an empirical analysis instead. 

\begin{theorem}
    Given recurrent weights with parametrization shown in Equation (\ref{eq:wrec}) and initialization scheme specified in parentheses, we prove the following:
\begin{itemize}
    \item The spectral radius of $\mW_{rec}(r, s)$ decreases as a function of $s$ (GU-spec)
    \item The spectral radius of $\mW_{rec}(r, s)$ increases as a function of $r$ (ortho-spec)
    \item The spectral norm of $\mW_{rec}(r, s)$ decreases as a function of $s$ (GU-spec)
    \item The spectral norm of $\mW_{rec}(r, s)$ is constant as a function of $r$ (ortho-spec, GU-spec)
\end{itemize}
\label{theory}
\end{theorem}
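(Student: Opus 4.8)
The plan is to handle the four claims in increasing order of difficulty, separating the two spectral quantities and exploiting the single structural fact that the rank-$r$ truncation in \eqref{eq:wrec} retains the \emph{top} $r$ singular values. Throughout I treat the statements as holding in expectation over the random mask $\mM$ (and over the random draw of the initialization), since entrywise masking is not a deterministically monotone operation on spectra.

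First I would dispatch the spectral norm claims, which are the most transparent. Because $\mW_1(r)\mW_2(r) = \mU_r\mSigma_r\mV_r^{T}$ keeps the largest singular values, the leading singular value $\sigma_1$ is present for every $r \ge 1$, so $\|\mW_1(r)\mW_2(r)\|_2 = \sigma_1$ is manifestly independent of $r$; under ortho-spec all retained singular values equal $1$, so the unmasked low-rank factor is a partial isometry of spectral norm exactly $1$ for all $r$. To upgrade this to the masked matrix $\mW_{rec}(r,s)$ and obtain claim~4, I would compare $\E_{\mM}\|(\mU_r\mV_r^{T})\odot\mM\|_2$ across values of $r$ and argue the $r$-dependence cancels because, under ortho-spec, each column of $\mU_r\mV_r^{T}$ carries a fixed norm; for the GU-spec half the ``top singular value is always retained'' observation does the work directly.

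For claim~3 (spectral norm decreasing in $s$, GU-spec), the key step converts the Hadamard mask into a reduction of the effective entrywise second moment: writing $A = \mW_1\mW_2$ and using independence of $\mM$ from $A$, each entry satisfies $\E[(A_{ij}\mM_{ij})^2] = (1-s)A_{ij}^2$, so the effective per-entry variance scales by $(1-s)$. I would then invoke a random-matrix estimate giving $\E\|A\odot\mM\|_2 \propto \sqrt{1-s}$, which is monotone decreasing. Because the deterministic spectral norm is not monotone under masking, the rigorous backbone would be the sandwich $\|A\odot\mM\|_F/\sqrt{h} \le \|A\odot\mM\|_2 \le \|A\odot\mM\|_F$ together with the exact identity $\E\|A\odot\mM\|_F^2 = (1-s)\|A\|_F^2$, which forces both envelope bounds, and hence the spectral norm, to follow the decreasing $\sqrt{1-s}$ trend. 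Claim~1 (spectral radius, GU-spec, decreasing in $s$) reuses the same $(1-s)$ variance-scaling step and appeals to the circular-law scaling $\lambda_{max}\propto\sqrt{1-s}$ for the masked matrix.

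The hard part will be claim~2 (spectral radius increasing in $r$, ortho-spec), because $\lambda_{max}$ is an eigenvalue rather than a singular value and is sensitive to non-normality. I would start from the unmasked identity that the nonzero eigenvalues of $\mU_r\mV_r^{T}$ coincide with those of the $r\times r$ block $\mV_r^{T}\mU_r$, whose spectral radius grows toward $1$ as $r\to h$ and $\mU_r\mV_r^{T}$ approaches the full orthogonal matrix, making this precise via an interlacing/monotonicity argument on $\mV_r^{T}\mU_r$. The obstacle is transferring this clean unmasked statement through the random mask: the spectral radius of a non-normal, entrywise-masked matrix has no closed form and is not monotone under adding rank-one components in the worst case, so I expect a random-matrix-theory argument controlling how the dominant eigenvalue of $(\mU_r\mV_r^{T})\odot\mM$ moves as $r$ grows, rather than a deterministic inequality. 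This is where the proof is least elementary, whereas the other three claims reduce to the singular-value-retention observation plus the $(1-s)$ variance computation.
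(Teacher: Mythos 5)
Your plans for claims 1 and 4 coincide with the paper's proofs: claim 1 is exactly the variance-scaling-plus-circular-law argument (masked Glorot entries stay iid with variance $(1-s)/n$, so the Girko disk has radius $\sqrt{1-s}$), and claim 4 is exactly the observation that rank truncation retains the top singular value(s), all of which equal $1$ under ortho-spec. One scope remark: the paper never pushes the rank claims through the mask as you attempt to; its rank proofs are stated at $s=0$ and its sparsity proofs at full rank (and its own empirical figure for orthogonal-sparse matrices shows the spectral norm is \emph{non-monotone} in $s$, so your column-norm ``cancellation'' sketch for the masked orthogonal case is aiming at something the paper deliberately avoids, and which may simply be false).

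There are two genuine gaps. For claim 3, your ``rigorous backbone'' --- the sandwich $\|A\odot\mM\|_F/\sqrt{h}\le\|A\odot\mM\|_2\le\|A\odot\mM\|_F$ combined with $\E\|A\odot\mM\|_F^2=(1-s)\|A\|_F^2$ --- cannot establish monotonicity: the two envelopes differ by a factor of $\sqrt{h}$, and a quantity trapped between two decreasing envelopes is free to be non-monotone (the orthogonal-sparse case satisfies the same Frobenius identity yet its spectral norm is not monotone in $s$). The actual content of the proof is the random-matrix estimate you relegate to a one-line invocation: the paper applies the Marchenko--Pastur law to the masked matrix, which is legitimate precisely because masked Glorot entries remain iid with variance $(1-s)/n$, yielding $\|\mW\|=2\sqrt{1-s}$; note this requires iid entries, i.e.\ the full-rank Glorot matrix, not a rank-truncated one, whose entries are dependent. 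For claim 2, your reduction of the nonzero spectrum of $\mU_r\mV_r^T$ to the $r\times r$ matrix $\mV_r^T\mU_r$ is the same reduction the paper makes (with the SVD choice $\mU=\mW^0$, $\mSigma=\mI$, $\mV=\mI$, the matrix $\mV_r^T\mU_r$ is the leading $r\times r$ principal submatrix of $\mW^0$), but the ``interlacing/monotonicity argument'' you propose next does not exist: interlacing is a Hermitian phenomenon, the principal submatrices of an orthogonal matrix are not normal, and for a \emph{fixed} orthogonal matrix the spectral radius of nested principal submatrices need not be monotone in the size (explicit $3\times 3$ counterexamples exist in which the $1\times 1$ block has larger spectral radius than the $2\times 2$ block). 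The statement is only true on average over Haar-random orthogonal matrices, and that is the missing ingredient the paper supplies by citation: the eigenvalues of the $r n\times r n$ truncation of a Haar orthogonal matrix have mean modulus $\sqrt{r}$ with variance vanishing as $n\to\infty$, whence $\rho(\mW)=\sqrt{r}$. Without that concentration result, your plan for the hardest claim does not go through.
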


For proofs of the above properties and an empirical analysis for the unproven cases (which follow the same trends as the proven cases), refer to \ref{sec:proofs}. And since our theoretical arguments do not readily extend to the full singular value spectrum, we provide an empirical analysis as a function of rank and sparsity for this as well (\ref{sec:spectra_at_init}) which shows that the rate of singular value spectrum decay increases as a function of rank and decreases as a function of sparsity.

\section{Experiments}
\label{sec:experiments_main_paper}

\textbf{Experimental setup.} We study the offline-online generalization gap of various recurrent architectures parameterized by low-rank, sparse connectivity under an imitation learning framework. In particular, we measure the performance of our models in the Arcade Learning Environment \citep{alebib} and MuJoCo \citep{6386109}. For ALEs, we run experiments in the Seaquest and Alien environments and for MuJoCo we explore the HalfCheetah environment. For each environment, we train a deep-Q network to generate expert trajectories that we then use to fit our recurrent networks in an offline setting. We evaluate the recurrent networks online, closed-loop when deployed into the environment, both in-distribution and under distribution shift. For more details on the experimental framework, refer to \ref{sec:experiments}.

\textbf{Models.}
We examine the following recurrent architectures: RNNs, LSTMs, GRUs, CNNs and CfCs. RNNs, LSTMs \citep{HochSchm97} and GRUs \citep{gru} refer to their canonical implementations. CNNs serve to address whether supervision along the recurrent dimension is even necessary. CfCs \citep{cfc} are a modified version of an RNN that admit the following functional form: 

$$\vh (t) = \sigma(F(\vh_{t - 1}, \vx_{t}, \theta_F)) \odot G(\vh_{t - 1}, \vx_{t}, \theta_G) +  [1 - \sigma(F(\vh_{t - 1}, \vx_{t}, \theta_f)] \odot H(\vh_{t - 1}, \vx_{t}, \theta_H)$$

Here, $H$ and $G$ are vanilla RNNs and $F$ can be interpreted as an adaptive gating mechanism that interpolates between the state-space trajectories of $H$ and $G$ on a per-element basis in the hidden vector $\vh (t)$ (refer to \ref{sec:rec_grad} for details). Finally, recall in Section \ref{sec:connectivity} that we proposed a low-rank, sparse parameterization of connectivity $\mW_{rec}(r, s)$ for the recurrent weights in a vanilla RNN. We generalize this parameterization to the other recurrent architectures by simply parameterizing all recurrent weights in the network in the form of $\mW_{rec}(r, s)$. For specifics, refer to \ref{sec:experiments}. 

\subsection{Performance in closed-loop settings}
\label{sec:performance}

\begin{figure}[t]
\begin{center}
\includegraphics[scale=0.77]{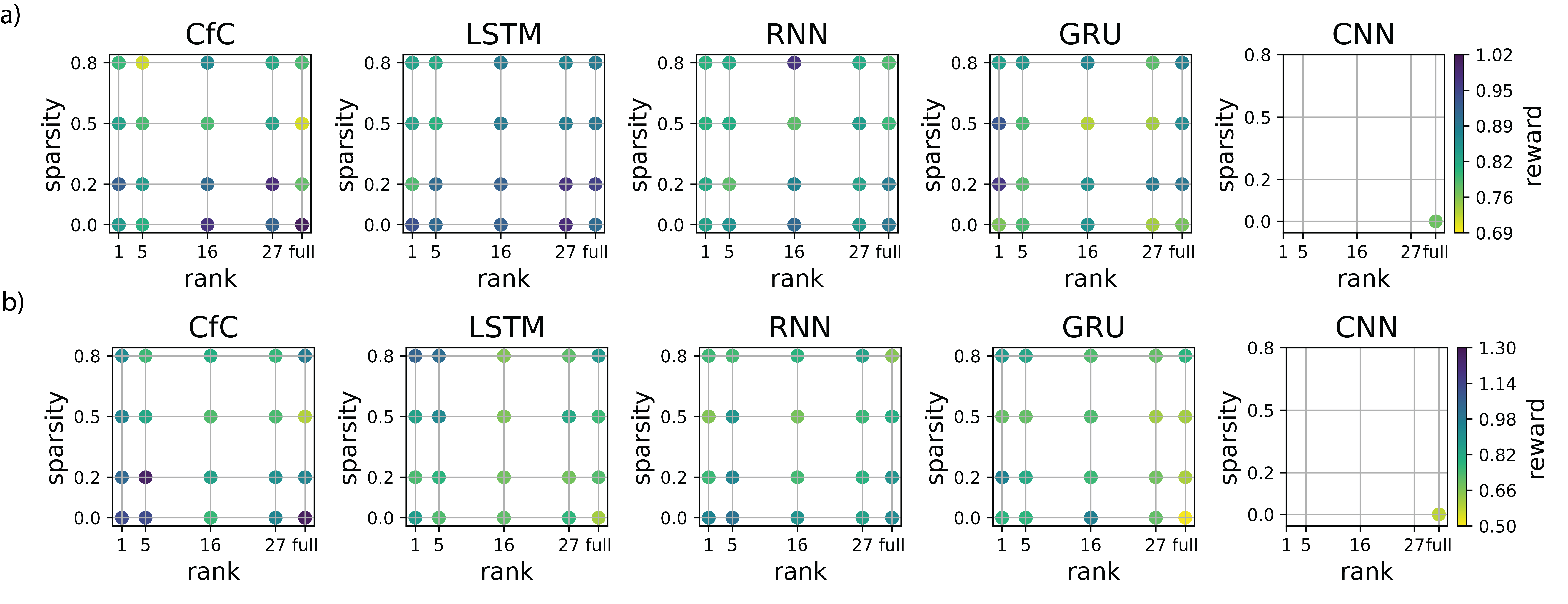}
\end{center}
\caption{Online performance of recurrent networks under different ranks and sparsities in Seaquest environment. For offline performance, refer to \ref{sec:full_results}. a) In-distribution rewards in the online, closed-loop setting normalized by the rewards obtained by the expert in-distribution. b) Rewards averaged across 5 distribution shifts, normalized by the rewards obtained by the expert under distribution shift.}
\label{fig:main_results}
\end{figure}

We first explore the impact of our proposed connectivity parameterization for various ranks $r$ and sparsities $s$ in the Seaquest environment in the online, in-distribution setting. We observe that the best performing models in-distribution tend to be low-sparsity, high-rank CfCs and LSTMs (Figure \ref{fig:main_results}a). GRUs, on the other hand, tend to perform poorly for most $(r, s)$ which aligns with previous work which also shows that GRUs are not particularly well-suited for closed-loop settings \citep{doi:10.1126/scirobotics.adc8892}. Due to the poor performance of GRUs relative to LSTMs, we will restrict most of our discussion and analysis to CfCs, LSTMs and RNNs. The most notable finding from the in-distribution results is observed in the high-sparsity models; in particular, we find that LSTMs tend to perform much better than CfCs at high sparsities. Analogously, RNNs, which in general tend to perform worse than CfCs, show similarly worse performance at high sparsities relative to LSTMs. We offer intuition for these findings in Section \ref{sec:vanishing} where we explore the relationship between sparsity and the recurrent memory horizon of the network. 

Under distribution shift, we find that the best performing models are low-rank, low-sparsity CfCs. More generally, across all recurrent architectures, we find that the low-rank models tend to perform on par with, and in many cases actually outperform, higher rank ones. This demonstrates that we can construct models with fewer parameters at initialization yet still learn more robust models. Interestingly, however, we find that the means through which we prune at initialization matters: in particular for high-sparsity models, we do not achieve the same robustness that we observe in low-rank ones. We will provide justification for the apparent disparity between pruning by increasing sparsity and pruning by decreasing rank in Section \ref{sec:robustness}. For analogous results in the Alien and HalfCheetah environments, refer to \ref{sec:full_results}.

\newcommand\norm[1]{\left\lVert#1\right\rVert}

\subsection{Modulating Recurrent memory}
\label{sec:vanishing}

\begin{figure}[t]
\begin{center}
\includegraphics[scale=0.85]{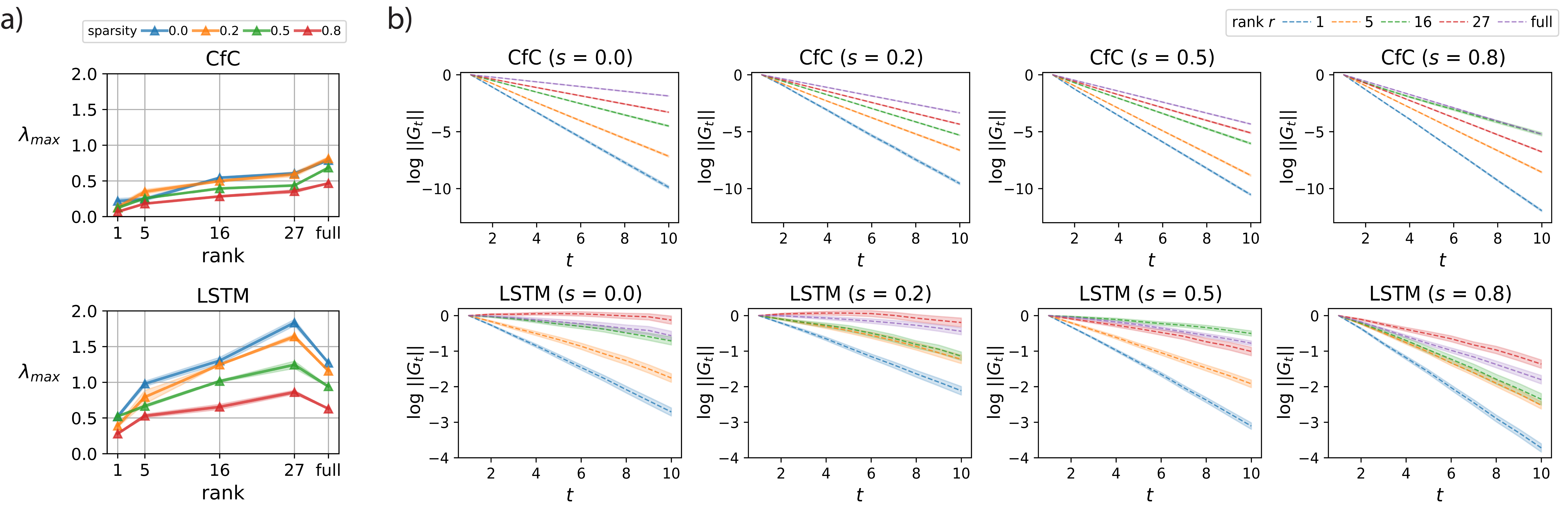}
\end{center}
\caption{a) Spectral radius of recurrent weights $\mW_{rec} (r, s)$ in trained CfC and LSTM networks across ranks and sparsities ($\pm 1$ SE). b) Frobenius norm of recurrent gradients $\mG_t$ as a function of time ($\pm 1$ SE). Note that the norms are plotted in log space and translated to decay from $0$ to enable easier comparison. For details on the computation, refer to \ref{sec:experiments}.}
\label{fig:vanishing}
\end{figure}
Here, we aim to gain intuition for the results we observed in the online, in-distribution setting and in  particular offer an explanation as to why the only effective form of pruning in-distribution was inducing sparsity in LSTMs. Recall in Section \ref{sec:connectivity} we motivated the parameterization $\mW_{rec} (r, s)$ with respect to three measures, one of which was spectral radius, a proxy for a model's attention span across time. In particular, we showed at initialization that the spectral radius of $\mW_{rec} (r, s)$ decreases as a function of sparsity and increases as a function of rank. We first note that these trends persist after training (Figure \ref{fig:vanishing}a). In particular, in both CfCs and LSTMs, the low-rank and highly-sparse models tend to admit solutions with recurrent weights of low spectral radius.

Comparing CfCs and LSTMs, we note that CfCs tend to admit solutions with significantly lower spectral radii. This is perhaps not surprising considering that LSTMs explicitly promote a more consistent error flow over time \citep{pmlr-v37-jozefowicz15} via their forget gate, and are thus more likely to attend to distant observations. So, assuming that the spectral radius is a reasonable proxy for the recurrent memory-horizon of a model, it appears that LSTMs tend to have longer-term memory than CfCs. However, note that the quantity we care about in practice is $\norm{\frac{\partial \vh_k}{\partial \vh_{k-1}}} = \norm{\mJ_k}$ as this is the true measure of gradient propagation backwards in time. Since $\norm{\mJ_k}$ depends not only on the recurrent weights but also on the hidden state (for details, refer to \ref{sec:rec_grad}), it is possible that the disparate functional forms of CfCs and LSTMs mitigates the efficacy of spectral radius as a proxy for attention across time (for more details on this intuition, refer to \ref{sec:memory}). Let us define $\mG_t = \mJ_k \mJ_{k-1} \dots \mJ_{k - t + 1}$. Assuming that time $k$ represents the end of the time series, $\mG_t$ represent the gradient backpropagated $t$ steps in time, which when analyzed as a function of $t$ provides a direct measure of how much importance a model assigns to inputs as a function of the time that has passed since it last observed them. In particular, we compute $\textrm{log} \norm{\mG_t}$ as a function of $t$ and find that gradients decay significantly faster across time in CfCs than LSTMs (Figure \ref{fig:vanishing}b). One, this demonstrates that spectral radius is an effective measure of recurrent memory across architectures in this task setting. Two, we observe that in their full-rank, fully-connected forms, LSTMs have a long recurrent memory-horizon, while CfCs do not. In the context of a closed-loop task, we know that it is beneficial to limit the network's temporal attention span \citep{NCPpaper}. Since an LSTM's recurrent gradient in its baseline form does not decrease particularly fast across time, inducing sparsity at initialization pushes the network into the vanishing gradient regime, reducing its attention across time. This result provides support as to why LSTMs are quite effective at navigating environments in-distribution in spite of high-sparsity in their recurrent weights (Figure \ref{fig:main_results}a). In contrast, CfCs in their baseline form already lie in the vanishing gradient regime: intuitively, inducing sparsity is less effective when the baseline network inherently possesses an affinity to selectively attend to past observations. This aligns with an analogous trend observed in RNNs which are networks that, like CfCs, are amenable to learning a vanishing gradient (Figure \ref{fig:rnn_gradient_decay}). 

By analyzing the spectral radius of $\mW_{rec} (r, s)$ as well as the recurrent gradients $\mG_t$, we have characterized sparsity in its ability to modulate the network's recurrent memory-horizon. However, note that we also observe that low-rank LSTMs tend to admit solutions with lower spectral radius as well (Figure \ref{fig:vanishing}b). Yet, low-rank LSTMs tend to perform worse in-distribution (Figure \ref{fig:main_results}a) than their sparse counterparts. To understand why there is a disparity between reducing rank and increasing sparsity, we turn to an analysis of the singular values, which are best motivated in the distribution shift setting.  

\subsection{Robustness under distribution shift}
\label{sec:robustness}

\begin{figure}[t]
\begin{center}
\includegraphics[scale=0.85]{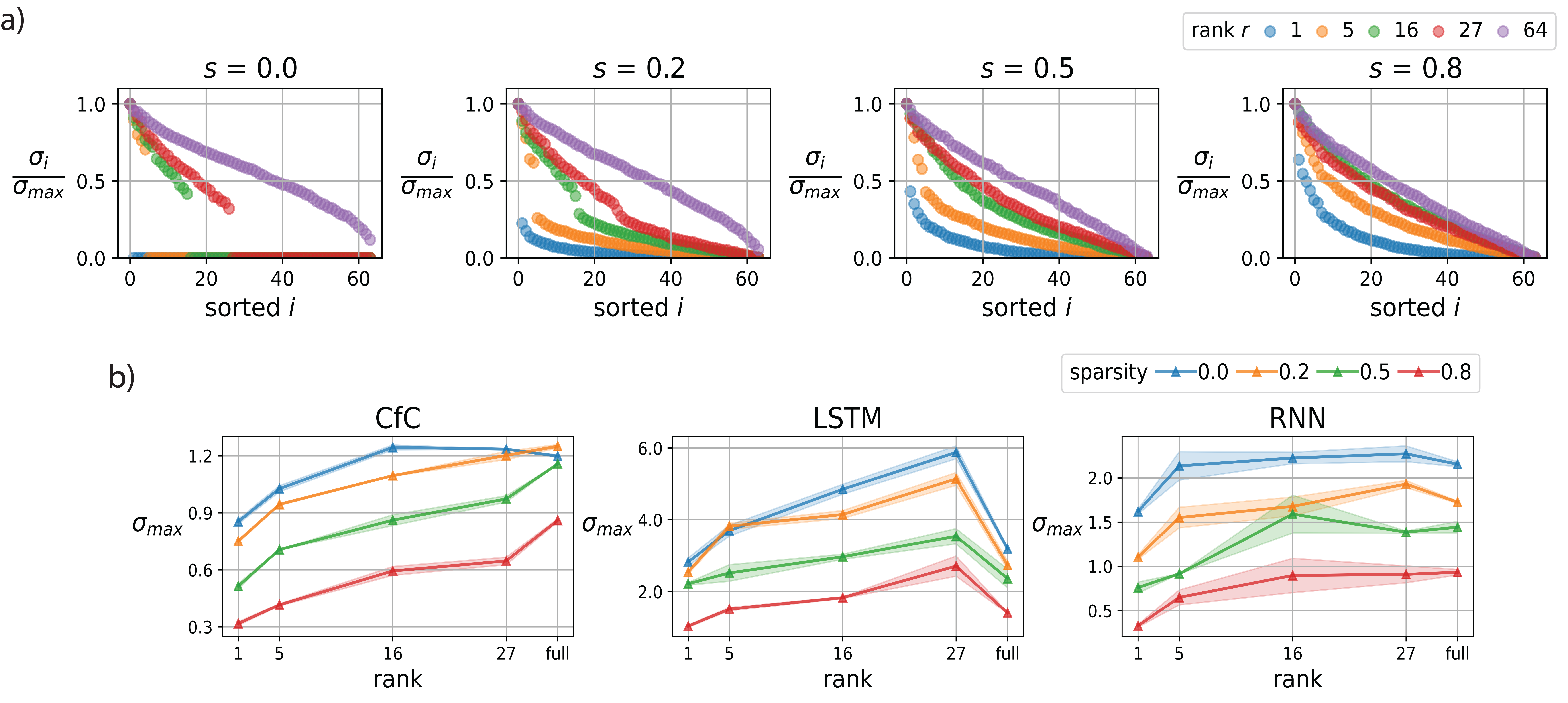}
\end{center}
\caption{a) Decay rate of singular value spectrum of CfC recurrent weights as a proxy for robustness. b) Spectral norm of the recurrent weights as a proxy for robustness ($\pm 1$ SE).}
\label{fig:robust}
\end{figure}
In Section \ref{sec:performance}, we found that in the distribution shift setting, pruning by reducing rank improved performance (most notably in CfCs) whereas pruning by increasing sparsity did not. This is in stark juxtaposition to the in-distribution trends in which inducing sparsity in LSTMs was the only form of pruning that did not worsen performance (Section \ref{sec:vanishing}). Here, we attempt to interpret these results by understanding both why CfCs appear to be the most robust model type and why pruning by reducing rank is distinct from pruning by increasing sparsity.  

Recall in Section \ref{sec:connectivity} we reduced the robustness of the hidden state under perturbation to two measures: the spectral norm and the decay of the singular value spectrum of $\mW_{rec} (r, s)$ (for intuition, refer to \ref{sec:robustness_motivation}). Regarding the former, across models we observe that CfCs have lower spectral norms than both RNNs and LSTMs (Figure \ref{fig:robust}b, Figure \ref{fig:recurrent_sn}). While this offers intuition as to the robustness of CfCs, it is not sufficient as a standalone measure. This is because in practice, distribution shifts are applied to the input $\vx_t$, which in turn corrupts the hidden state $\vh_{t+1}$. Thus, while it remains important to analyze the spectral norm of the recurrent weights, it is also pertinent to analyze the spectral norm of the input weights. In doing so, we find that while there exist marginal differences across architectures, the input spectral norm does not vary to the extent we observe in the recurrent spectral norm (Figure \ref{fig:inp_svd}b). This, along with the fact that CfCs learn significantly lower spectral norms in their recurrent weights, offers intuition as to the heightened robustness of CfCs under distribution shift. In addition, the (albeit loose) relationship between weights with lower spectral norms corresponding to networks with lower Lipschitz constants provides even further support as to why CfCs tend to express more robust functions (elaborated upon in \ref{sec:robustness_motivation}). 

Next, we address the observed disparity between sparse and low-rank recurrent connectivity. Across ranks and sparsities, the spectral norm decreases as a function of increasing sparsity and decreasing rank (Figure \ref{fig:robust}b), aligning with the trends induced at initialization. In Section \ref{sec:vanishing}, we similarly showed that spectral radius decreases as a function of increasing sparsity and decreasing rank in the trained networks. Thus, we cannot explain the apparent disparity between low-rank and sparse connectivity via these measures, and instead turn to the decay of the singular values. Again, aligning with the prior induced at initialization, we find that the decay of the singular values increases as a function of increasing sparsity but decreases as a function of decreasing rank (Figure \ref{fig:robust}a). And this is precisely where the two methods of pruning differ: since sparsity reduces the rate of spectral decay, the resulting transformation (induced by $\mW_{rec}$) on a perturbed version of the hidden state vector expands in more directions (i.e. increasing the effect of the perturbation); lowering the rank does the opposite. We can make this notion more concrete by analyzing the dimensionality of the state-space trajectories of each model. Note that the trajectory induced by $\vh_t$ can be decomposed into two parts: the recurrently-driven portion $\mW_{rec} \vh_{t-1}$ and the input-driven portion $\mW_{inp} \vx_t$. Under this formulation, we can consider the full state-space trajectory $\vh_t$ to be driven by $\mW_{full} = [\mW_{rec} \ \mW_{inp}]$. 

\begin{wrapfigure}[19]{r}{8.6cm}
\includegraphics[scale=0.85]{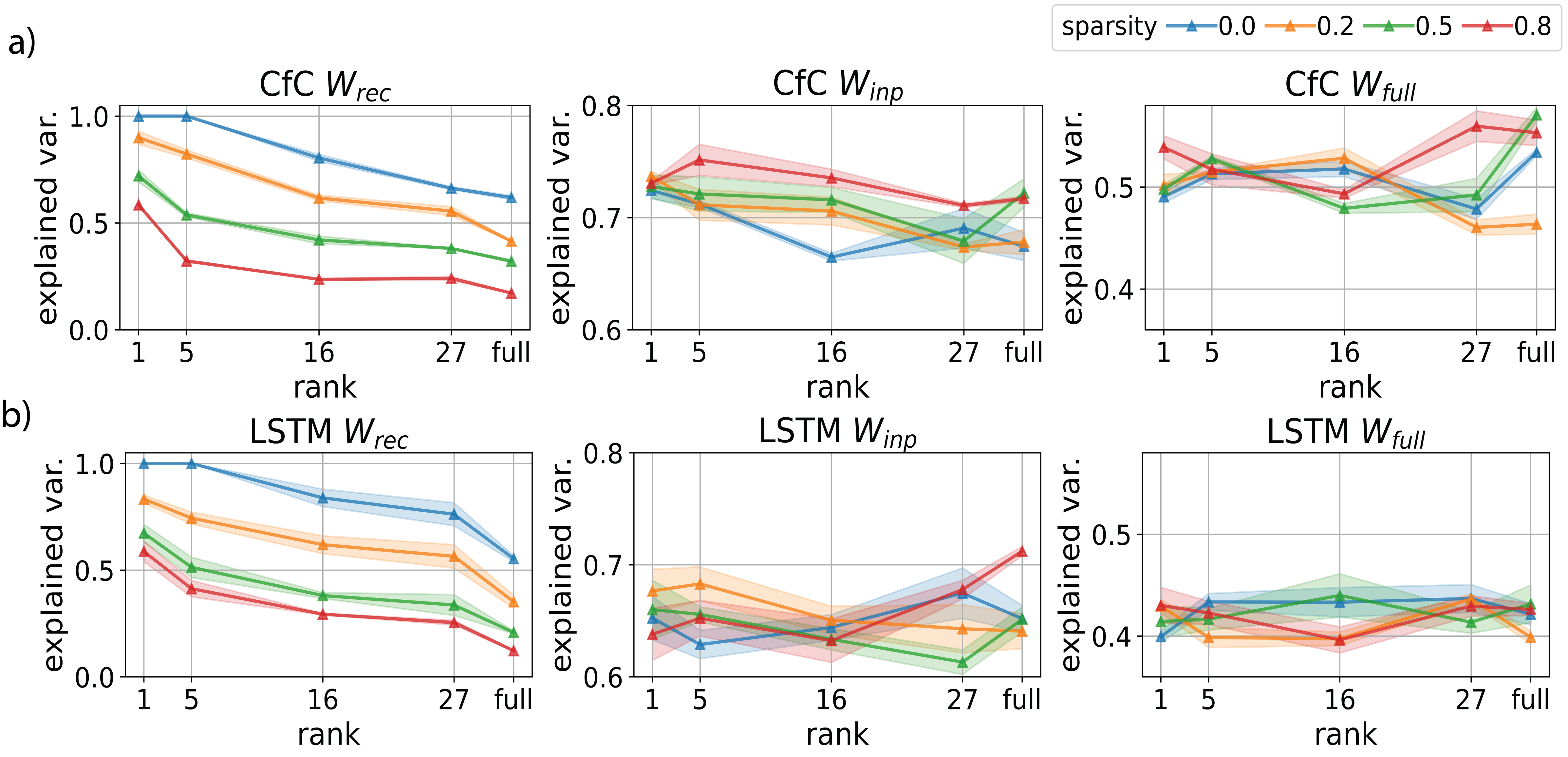}
\caption{Effective dimensionalities of the recurrent, input and full state-space trajectories collected during online testing, in-distribution, as measured by the explained variance of the top 5 principal components.  a) CfC state-space dynamics ($\pm 1$ SE). b) LSTM state-space dynamics ($\pm 1$ SE).}
\label{fig:ed}
\end{wrapfigure} 

To measure the complexity of these state-space trajectories, we ran PCA on them to measure their effective dimensionalities. We find that with decreasing rank, the dimensionality of the recurrently-driven trajectories decreases, whereas with increasing sparsity it increases (Figure \ref{fig:ed}a, Figure \ref{fig:ed}b). Hence, our intuition is as follows: since the activity along the recurrent axis is constrained to lie in the subspace spanned by the vectors comprising $\mW_{rec} (r, s)$, the recurrent dynamics of low-rank recurrent networks are lower-dimensional and hence simpler. We can imagine that in the in-distribution setting, it is not desirable to arbitrarily constrain the network's capacity to learn recurrently. This provides justification as to why, in spite of the low-rank LSTMs also having a shorter recurrent memory-horizon, inducing sparsity into LSTMs does not worsen the in-distribution performance of the network like constraining rank does (Figure \ref{fig:main_results}a). In contrast, recall that under distribution shift, we observed the opposite: inducing sparsity at initialization was detrimental whereas constraining the network to be low-rank improved robustness. Constraining the recurrently-driven portion of the state-space is one means of reducing the network's variability in the presence of input perturbations. Supervision along the recurrent axis is pivotal when the agent is faced with environmental occlusions, as it needs to lean on some notion of the past to make a decision in the present. By making the recurrent state more robust, we are better able to generalize under distribution shift. 

We make one final note across the model axis to further justify why CfCs appear to be inherently more robust than LSTMs and RNNs. 
In particular, in both the input-driven and full state-space trajectories (each of which are little affected by changes in the recurrent connectivity), we find that for all pairs $(r, s)$, the dimensionality of LSTM trajectories is much higher than that of CfC trajectories, offering additional justification for the robustness observed in CfCs. But what is perhaps more surprising is that RNNs, despite their simpler functional form, also posses higher-dimensional state-space dynamics than CfCs (Figure \ref{fig:ed_rnn}). Since RNNs and CfCs differ only with respect to the time constant network $F$, we explore this gating mechanism further in \ref{sec:tca}.

\subsection{Exploring the task dimension gap}
We now have demonstrated two things: one is the efficacy of the proposed connectivity parameterization in units of its interpretability as a modulator of network dynamics. Second is the impact connectivity has on each of the networks we examined: in particular, by disentangling the effects of rank and sparsity, we showed why LSTMs are more amenable to sparse connectivity in-distribution whereas CfCs tend to show the most promise with low-rank connectivity under distribution shift. Here, we further our intuition on the performance of CfCs under distribution shift by understanding why they tend to be the most amenable architecture to low-rank recurrent connectivity. 

\label{sec:task_dim}
\begin{figure}[t]
\begin{center}
\includegraphics[scale=0.45]{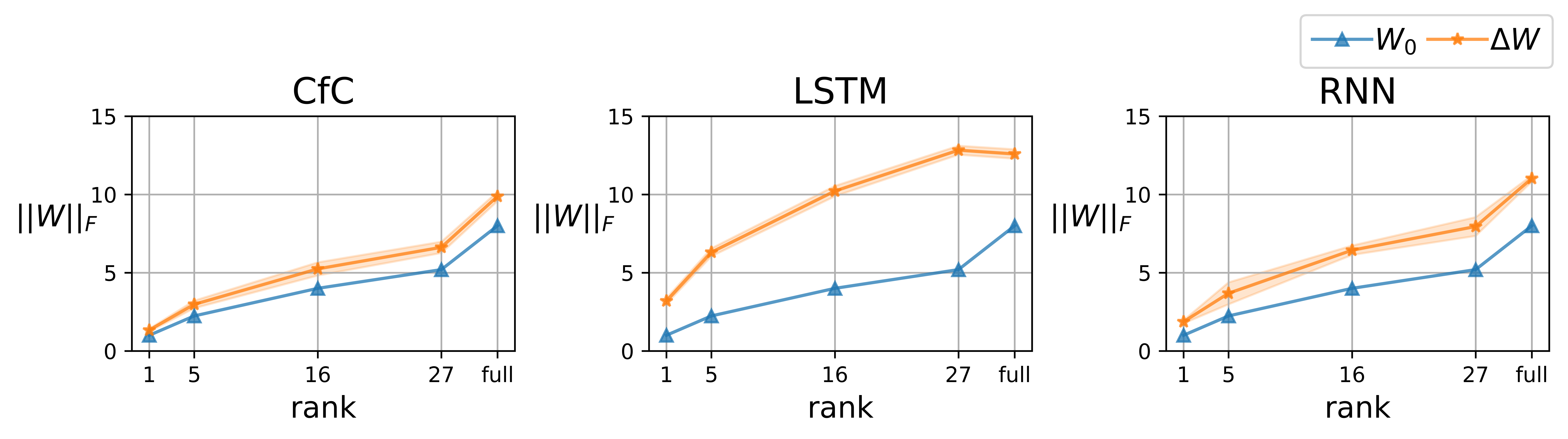}
\end{center}
\caption{We individually consider the Frobenius norm of the weights at initialization $\mW_0$ and the change in the weights after training $\Delta \mW$. Note that the models shown here have no sparsity.}
\label{fig:task_dim}
\end{figure}

As we have noted, the efficacy of this parameterization rests upon the ability of the network to adhere to the prior throughout training. And recall that we found that the spectral norm of the recurrent weights in CfCs remained much closer to 1 in the full-rank, fully-connected network than in either LSTMs or RNNs (Figure \ref{fig:robust}). To better understand this, we borrow from an analysis conducted by \cite{NEURIPS2020_9ac1382f} in which they decomposed the recurrent weights as follows: $\mW_{rec} = \mW_0 + \Delta \mW$ where $\mW_0$ denotes the weights at initialization and $\Delta \mW$ denotes the change in the weights after training. In their setting, the purpose of the decomposition was to demonstrate that in a set of simple tasks, the Frobenius norm of the weights at convergence $||\mW_{rec}||_F$ is dominated by the norm of the weights at initialization $||\mW_{0}||_F$. In spite of the recurrent connectivity they used being full-rank, they found that the changes in the weights learned during training were in fact low-rank as measured by $||\Delta \mW||_F$. In our analysis, we find the opposite: $||\mW_{0}||_F$ certainly does not dominate the norm of the final weights and is in fact lower than $||\Delta \mW||_F$ (Figure \ref{fig:task_dim}). This reinforces the notion of task dimension put forth by \cite{NEURIPS2020_9ac1382f} which describes the rank of the training-induced connectivity changes as a function of the task the network is trained on. In particular, in their work, they showed that the task dimension of the simple tasks they examined was low and hence a network with unconstrained, full-rank connectivity learned low-rank changes. In contrast, in our setting we consider a significantly more complex task domain which incites higher-rank changes in the recurrent connectivity. This brings forth the notion of a task dimension gap between the offline, open-loop and online, closed-loop settings: namely, the networks we examined are trained offline without being exposed to distribution shifts and hence learn higher-rank changes in their connectivity. In contrast, as we have shown, succeeding in the closed-loop setting under distribution shift means learning lower-rank dynamics. Thus, networks that are able to abide to our low-rank prior and avoid learning high-rank changes in connectivity are better at generalizing under distribution shift. This is precisely where CfCs supersede LSTMs and to some extent RNNs as well. We find that despite each network starting at the same $||\mW_{0}||_F$, $||\Delta \mW||_F$ is lowest in CfCs. 


\section{Conclusion}
In this work, we investigated the use of a low-rank, sparse parameterization of recurrent connectivity in various architectures as a means of improving model robustness in closed-loop environments. We showed that this type of connectivity was most amenable to CfCs and also showed promise in more canonical networks like LSTMs and RNNs. Furthermore, we demonstrated the interpretability of this prior by analyzing the network dynamics it induces as a function of both rank and sparsity. Our results represent an application in pruning recurrent networks at initialization to improve performance under distribution shift.

\section*{Acknowledgments}
Research was sponsored by the United States Air Force Research Laboratory and the Department of the Air Force Artificial Intelligence Accelerator and was accomplished under Cooperative Agreement Number FA8750- 19-2-1000. The views and conclusions contained in this document are those of the authors and should not be interpreted as representing the official policies, either expressed or implied, of the Department of the Air Force or the U.S. Government. The U.S. Government is authorized to reproduce and distribute reprints for Government purposes notwithstanding any copyright notation herein.

\section*{Reproducibility statement}
To ensure the reproducibility of our work, we extensively detail the experimental setup in the appendix as well as provide information regarding the analyses we conducted. The bulk of these details can be found in appendix \ref{sec:experiments} which describes how we constructed the dataset, how the models were trained (including details on hyperparameters), the evaluation metrics for the models and the intuition/implementation regarding the analyses that we performed on the trained models. A key portion of our results is driven by an initialization scheme we proposed that deviates from default initializers given in existing open-source implementations. We thoroughly describe how and why our proposed initialization differs in appendix \ref{sec:initialization}, so the reader can leverage it to reproduce our results. Regarding our theoretical work, we provide proofs for the claims we made in the main portion of the paper which can be found in appendix \ref{sec:proofs}. In that section, we clearly delineate the cases in which we were unable to prove certain claims and had to resort to an empirical analysis instead.

\bibliography{iclr2024_conference}
\bibliographystyle{iclr2024_conference}

\newpage

\appendix
\section{Appendix}
\subsection{Dimensionality analysis under distribution shift}
\label{sec:dim_under_shift}

\begin{figure}[h]
\begin{center}
\includegraphics[scale=0.7]{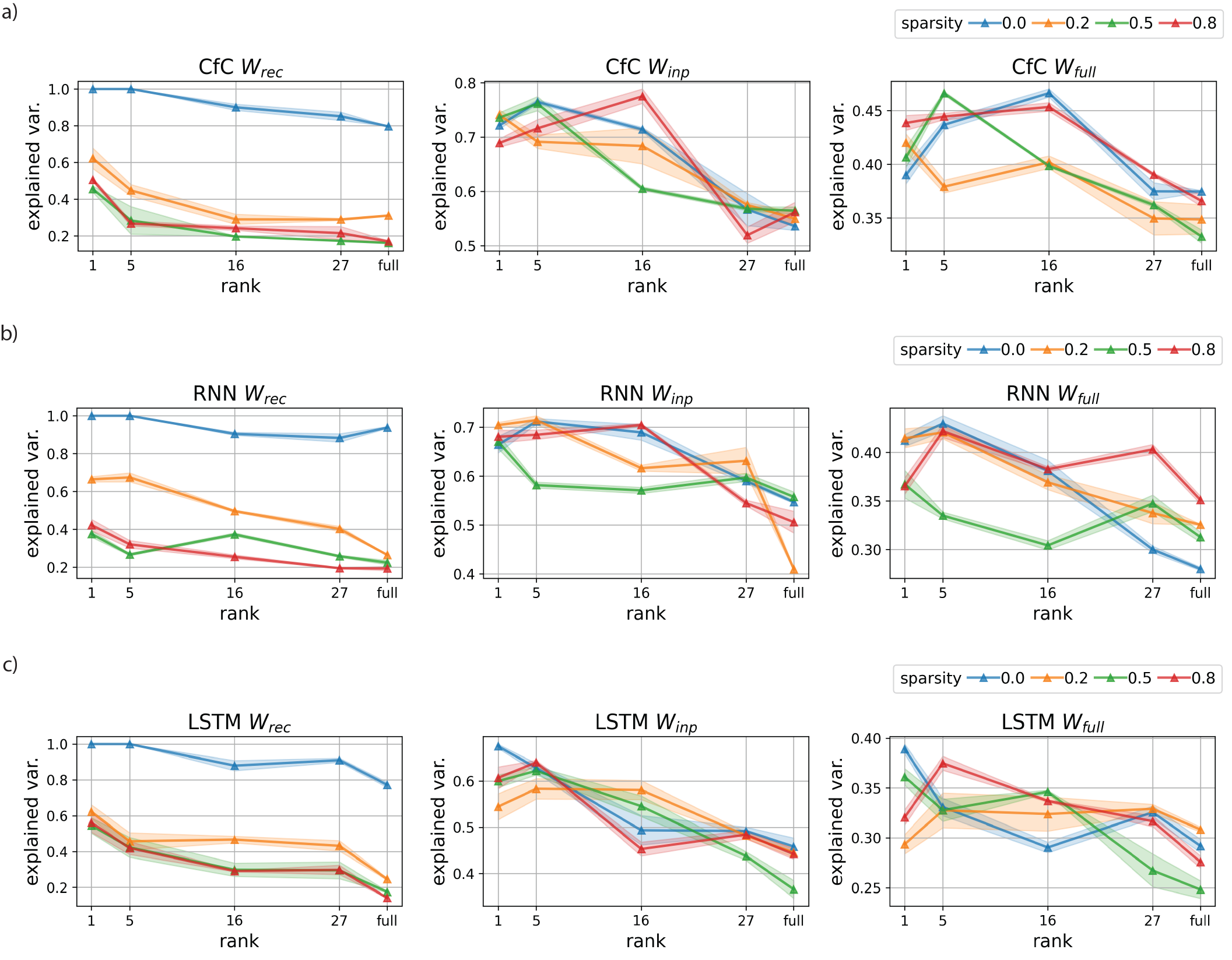}
\end{center}
\caption{Effective dimensionalities of the recurrent, input and full state-space trajectories collected during online testing, under the noise distribution shift (for details, refer to \ref{sec:experiments}), as measured by the explained variance of the top 5 principal components.  a) CfC state-space dynamics ($\pm 1$ SE). b) RNN state-space dynamics ($\pm 1$ SE). c) LSTM state-space dynamics ($\pm 1$ SE).}
\label{fig:robust_dim}
\end{figure}

Recall in Figure \ref{fig:ed}, we computed the effective dimensionalities of the recurrent, input and full state-space trajectories across models, ranks and sparsities collected during an online, closed-loop simulations of the agents in an in-distribution setting. In doing so, we found that the input and full state space dimensionalities were not particularly affected by changes in the recurrent rank or sparsity. Here, we will show that the same does not hold under distribution shift and understand why this is a desirable property of the parameterization of recurrent connectivity. 

In Figure \ref{fig:robust_dim}, we find that in CfCs, RNNs and LSTMs, the effective dimensionalities of the input and full state spaces increases with the rank of the recurrent weights. This is interesting as it demonstrates the ability of the parameterization to modulate state-space dynamics outside of the recurrently-driven subspace of activity. But it also begs the question as to why we do not observe this effect in the in-distribution setting. 

The primary distinction is that under distribution shift, we can imagine that adding noise to the inputs causes the input state-space trajectory to evolve in random directions, raising its dimensionality. Having a robust recurrent state presumably allows for the filtration of some of this noise, which reduces the effect it has on future model inputs. Furthermore, since we examine these trajectories in the context of a closed loop system, the input is itself a function of the previous hidden state. By making this function more robust via our low-rank prior, this in turn reduces the dimensionality of the input state-space trajectory (and by proxy the full state-space trajectory as well).

One final note is that while we observe this trend as a function of rank, we do not observe it as a function of sparsity. In particular, modulations in the dimensionality of the recurrent state-space via changes in the sparsity of the recurrent weights do not appear to impact the dimensionality of input or full state-spaces (Figure \ref{fig:robust_dim}). While this certainly requires further exploration, one possible hypothesis is that this is caused by the counteracting effects of sparsity on robustness. Namely, recall that sparsity both reduces the spectral radius which shortens the temporal attention span of the network, making the model more robust across time. However, it also reduces the decay rate of the singular value spectrum, making it less robust at any given point in time. These two effects potentially offset one another and prevent the sparsity in the recurrent weights from modulating the dimensionality of the input trajectory.

\subsection{Time constant analysis}
\label{sec:tca}

\begin{figure*}[h]
\begin{center}
\includegraphics[scale=0.85]{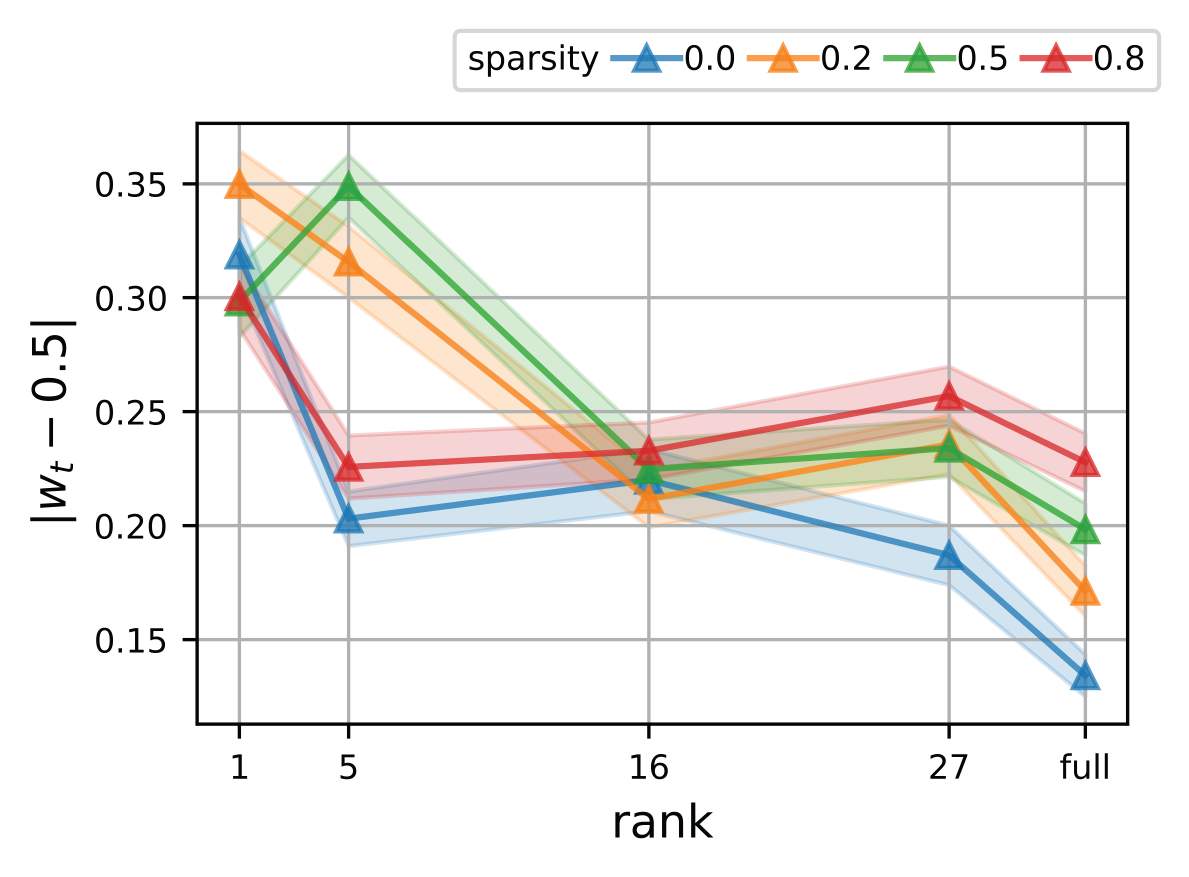}
\end{center}
\caption{Average absolute deviation from $0.5$ of time constant vectors in trained CfC models.}
\label{fig:tca}
\end{figure*}

In Section \ref{sec:robustness}, we demonstrated that on average, CfCs both learn lower spectral norms in their recurrent weights and express lower dimensional state-space trajectories than RNNs. Recall, the functional form of a CfC differs from an RNN via the time constant network $F (\vh_{t-1}, \vx_t)$ which learns a vector of weights used to interpolate between $G$ and $H$. Note that if $F (\vh_{t-1}, \vx_t) = [1, 1, \dots, 1]$, the network reduces to that of an RNN by placing all of its weight on a single trajectory. 
Because the time-constant learned by $F$ is a function of the input, the network can dynamically adapt its interpolation weights under distribution shift by changing how much it weighs the trajectory induced by $G$ versus the trajectory induced by $H$. While rationale beyond the effects of the time-constant module on the overall network dynamics warrants further exploration, the notion of gating mechanisms improving performance in neural networks is not a new one; recent work done in convolutional language models shows similar findings \citep{poli2023hyena}. 

In any case, to better understand the time constant network, here we provide an analysis on what role modulating recurrent sparsity and rank has on the learned interpolation weights. In particular, let $\vw_t$ denote the interpolation weights that are the output of $F$ at time $t$. We compute $$|\vw_t - 0.5| = \frac{\sum_{t=1}^T \sum_{i=1}^h |w_{ti} - 0.5|}{hT}$$ which measures the average deviation of the weights from $0.5$. We consider this metric as a means of understanding how much the interpolation between $G$ and $H$ tends to rely upon only one of the trajectories (as we observe in the case of an RNN) versus evenly combining them (as would be the case if $F$ learns to output $[\frac{1}{2}, \frac{1}{2}, \dots, \frac{1}{2}]$). We find that as a function of increasing rank, the deviation of the time constant from $0.5$ decreases (Figure \ref{fig:tca}). This means that low-rank CfCs tend to place more of their weight on one trajectory than the other which demonstrates the ability of the time constant network to align with the simpler recurrent dynamics promoted in the low-rank setting. 

\subsection{Task dimension continued}
\label{sec:td_gap}
Here, we provide some additional commentary on our discussion of the task dimension gap in Section \ref{sec:task_dim}. 

At a high level, the notion of a task dimension gap which characterizes the disconnect between agents learning passively in the offline setting and learning actively in the online setting warrants further exploration and formalization. Furthermore, we note that, the notion of task dimension put forth by \cite{NEURIPS2020_9ac1382f} itself is more aptly decomposed into recurrent task dimension and input task dimension. In particular, as we have shown, changes in recurrent connectivity are not necessarily synonymous with changes in input connectivity (Figure \ref{fig:ed}). A given task may necessitate learning connectivities of different ranks along the input axis and recurrent axis: disentangling the two is a necessity in understanding the dynamics of recurrent neural networks. Another point to note is that recurrent task dimension is distinct from attention span along the recurrent axis, which can also be considered a function of the task. We showed that while rank and sparsity both reduce temporal attention span, they have opposing effects on the dimensionality of recurrently-driven activity. Noting this distinction is pivotal in properly characterizing recurrent dynamics.

\subsection{Deriving recurrent gradients}
\label{sec:rec_grad}
In this section, we provide details on the derivation for $\mJ_t = \frac{\partial \vh_t}{\partial \vh_{t - 1}}$ in RNNs, LSTMs and CfCs. $\mJ_t$ is the Jacobian of the hidden-state dynamics which captures information about the rate at which the gradient is propagated across time. We motivate this further when we discuss the relationship between $\mJ_t$ and the memory-horizon of a recurrent network in Section \ref{sec:network_analysis}. 

\subsubsection{RNN}
Recall that the functional form of an RNN is given by 

    $$\displaystyle \vh_{t} = \tanh(\mW_h \displaystyle \vh_{t - 1} + \mW_i \displaystyle \vx_{t} + \displaystyle \vb)$$

where $\mW_h$ denotes the recurrent weights, $\mW_i$ denotes the input weights and $b$ denotes the bias. Then, if we let $\odot$ denote row-wise multiplication in the case of a vector and matrix, then we have that  
\begin{align*}
\frac{\partial \vh_t}{\partial \vh_{t-1}} & =  (1 - \tanh^2 (\vh_{t-1})) \odot \mW_h
\end{align*}

\subsubsection{LSTM}
The functional form of an LSTM is given by 

\begin{align*}
\vi_t &= \sigma(\mW_{ii} \vx_t + \mW_{hi} \vh_{t-1} + \vb_{i}) \\
\vf_t &= \sigma(\mW_{if} \vx_t + \mW_{hf} \vh_{t-1} + \vb_{f}) \\
\vo_t &= \sigma(\mW_{io} \vx_t + \mW_{ho} \vh_{t-1} + \vb_{o}) \\
\vg_t &= \tanh(\mW_{ic} \vx_t + \mW_{hc} \vh_{t-1} + \vb_{c}) \\
\vc_t &= \vf_t \odot \vc_{t-1} + \vi_t \odot \vg_t \\
\vh_t &= \vo_t \odot \tanh(\vc_t)
\end{align*}

Following the notation from \citet{vogt2020lyapunov}, if we let $$\vy_{*} = \mW_{i*} \vx_t + \mW_{h*} \vh_{t-1} + \vb_*$$ where $*$ is determined by the gate and $\odot$ denotes elementwise multiplication in the case of two vectors and row-wise multiplication in the case of a vector and matrix, then we have that 

\begin{align*}
\frac{\partial \vf_t}{\partial \vh_{t-1}} &= [\sigma(\vy_f) \odot (1 - \sigma(\vy_f))]^T \odot \mW_{hf} \\
\frac{\partial \vi_t}{\partial \vh_{t-1}} &= [\sigma(\vy_i) \odot (1 - \sigma(\vy_i))]^T \odot \mW_{hi} \\
\frac{\partial \vo_t}{\partial \vh_{t-1}} &= [\sigma(\vy_o) \odot (1 - \sigma(\vy_o))]^T \odot \mW_{ho} \\ 
\frac{\partial \vg_t}{\partial \vh_{t-1}} &= [(1 - \tanh^2 (\vy_c)) ]^T \odot \mW_{hc} \\ 
\frac{\partial \vc_t}{\partial \vh_{t-1}} &= \frac{\partial \vf_t}{\partial \vh_{t-1}} \odot \vc_{t-1} + \frac{\partial \vi_t}{\partial \vh_{t-1}} \odot \tanh(\vy_c) + \vi_t \odot (1 - \tanh^2 (\vy_c)) \odot \mW_{hc} + \vf_t \\
\frac{\partial \vh_t}{\partial \vh_{t-1}} &= \frac{\partial \vo_t}{\partial \vh_{t-1}} \odot \tanh(\vc_t) + \vo_t \odot (1 - \tanh^2 (\vc_t)) \odot \frac{\partial \vc_t}{\partial \vh_{t-1}}
\end{align*}

\subsubsection{CfC}
The functional form of a CfC is given by 

$$\vh (t) = \sigma(F(\vh_{t - 1}, \vx_{t}, \theta_F)) \odot G(\vh_{t - 1}, \vx_{t}, \theta_G) +  [1 - \sigma(F(\vh_{t - 1}, \vx_{t}, \theta_f)] \odot H(\vh_{t - 1}, \vx_{t}, \theta_H)$$

where $\theta_F, \theta_G, \theta_H$ refer to the parameters in each module as given by the following equations: 

\begin{align*}
F(\vh_{t - 1}, \vx_{t}, \theta_F) &= \mW_{if} \vx_t + \mW_{hf} \vh_{t-1} + \vb_{f} \\
G(\vh_{t - 1}, \vx_{t}, \theta_G) &= \tanh(\mW_{ig} \vx_t +  \mW_{hg} \vh_{t-1} + \vb_{g}) \\
H(\vh_{t - 1}, \vx_{t}, \theta_H) &= \tanh(\mW_{ih} \vx_t + \mW_{hh} \vh_{t-1} + \vb_{h}) \\
\end{align*}

We note that in the work that originally proposed the CfC architecture \citep{cfc}, $F$ is referred to as a liquid time constant network due to its motivation as a time constant in a dynamical system. In general, time constants are more thoroughly motivated in the continuous-time setting in which a neural network is used to model the derivative of the hidden state as opposed to the hidden state itself \citep{ChenRBD18}. In that setting, a time constant represents a parameter that characterizes the speed and coupling sensitivity of an ODE that models a system. However, our work exists in the discrete-time setting in which we can no longer interpret the time constant as a parameter of a continuous-time system. Instead, we interpret $F$ as an adaptive gating mechanism, as discussed in Section \ref{sec:experiments_main_paper}. 

If we let $$\vy_{*} = \mW_{i*} \vx_t + \mW_{h*} \vh_{t-1} + \vb_*$$ where $*$ is determined by the gate, then it follows that 
\begin{align*}
\frac{\partial \vh_t}{\partial \vh_{t-1}} &= \sigma(\vy_f) \odot (1 - \tanh^2 (\vy_g)) \mW_{hg} + \tanh(\vy_g) \odot [\sigma(\vy_f) \odot (1 - \sigma(\vy_f)) \odot \mW_{hf}] \\ &+ [1 - \sigma(\vy_f)] \odot (1 - \tanh^2 (\vy_h)) \mW_{hh} + \tanh(\vy_h) \odot [1 - (\sigma(\vy_f) \odot (1 - \sigma(\vy_f)) \odot \mW_{hf})]
\end{align*}

\subsection{Motivating spectral analyses of recurrent models}
\label{sec:network_analysis}
In this section, we motivate the analyses employed in this paper in order to analyze the dynamics of the various recurrent networks we examined. In particular, we will motivate the low-rank, sparse parameterization of $\mW_{rec}$ from the perspective of modulating the spectral radius and spectral norm (and more generally the eigenspectrum and singular value spectrum) of the recurrent weights at initialization and then extend this line of reasoning to the analyses performed on the trained models. For specifics on the implementation details and computation performed for these analyses, refer to \ref{sec:analysis_details}. 

\subsubsection{Recurrent memory horizon}
\label{sec:memory}
We leverage our parameterization of $\mW_{rec} (r, s)$ as a function of rank and sparsity in order to modulate the spectral radius, spectral norm and singular value spectrum of the recurrent weights. 

Here, we argue that the spectral radius of $\mW_{rec} (r, s)$ is a pertinent measure for the memory horizon of the network across time. We will demonstrate why in the context of an RNN as the computations are most tractable under this functional form. In particular, in an RNN, the recurrent gradient in $\mJ_t = \frac{\partial \vh_t}{\partial \vh_{t - 1}}$ reflects how much the network's hidden state is updated based on information from the past. A higher recurrent gradient suggests the network is paying more attention to distant inputs during training, while a lower recurrent gradient implies less reliance on such distant information. 

The relevant quantity in backpropagation through time in an RNN is $$\frac{\partial \vh_t}{\partial \vh_k} = \prod_{t \geq i \geq k} \frac{\partial \vh_i}{\partial \vh_{i-1}} = \prod_{t \geq i \geq k}  (1 - \tanh^2 (\vh_{t-1})) \odot \mW_h$$ (refer to \ref{sec:rec_grad} for details on the derivation of RNN recurrent gradient). We can re-express the element-wise product in the expression for $\frac{\partial \vh_i}{\partial \vh_{i-1}}$ as the product of two matrices as follows: $$diag[(1 - \tanh^2 (\vh_{t-1}))] \mW_h$$ where the $diag(v)$ operator constructs a diagonal matrix where the elements of $v$ are placed along the diagonal. Then, we have that $$\norm{\frac{\partial \vh_i}{\partial \vh_{i-1}}} \leq \norm{diag[(1 - \tanh^2 (\vh_{t-1}))]} \norm{\mW_h}$$ by the sub-multiplicativity of a matrix norm. As a function of rank and sparsity, if we assume that $\norm{diag[(1 - \tanh^2 (\vh_{t-1}))]}$ is reasonably unaffected by the changes in $\mW_{rec} (r, s)$ \emph{at initialization}, then variation in the bound arises only from $\norm{\mW_h}$. 

Drawing from the analysis presented in \citet{vanishinggradient}, if we assume that the relevant variation in the bound as a function of rank and sparsity comes only from $\norm{\mW_h}$ and that $\mW_h = \mP \mD \mP^{-1}$ is diagonalizable, then we approximately have that $$\norm{\frac{\partial \vh_t}{\partial \vh_k}} \leq \norm{\prod_{t \geq i \geq k} diag[(1 - \tanh^2 (\vh_{i-1}))] \mW_h} \approx \norm{(\mW_h)^{t-k}} = \norm{\mP \mD^{t-k} \mP^{-1}}$$
where $\mP$ denotes a matrix of eigenvectors and $\mD$ denotes a diagonal matrix with the eigenvalues on the diagonal. It follows that for sufficiently large $\ell = t - k$, $\norm{\mP \mD^{t-k} \mP^{-1}}$ is dominated by the eigenvalue of leading magnitude. One can argue this more formally using the power iteration method, details of which can be found in \citet{vanishinggradient}. We further note that this notion of modifying the spectral radius in order to modulate attention across the time in recurrent networks is not a new one. One prominent example can be found in echo state networks which like our parameterization induces sparsity in the recurrent weights in order to control how much attention the model pays to distant inputs.

Extending this mathematical argument to LSTMs and CfCs is less straightforward given the more intricate gating mechanisms present in the functional forms of each model. While it is reasonable to hypothesize that decreasing the spectral radius in these architectures will result in a faster decay of gradients across time, it is unclear how fast/slow this decay is relative to the analysis presented above for an RNN. Furthermore, the assumption made in the argument above for gradient decay in RNNs rested upon ignoring the portion of the gradient influenced by the hidden state: $\norm{diag[(1 - \tanh^2 (\vh_{t-1}))]}$. This assumption becomes less reasonable after training as the network could potentially learn hidden-state vectors of small magnitude, causing the gradients to decay even faster. And this assumption is \emph{even less reasonable} in LSTMs and CfCs after training, again due to the nuanced gating present in each architecture.

Thus, while we still examine the spectral radii of each architecture, we also conduct a more nuanced recurrent memory analysis on all the architectures by computing the norm of the recurrent gradients backpropagated through time. In particular, if we let $\mJ_t = \frac{\partial \vh_t}{\partial \vh_{t-1}}$ denote the recurrent Jacobian, then we can take the cumulative product of the Jacobians as follows: $$[\mJ_t, \mJ_{t} \mJ_{t - 1}, \cdots, \mJ_{t} \mJ_{t - 1} \dots \mJ_2, \mJ_{t} \mJ_{t - 1} \dots \mJ_2 \mJ_1]$$ Taking the norm of each of the matrices in the list above gives us a concrete measure of the extent to which a given model attends to its past observations as a function of time. This enables us to evaluate the effect of the recurrent weight's spectral radius on the memory-horizon of a given architecture which allows us to make comparisons not only within a given architecture across ranks and sparsities, but also across the different recurrent architectures we analyze (i.e. RNN vs LSTM vs CfC).  

\subsubsection{Robustness under distribution shift}
\label{sec:robustness_motivation}
In the last section, we motivated the parameterization of $\mW_{rec} (r, s)$ from the perspective of the spectral radius and the implications it has on the attention profile of the network across time. In this section, we will motivate the parameterization instead from the perspective of the singular value spectrum of $\mW_{rec} (r, s)$ and understand the implications it has on the robustness of the network under distribution shift.

Before motivating the analysis of the singular value spectrum, we first clarify the nature of the distribution shifts in the context of this work. Here, distribution shifts are applied to the input image, which is then fed through a set of convolutional layers before entering the recurrent portion of the network (\ref{sec:experiments}). The perturbed input denoted by $\vx^*_t$ then corrupts the hidden state $\vh^*_t$ via the update rule for the hidden state specified by the recurrent model. To simplify our robustness analysis, we do not consider the convolutional layers and instead restrict the scope of our analysis to the input weights and recurrent weights of the recurrent network. Since both $\vx_t$ and $\vh_t$ are affected by distribution shift, in practice we care about the robustness induced by both $\mW_{inp}$ and $\mW_{rec}$. However, recall that our parameterization only constructs $\mW_{rec} (r, s)$ as a function of rank and sparsity, while maintaining the structure of $\mW_{inp}$ as full-rank and fully-connected. We note that we did try extending the parameterization as a function of rank and sparsity to $\mW_{inp}$, but found that doing so was quite detrimental to performance (results not shown). Thus, at initialization, we only modulate the robustness across the recurrent axis via the recurrent weights. Nonetheless, we still examine the spectral properties of the input weights after training to understand whether modulating the rank and sparsity of the recurrent weights implicitly affects the input weights during learning.  

Now, let us more formally understand why modulating the spectral properties of the weights has an impact on robustness. Consider a perturbation $\ve$ applied to the hidden state $\vh_t$ such that $\norm{\ve} = 1$ (in practice recall that the perturbation is actually applied to $\vx_t$ which later corrupts $\vh_t$, but we apply the perturbation directly to $\vh_t$ to simplify our argument). Under distribution shift, we care about our robustness against all possible perturbations since we can imagine $\ve$ being sampled from some arbitrary distribution over unit vectors. In particular, we want to understand how $\mW_{rec} \vh_t$ differs from $\mW_{rec} (\vh_t + \ve)$. To motivate the importance of the singular value spectrum, consider an SVD on $\mW_{rec}$ as follows: $$\mW_{rec} (\vh_t + \ve) = \mU \mSigma \mV^T (\vh_t + \ve) = \mU \mSigma \mV^T \vh_t + \mU \mSigma \mV^T \ve$$ Note that the quantity we care about for measuring robustness is $\mU \mSigma \mV^T \ve$. Since $\mV^T$ is a unitary matrix, $\mV^T \ve = \ve^*$ has the same magnitude as $\ve$. Similarly, $\mU$ is also a unitary matrix, so the only transformation that affects the magnitude of $\ve$ is the scaling performed by the singular value matrix $\mSigma$. Now, we can reduce the robustness of $\mW_{rec}$ to two things: the magnitude of the expansion induced by $\mSigma$ and the effective number of directions in which $\ve$ is expanded.

First, we will discuss our approach to measuring the magnitude of the expansion induced by the recurrent weights. A canonical measure of the expansion induced by a matrix is given by the spectral norm which is equivalent to the leading singular value. In this case, the spectral norm of $\mW_{rec}$ tells us that $\mW_{rec} (\vh_t + \ve)$ deviates most from $\mW_{rec} \vh_t$ if $\ve$ is the norm-1 vector parallel to the singular vector corresponding to the largest singular value of $\mW_{rec} \vh_t$. We can interpret this as a worst-case (i.e. adversarial) analysis of robustness. The spectral norm of $\mW_{rec}$ is further tied to robustness via its relationship to the Lipschitz constant of the full network -- a measure of the smoothness of the function learned by the network. In particular, a trivial upper bound for the global Lipschitz constant of a neural network is computed by multiplying the spectral norms of all the weights in the network \citep{szegedy2014intriguing}. However, this has been shown in many cases to be a poor proxy for network robustness due to the looseness of the bound \citep{lipper}. Unfortunately, since computing tight bounds for network-wide Lipschitz constants is NP-hard \citep{scaman2019lipschitz}, we maintain that it is reasonable to assume that a lower spectral norm in the recurrent weights results in a lower global Lipschitz constant \emph{at initialization}. Of course, it is possible that during training, the spectral norm of other weights in the network increase and potentially counteract a decrease in the spectral norm of the recurrent weights induced at initialization. In practice, since the distribution shift is applied to the network input, it is also pertinent to analyze the input weights $\mW_{inp}$ in the recurrent module. One thing to note is the limitation of this analysis when making comparisons across architectures. In particular, recall that we have made the assumption that within a given architecture, across ranks and sparsities, models that have input and recurrent weights with lower spectral norms tend to express functions with lower Lipschitz constants (in which case a perturbation applied to the input would have less of an effect on the output). However, this assumption becomes less reasonable across architectures given the fact that the functional form varies significantly across RNNs, LSTMs and CfCs. Optimally, we would actually compute the Lipschitz constants in order to make such comparisons more viable, however we are unable to do this given the NP-hardness of the problem. Hence, while we still aim to make comparisons across architectures via an analysis of the spectral norm of the weights, it is important to acknowledge the potential limitation in this approach.  

Next, we will discuss our approach to quantifying the effective number of directions in which $\ve$ is expanded. Note that in order to remain robust against the many potential directions the perturbation vector $\ve$ can lie in, it is desirable for the singular values of $\mW_{rec}$ to decay rapidly, as this implies that only a few singular values (i.e. only a few directions corresponding to the top singular vectors) contribute significantly to the transformation. Hence, we analyze the decay of the sorted spectrum of singular values normalized by the leading singular value as a proxy for the effective numbers of dimensions that contribute to the transformation of $\ve$ by $\mW_{rec}$. Again, it is possible that during training, the singular value spectrum of the input weights $\mW_{inp}$ counteracts the prior induced on the recurrent singular value spectrum at initialization and thus also must be examined. As with the spectral norm analysis discussed above, we also acknowledge the potential limitation in making comparisons of spectral decay across different recurrent architectures.  

While the decay of the singular value spectrum provides a good proxy for the directionality component of robustness, it does so only for a single point in time. In actuality, we want to understand the directions the hidden state evolves in across the entire trajectory of our model in order to have a robustness measurement that takes into account all points in time. To do so, we perform a dimensionality analysis on the hidden state-space trajectories collected over the course of a simulation in the closed-loop environment. In particular, we are interested in three state-spaces: the recurrently-driven state-space, input-driven state-space and full state-space. A canonical state-space trajectory analysis collects the $\vh_t$ over time (i.e. the full state-space) and performs PCA in order to measure the effective dimensionality of the trajectory. We extend this analysis by decomposing the full state-space into the portion driven by the recurrent weights, $\mW_{rec} \vh_t$ and the portion driven by the input weights $\mW_{inp} \vx_t$. This allows us to disentangle the effects of the proposed parameterization of the recurrent weights into its individual effects on the recurrent and input state-spaces. Furthermore, note that this dimensionality analysis enables us to make viable comparisons across recurrent architectures whereas this is a potential limitation of analyzing only the decay of the singular value spectra in the recurrent and input weights. 

So, we have now motivated the spectral norm and decay of the singular value spectrum of both the recurrent and input weights from the perspective of constructing a model that is robust to distribution shift. This was done under the framework of assuming a perturbation applied to $\vx_t$ which also results in a perturbation applied to $\vh_t$ which affects the output decision of the model at time step $t$. However, we can also ask how does this perturbation affect the model into the future: namely, how does the perturbation applied to $\vh_t$ affect $\vh_{t^*}$ for $t^* > t$. We can answer this question by understanding how information is propagated through the network across time. But note that this is precisely the intention of analyzing the time-horizon of the recurrent memory discussed in \ref{sec:memory}. So, not only does modulating the recurrent memory of the model serve the purpose of enforcing a short-horizon temporal prior necessary to model the short-term causality inherent to closed-loop environment, it also makes the network more robust across the time dimension. Thus, we have two measures of robustness: one at the current point in time and another for all time points into the future. 

\subsection{Theoretical analysis of spectral radius and spectral norm at initialization}
\label{sec:proofs}
In this section we provide proofs for the spectral radius and spectral norm, $\rho(\mW), ||\mW||$, respectively, for connectivity matrices $\mW$ at initialization. For clarity, we iterate that $\rho(\mW) = \max_i |\lambda_i|$, i.e. the largest norm of eigenvalues of $\mW$, and $||\mW|| = \max_i \sigma_i$, the largest singular value of $\mW$. We consider sparse networks with Glorot uniform initialization and orthogonal initialization in \cref{sec:proofs_unif_sparse,sec:proofs_orthog_sparse}, respectively, and orthogonal low-rank matrices in \cref{sec:proofs_orthog_lr}. Note that in our experiments, we only consider networks with orthogonally initialized recurrent weights which we motivate further in \ref{sec:initialization}. Also, note that in the cases where we are unable to provide proof, we still perform an empirical analysis. 

\subsubsection{Glorot uniform and sparse}
\label{sec:proofs_unif_sparse}

\begin{figure}[h]
\begin{center}
\includegraphics[width = 0.8\linewidth]{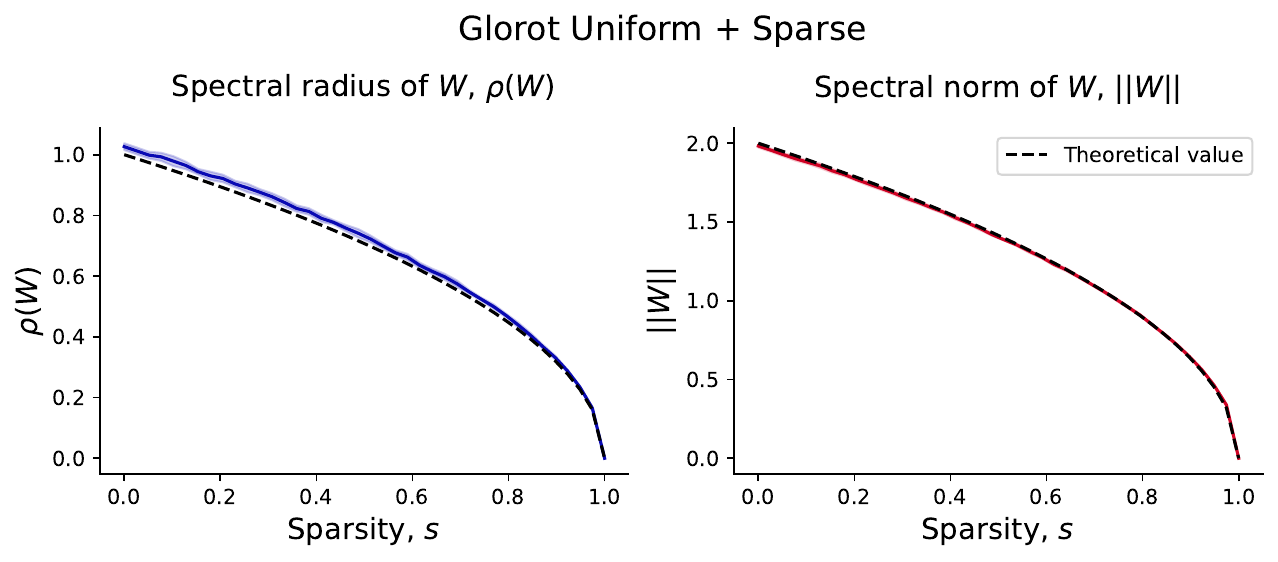}
\end{center}
\caption{Spectral radius and spectral norm of uniform-sparse matrices as a function of sparsity, $s$, for matrices initialized with Glorot uniform initialization, with $n = 512$. We additionally plot the theoretical predicted value for large $n$.}
\label{fig:proofs_unif_sparse}
\end{figure}

We consider weight matrices $\mW$ with dimension $n\times n$ and consider the limit as $n\to \infty$. Recall that Glorot uniform initialization scheme initializes weights with entries iid $\mW_{ij} \sim \texttt{Unif}\left(-\frac{\sqrt{3}}{\sqrt{n}}, +\frac{\sqrt{3}}{\sqrt{n}}\right)$, resulting in an entry-wise variance of $\frac{1}{n}$. We generate a sparse matrix by element-wise multiplying a matrix, $\mW^0$ sampled from the Glorot uniform initialization by a sparsity map $\mM$, where $\mM_{ij} \sim \texttt{Bernoulli}(p =1-s)$, with $s$ being the sparse factor. Our final weight matrix is given by $\mW = \mW^0 \odot \mM$. For sparsity $s$, the resulting variance of entries is given by $\frac{1-s}{n}$. As the entries of $\mW$ are all iid, we can apply the Girko-Ginibri circular law for large $n$, which states that the eigenvalues of $\mW$ converge to a uniform disk in the complex plane with radius $\sqrt{1-s}$, so we expect $\rho(s) = \sqrt{1-s}$. This analysis follows closely with that of \citet{Herbert2022.03.31.486515}.

For $||\mW||$, we apply the Marchenko-Pastur law, which states that the distribution of eigenvalues of $\frac{1}{n} \mA\mA^T$ values converges to $p_{\mA\mA^T}(\lambda) = \frac{1}{2\pi \sigma^2} \frac{\sqrt{(\lambda_+ - \lambda)(\lambda - \lambda_-)}}{\lambda}\mathbf{1}_{\lambda \in [\lambda_-, \lambda_+]}$, with $\lambda_\pm = \sigma^2 (1\pm 1)^2$, if $\mA$ has entries iid from a distribution with zero mean and variance $\sigma^2$. For our setting we let $\mA = \sqrt{n} \mW$, so $\sigma^2 = 1-s$. We see that the maximal eigenvalue of $\mA \mA^T$ is thus $\lambda_+ = 4(1-s)$, and so the upper bound for the largest singular value of $\mW$ is $||\mW|| = 2\sqrt{(1-s)}$. We observe good empirical agreement with these values in Figure \ref{fig:proofs_unif_sparse}.

\subsubsection{Orthogonal and sparse}
\label{sec:proofs_orthog_sparse}

\begin{figure}[h]
\begin{center}
\includegraphics[width = 0.8\linewidth]{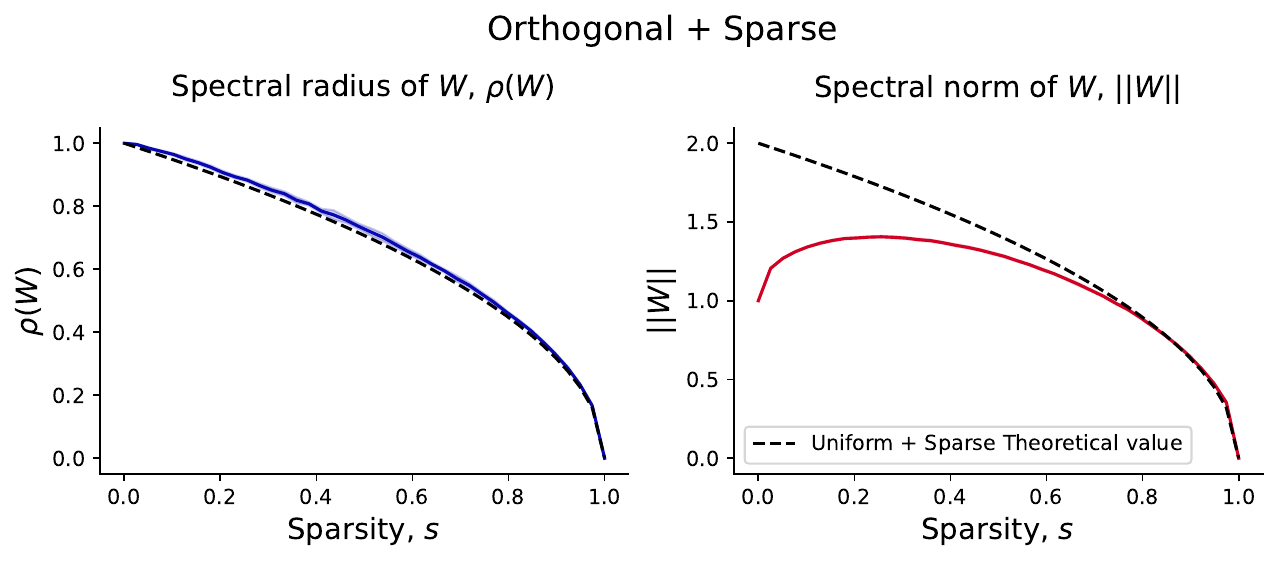}
\end{center}
\caption{Spectral radius and spectral norm of orthogonal-sparse matrices as a function of sparsity, $s$, with $n = 512$. We additionally plot the theoretical predicted value for large $n$ for the uniform-sparse initialization. We see that orthogonal-sparse matrices behave like uniform-sparse ones at high sparsities.}
\label{fig:proofs_orthog_sparse}
\end{figure}

In the orthogonal-sparse case, we cannot apply the same arguments in \cref{sec:proofs_unif_sparse}, as the entries of an orthogonal matrix are no longer iid. However, we note that the entries of orthogonal matrices are approximately Gaussian when considered individually \citep{orthogonal_truncation}. Thus, we expect with high sparsity, the correlations between entries break down and the entries of the matrix behave iid. Thus, we expect for large values of $s$, $\rho(\mW)$ and $||\mW||$ to have the same behavior as in \cref{sec:proofs_unif_sparse}. We verify this empirically in Figure \ref{fig:proofs_orthog_sparse}, where we see that for large values of $s$, $\rho(\mW) \approx \sqrt{1-s}$ and $||\mW|| \approx 2\sqrt{1-s}$, as with \cref{sec:proofs_unif_sparse}. Interestingly, $||\mW||$ is non-monotonic in $s$. Calculating the exact forms of $\rho(\mW)$ and $||\mW||$ is an interesting direction for future work.

\subsubsection{Orthogonal and low rank}
\label{sec:proofs_orthog_lr}

\begin{figure}[h]
\begin{center}
\includegraphics[width = 0.8\linewidth]{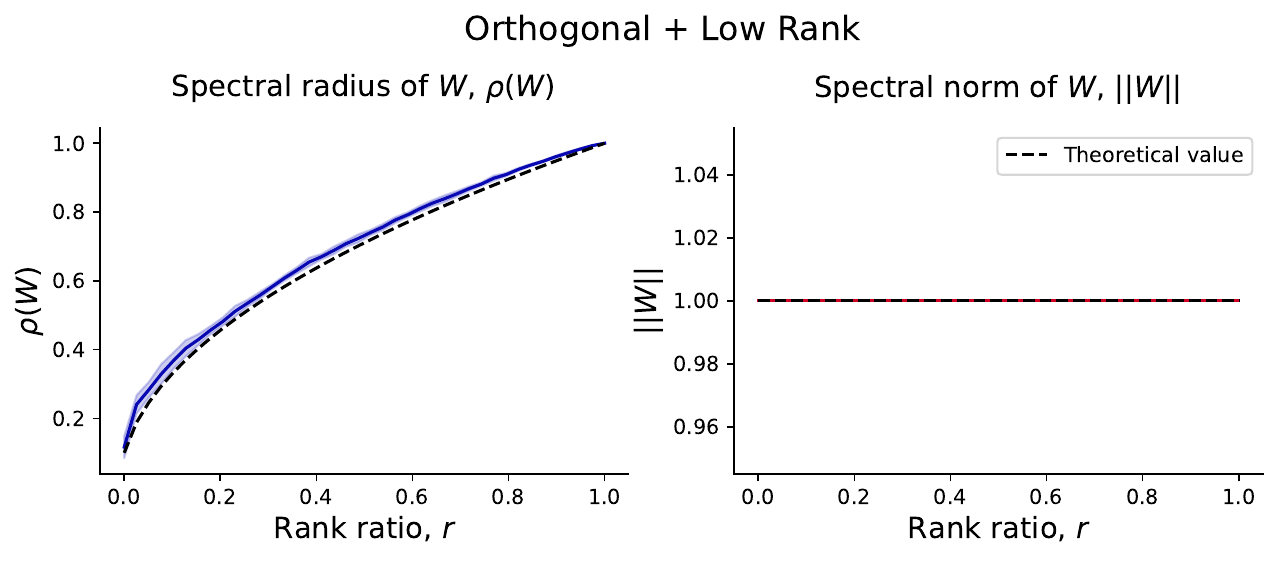}
\end{center}
\caption{Spectral radius and spectral norm of orthogonal-low-rank matrices as a function of rank ratio, $r$, with $n = 512$. We additionally plot the theoretical predicted value for large $n$.}
\label{fig:proofs_orthog_lr}
\end{figure}

Recall that to generate our low rank connectivity matrix $\mW$, we first generate an orthogonal matrix $\mW^0$, then perform an SVD of it $\mW = \mU\mS\mV^T = \sum_{i}^n \sum_i^n \vu_i \vv_i^T$, and then truncate it at rank k to get $\mW = \sum_i^k \vu_i \vv_i^T$ with $k \leq n$. For $||\mW||$, we note that because all the singular values are 1, truncating the SVD does not change the maximum singular value so $||\mW|| = 1$ for all rank ratios $r = \frac{k}{n}$.

For $\rho(\mW)$, we note that one valid SVD of $\mW^0$ is $\mW^0 = \mW^0 \mI \mI$, with $\mU = \mW^0$, $\mS = \mI$ and $\mV = \mI$. It is sufficient to consider this particular SVD, as other SVDs correspond to arbitrary rotations of $\mU$ and $\mV$, which will not affect the final eigenvalue distribution in expectation. For this particular SVD, $\mW = [\mW^0_1, \mW^0_2, \hdots , \mW^0_k, 0, \hdots, 0]$, i.e. we take the first $k$ columns of $\mW^0$, and replace the remaining entries with 0. Due to the large zero block in this matrix, $\mW$ has the same non-zero eigenvectors and eigenvalues as the $k\times k$ principal submatrix of $\mW^0$: $\mW^0_{1:k, 1:k}$\footnote{for eigenvectors we need to concatenate the remaining $n-k$ zeros to have the correct dimension}. \citet{orthogonal_truncation} studies the eigenvalues of the principal submatrix of random orthogonal matrices and shows that the distribution of these eigenvalues has mean $\mathbb{E}[\lambda] = \sqrt{r}$ for rank ratio $r = \frac{k}{n}$. Furthermore, $\mathbf{Var}[\lambda]\to 0$ as $n \to \infty$ at fixed $r$ \citep{orthogonal_truncation}, so we have $\rho(\mW) = \sqrt{r}$ for large $n$. Empirical verification of this is provided in Figure \ref{fig:proofs_orthog_lr} for $n = 512$.

\subsubsection{Glorot Uniform and Low rank}
\label{sec:proofs_unif_sparse}

\begin{figure}[h]
\begin{center}
\includegraphics[width = 0.8\linewidth]{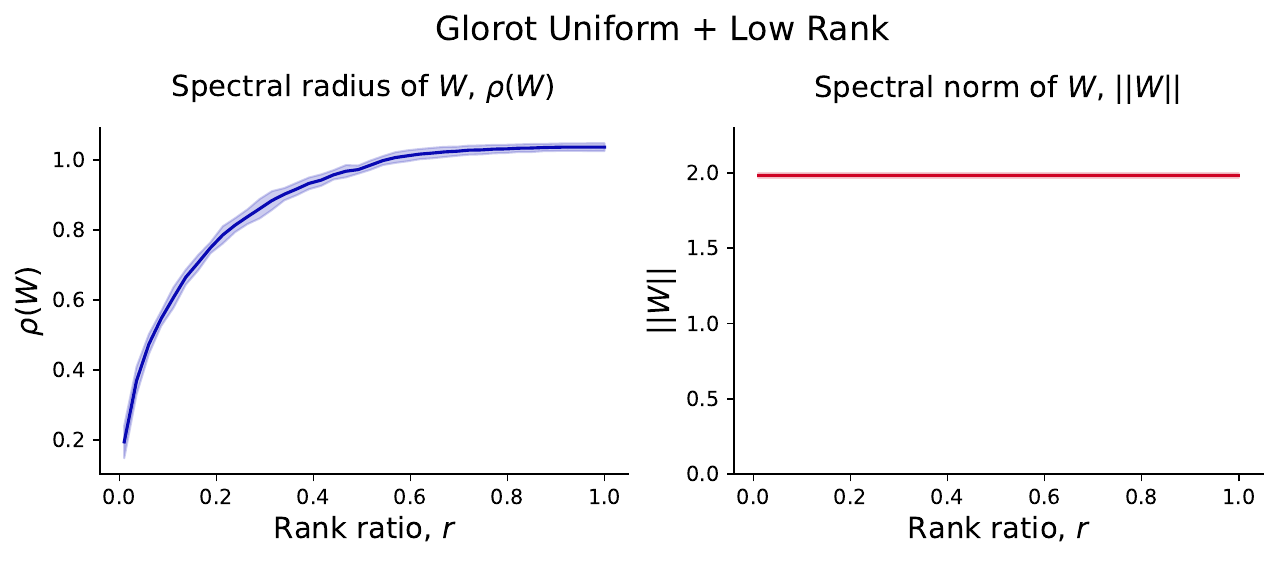}
\end{center}
\caption{Spectral radius and spectral norm of uniform-low-rank matrices as a function of rank ratio, $r$, with $n = 512$.}
\label{fig:proofs_unif_lr}
\end{figure}

As with the orthogonal-low-rank case, low-rank approximation does not affect the spectral norm of $\mW$ when using a Glorot Uniform initialization scheme. Proving results for the spectral radius of low-rank uniform initialization is more difficult due to the more complex eigenstructure of uniformly initialized matrices compared to orthogonal ones. Thus, we leave it to future work to formally prove the relationship between rank ratio and spectral radius. We observe empirically in Figure \ref{fig:proofs_unif_lr} that spectral radius is increasing with rank ratio, like with the orthogonal-low-rank case.

\subsection{singular value spectrum and eigenspectrum at initialization}
\label{sec:spectra_at_init}
Recall in \ref{sec:proofs}, we provided theoretical results for the spectral norm and spectral radius of randomly initialized matrices as a function of rank and sparsity in order to provide theoretical guarantees regarding the proposed parameterization of the recurrent weights $W_{rec} (r, s)$ at initialization. As explained in \ref{sec:robustness_motivation}, we also motivated the parameterization with respect to the entire the spectrum of singular values due to its connection with robustness. Our theoretical results regarding the spectral norm as a function of rank and sparsity do not extend to the full spectrum of singular values. So, here we provide empirical evidence regarding the decay of the singular value spectrum of the recurrent weights as a function of rank and sparsity (we also provide plots of the decay of the eigenspectrum of the recurrent weights to demonstrate the similarity between the two related, yet distinct spectra). 

\subsubsection{Orthogonal initialization}
Here, we examine the singular value spectrum for the orthogonally initialized recurrent weights across various ranks and sparsities. We note that as sparsity increases, the rate at which the singular value spectrum decays decrease (Figure \ref{fig:ortho_sing_spec_decay}). As discussed in \ref{sec:robustness_motivation}, this raises the effective dimensionality of the transformation induced by the recurrent weights on the hidden-state vector. Similarly, as rank increases, the rate at which the singular value spectrum decays also decreases (Figure \ref{fig:ortho_sing_spec_decay}). This is interesting as increasing sparsity removes parameters from the model, whereas increases rank adds parameters to the model. Hence, we observe that pruning by increasing sparsity and pruning by decreasing rank are distinct with respect to the decay each induces in the singular value spectrum of the recurrent weights. Furthermore, note that we observe the exact same patterns with respect to the recurrent eigenspectra as well (Figure \ref{fig:ortho_eig_spec_decay}). 

\begin{figure*}[h]
\begin{center}
\includegraphics[scale=0.42]{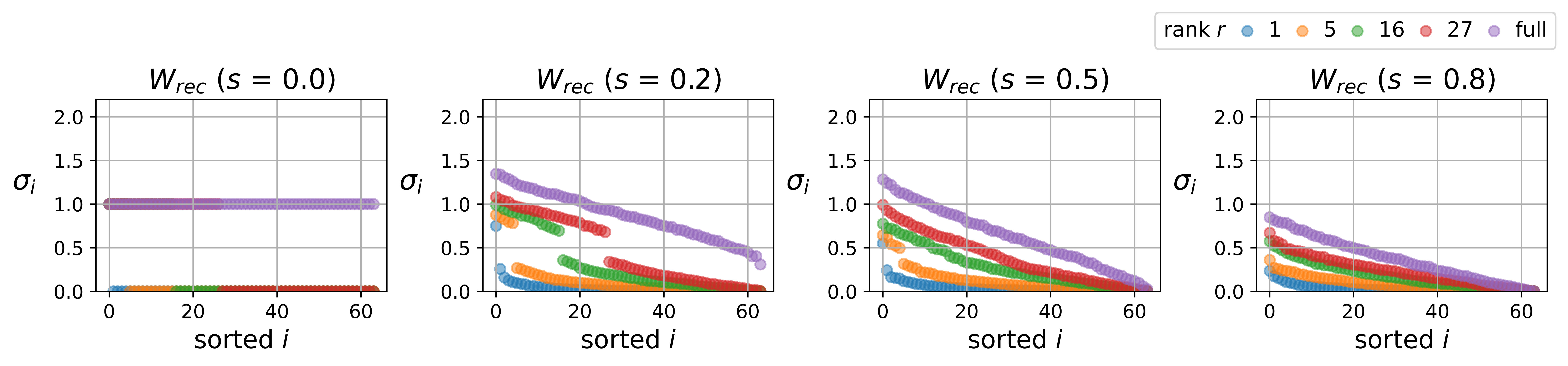}
\end{center}
\caption{Singular value spectra of orthogonally initialized recurrent weights across various ranks and sparsities sorted in decreasing order.}
\label{fig:ortho_sing_spec_decay}
\end{figure*}

\begin{figure*}[h]
\begin{center}
\includegraphics[scale=0.42]{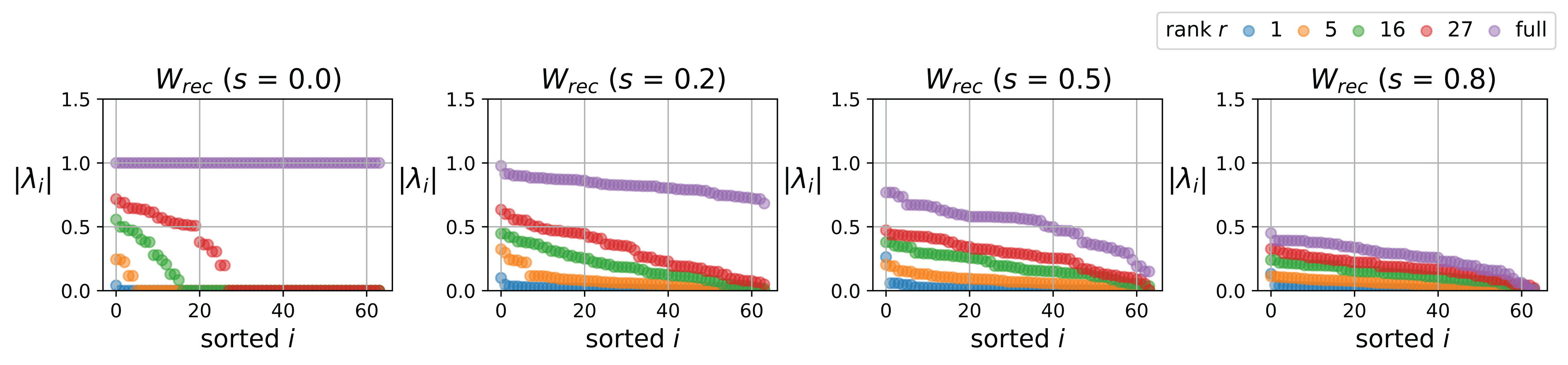}
\end{center}
\caption{Eigenspectra of orthogonally initialized recurrent weights across various ranks and sparsities sorted in decreasing order.}
\label{fig:ortho_eig_spec_decay}
\end{figure*}

\subsubsection{Glorot uniform initialization}
Here, we examine the singular value spectrum for the Glorot uniform initialized recurrent weights across various ranks and sparsities. The decay patterns observed with respect to the singular value and eigenvalue spectra in the orthogonally initialized weights hold here as well (Figures \ref{fig:unif_sing_spec_decay}, \ref{fig:unif_eig_spec_decay}). 

\begin{figure*}[h]
\begin{center}
\includegraphics[scale=0.42]{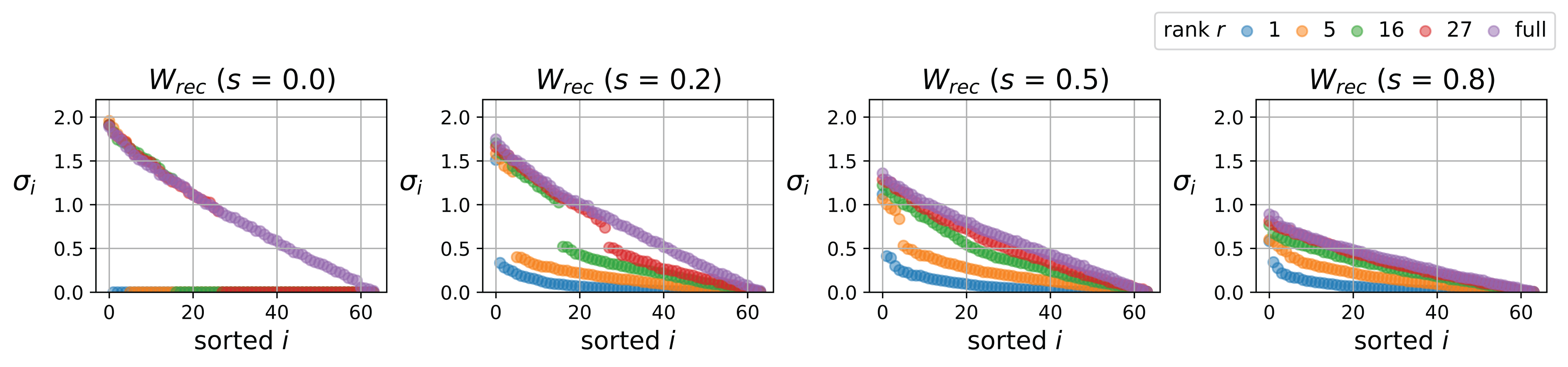}
\end{center}
\caption{Singular value spectra of Glorot uniform initialized recurrent weights across various ranks and sparsities sorted in decreasing order.}
\label{fig:unif_sing_spec_decay}
\end{figure*}

\begin{figure*}[h! tbp]
\begin{center}
\includegraphics[scale=0.42]{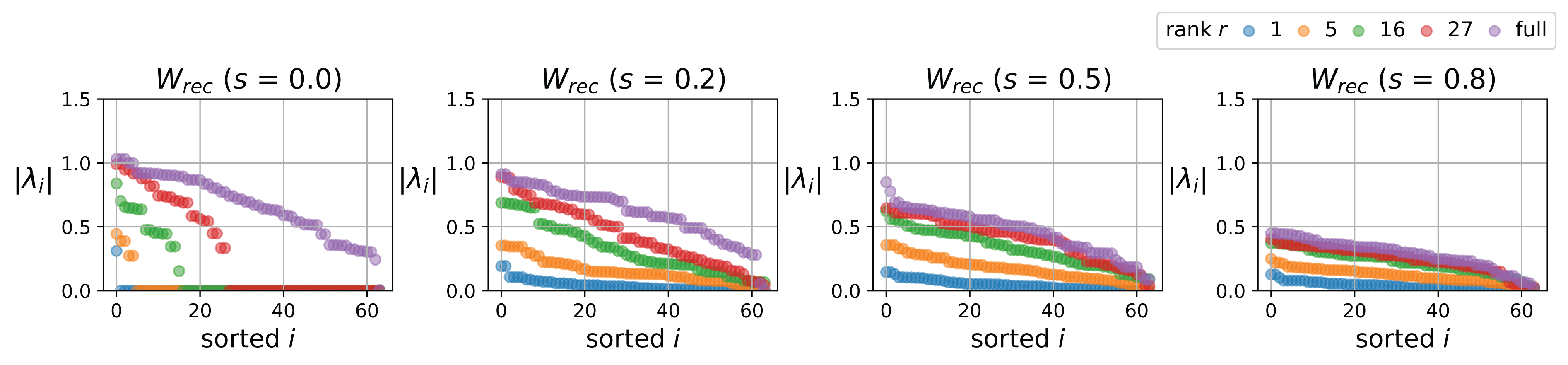}
\end{center}
\caption{Eigenspectra of Glorot uniform initialized recurrent weights across various ranks and sparsities sorted in decreasing order.}
\label{fig:unif_eig_spec_decay}
\end{figure*}

\subsection{Trained network dynamics results}
In this section, we provide plots for various analyses we conducted on the trained networks that were not included in the main portion of the paper. 

\begin{figure*}[h]
\begin{center}
\includegraphics[scale=0.385]{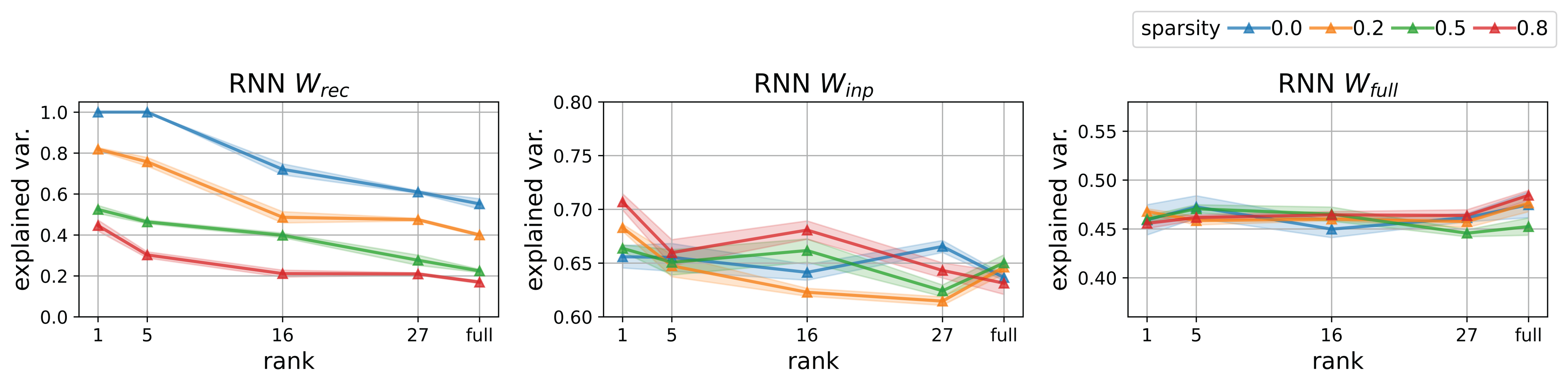}
\end{center}
\caption{Effective dimensionality of RNN state-space trajectories across various ranks and sparsities. Note that the dynamics of RNNs are higher-dimensional than the dynamics we observe in CfCs. This, alongside the disparity in spectral norm, offers intuition as to why we find that CfCs tend to outperform RNNs under distribution shift.}
\label{fig:ed_rnn}
\end{figure*}

\begin{figure*}[h! tbp]
\begin{center}
\includegraphics[scale=0.865]{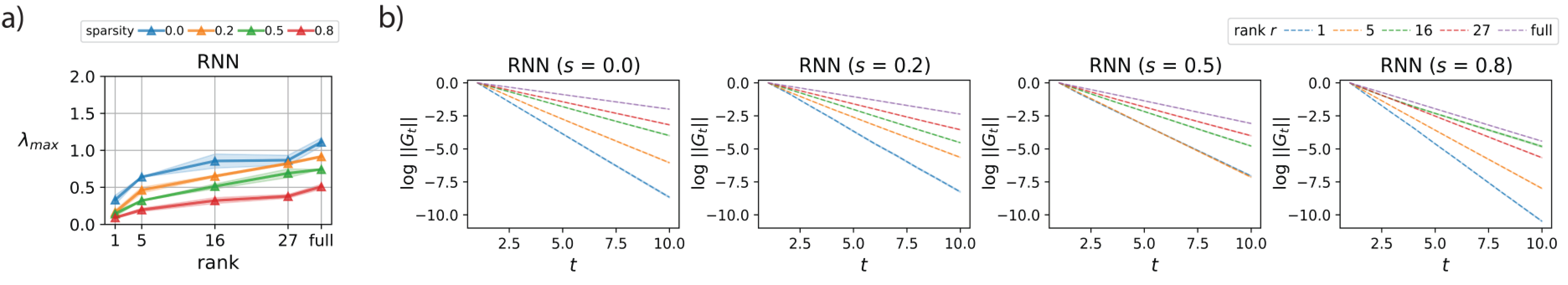}
\end{center}
\caption{a) Spectral radius of recurrent weights $\mW_{rec} (r, s)$ in trained RNN networks across ranks and sparsities. b) Frobenius norm of recurrent gradients $\mG_t$ as a function of time. Note that the norms are plotted in log space and translated to decay from $0$ to enable easier comparison. For details on the computation, refer to \ref{sec:experiments}.}
\label{fig:rnn_gradient_decay}
\end{figure*}

\begin{figure*}[h! tbp]
\begin{center}
\includegraphics[scale=0.385]{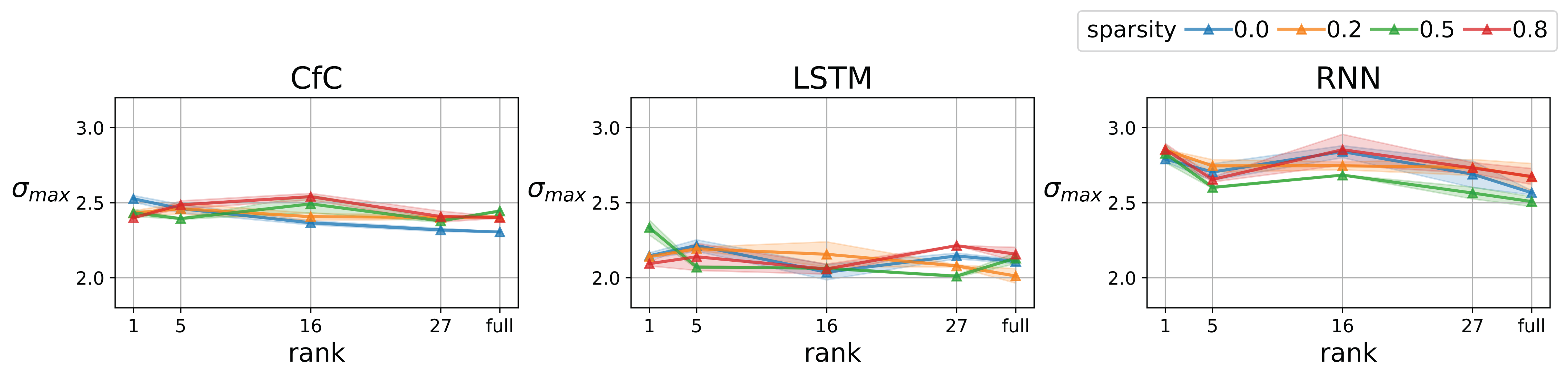}
\end{center}
\caption{Spectral norm of the input weights across models, ranks and sparsities.}
\label{fig:inp_svd}
\end{figure*}

\begin{figure*}[p]
\begin{center}
\includegraphics[scale=0.42]{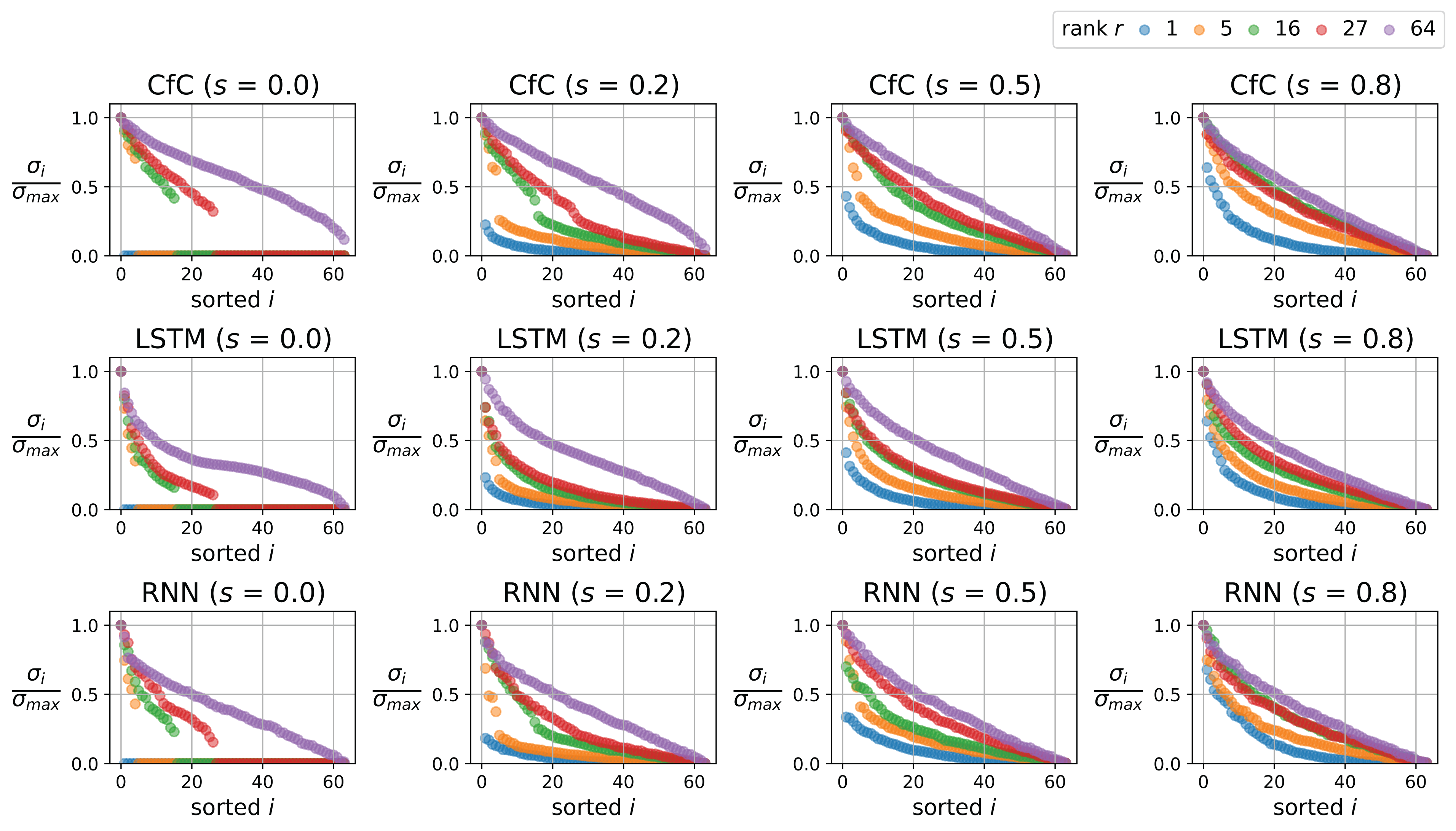}
\end{center}
\caption{Decay of the recurrent singular value spectrum across models, ranks and sparsities as measured by normalizing the sorted spectrum by the spectral norm. We find that the prior induced at initialization holds at convergence: namely, as sparsity increases the rate of decay of the spectrum decreases and as rank decreases the rate of decay of the spectrum increases.}
\end{figure*}

\begin{figure*}[p]
\begin{center}
\includegraphics[scale=0.42]{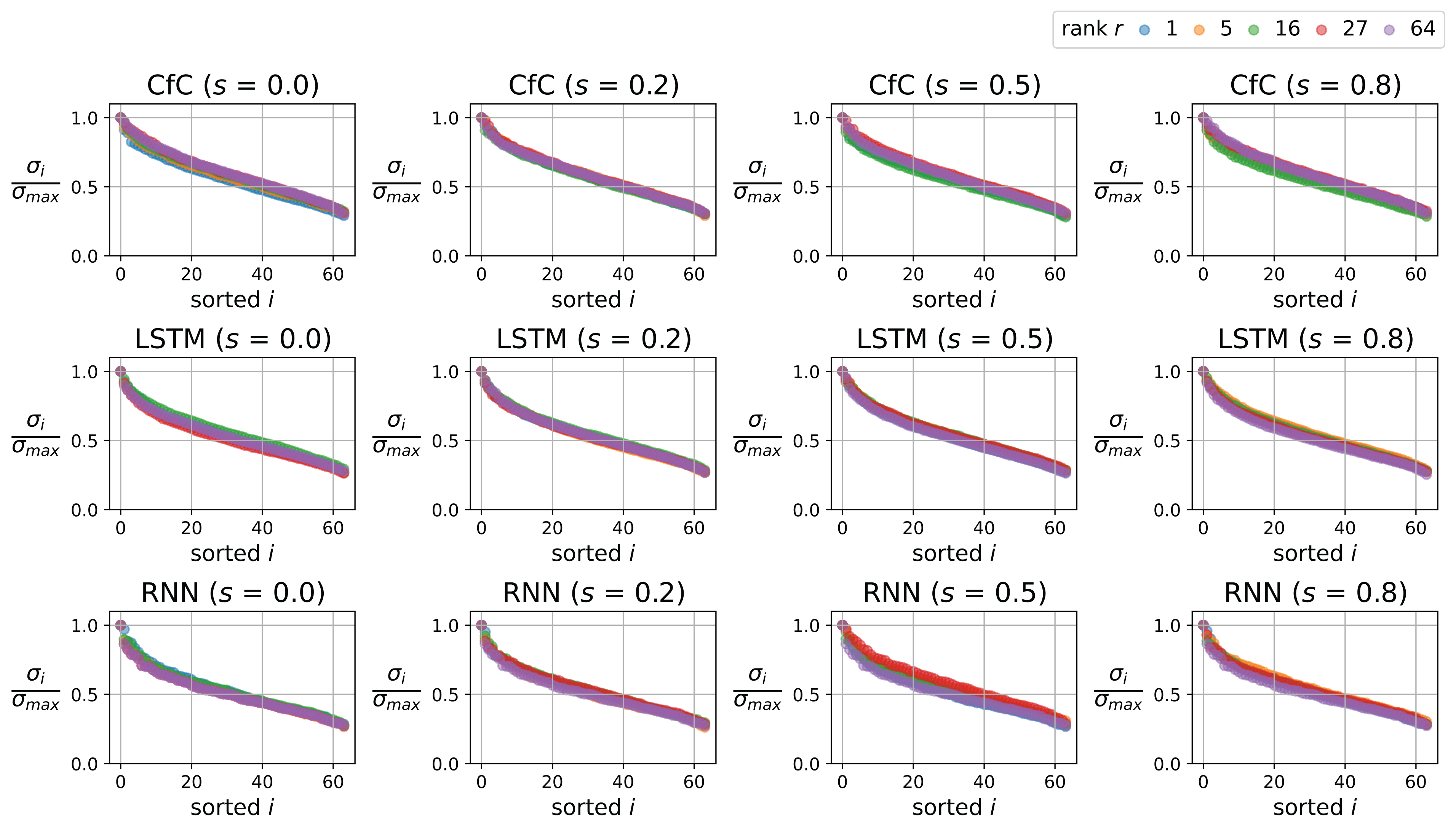}
\end{center}
\caption{Decay of the input singular value spectrum across models, ranks and sparsities as measured by normalizing the sorted spectrum by the spectral norm. We find that the decay is quite similar across models, sparsities and ranks, which aligns with many of our findings suggesting that the input weights are little affected by changes in the rank and sparsity of the recurrent weights.}
\label{fig:input_sn_decay}
\end{figure*}

\begin{figure*}[p]
\begin{center}
\includegraphics[scale=0.42]{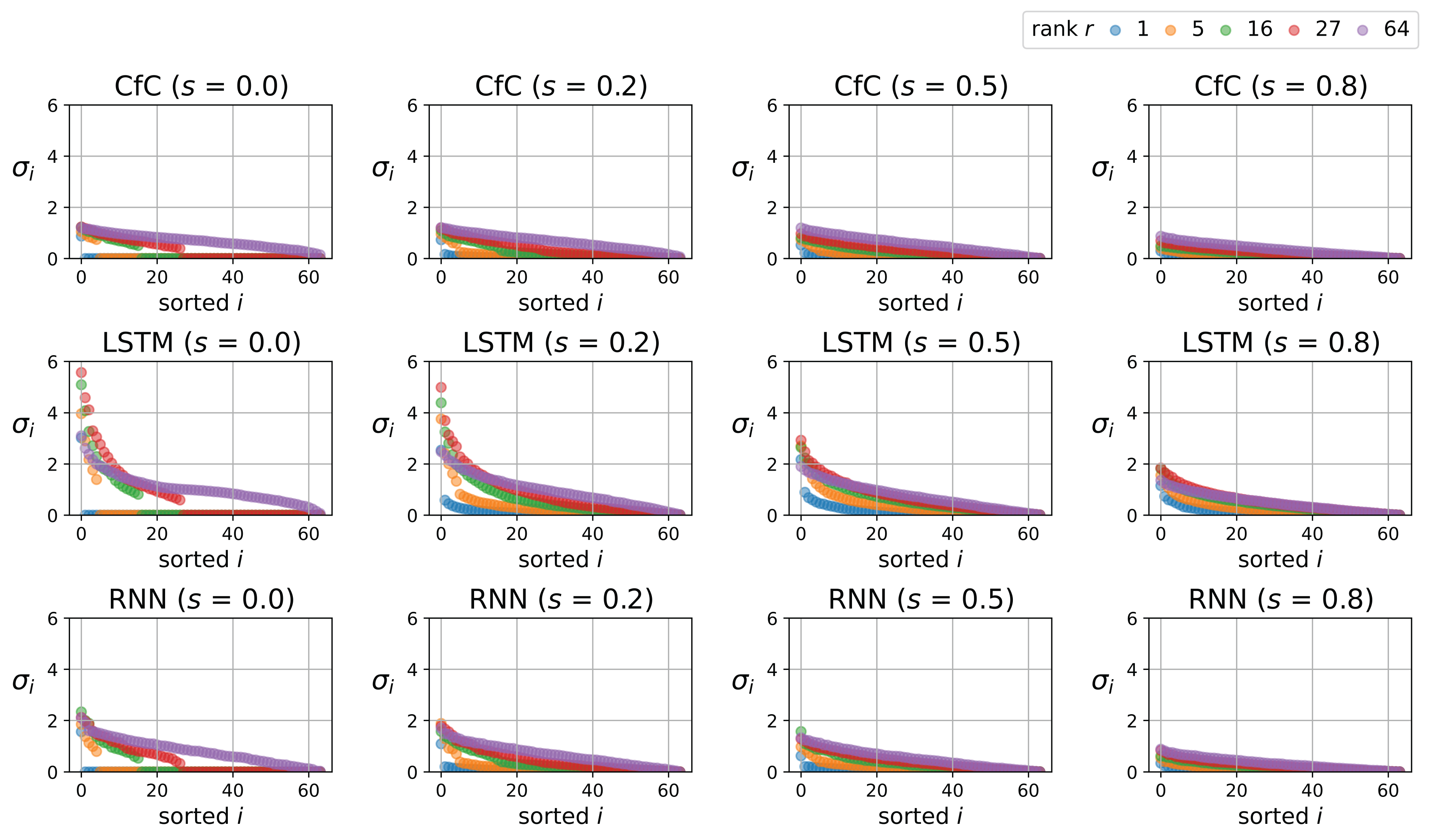}
\end{center}
\caption{Singular value spectrum of the recurrent weights of the trained models across various ranks and sparsities sorted in decreasing order.}
\label{fig:recurrent_sn}
\end{figure*}

\begin{figure*}[p]
\begin{center}
\includegraphics[scale=0.42]{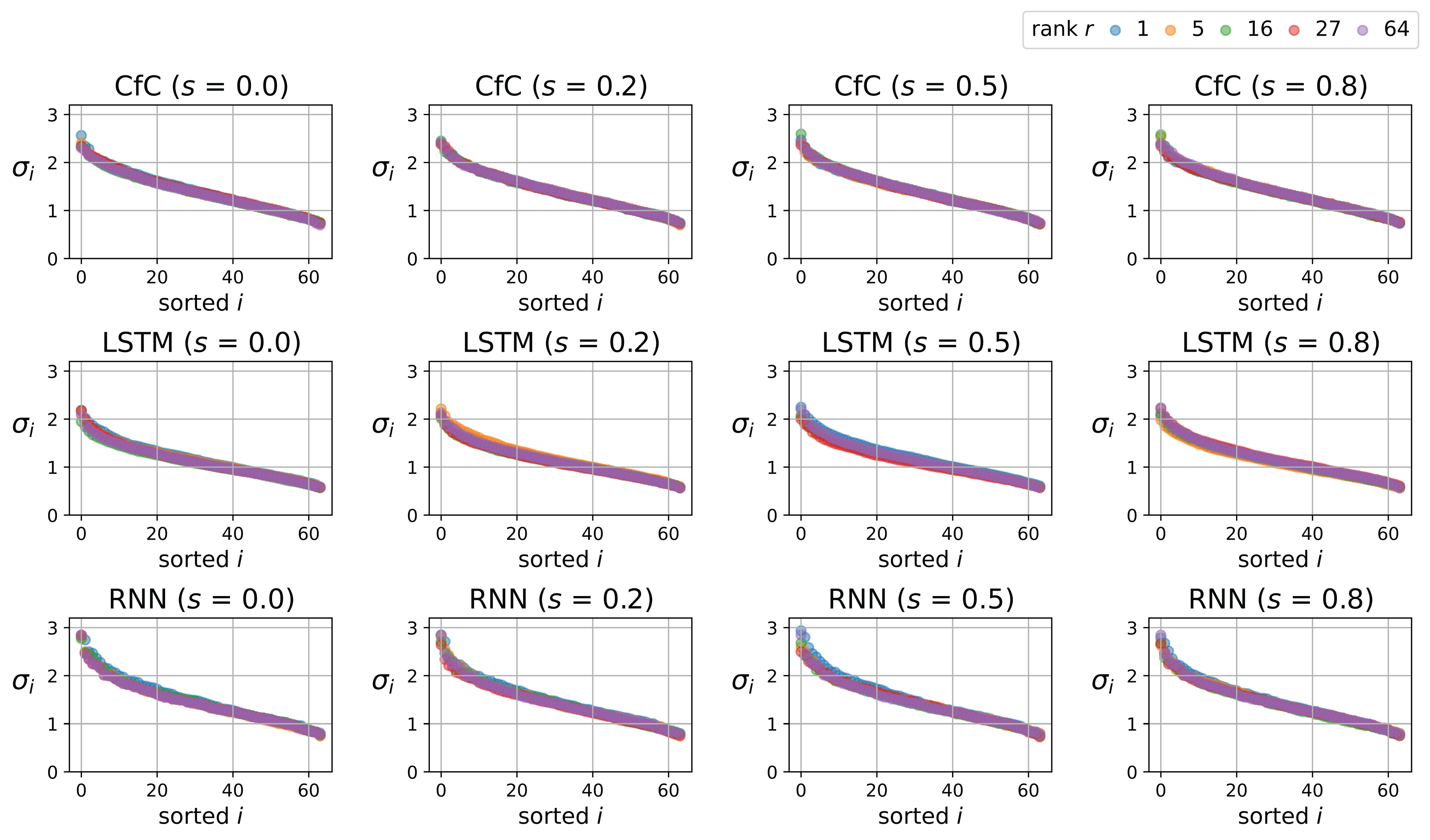}
\end{center}
\caption{Singular value spectrum of the input weights of the trained models across various ranks and sparsities sorted in decreasing order.}\end{figure*}

\begin{figure*}[p]
\begin{center}
\includegraphics[scale=0.42]{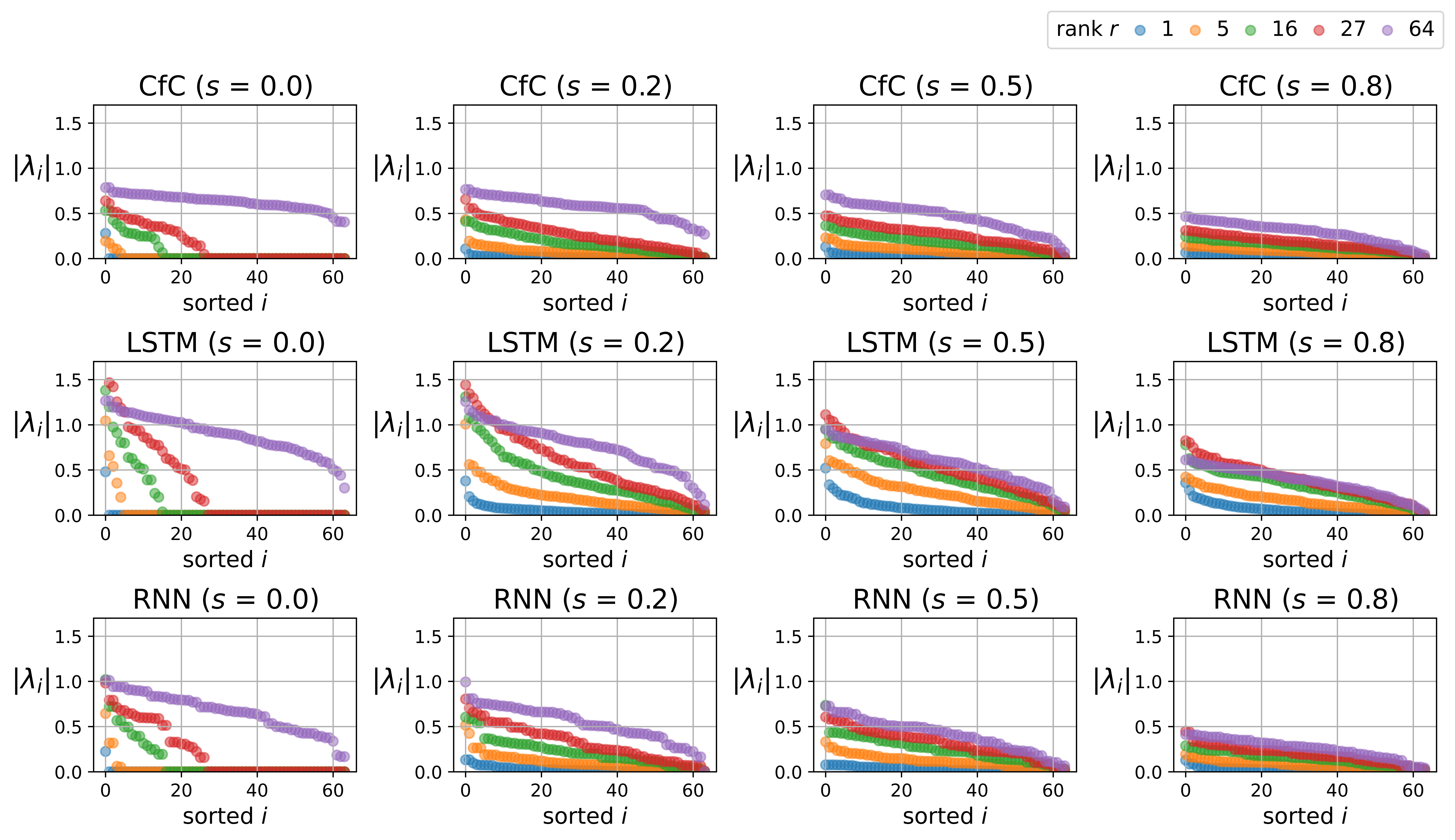}
\end{center}
\caption{Eigenspectrum of the recurrent weights of the trained models across various ranks and sparsities sorted in decreasing order of magnitude.}
\end{figure*}

\begin{figure*}[p]
\begin{center}
\includegraphics[scale=0.355]{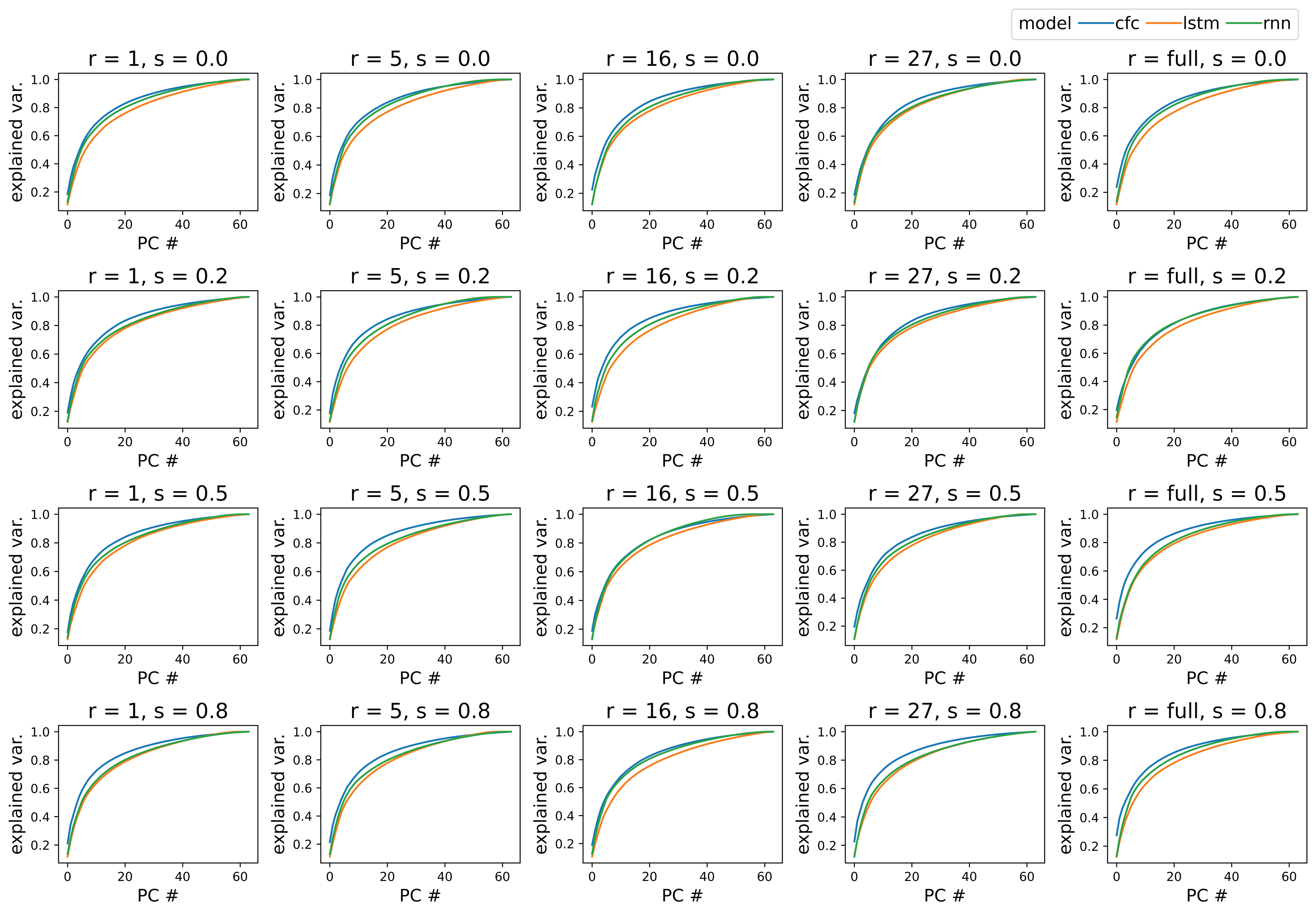}
\end{center}
\caption{Dimensionality analysis of full state-space trajectories collected from the online, closed-loop in-distribution simulation analyzed in Figure \ref{fig:ed}. Here, we plot the explained variance curve over all principal components to provide a more complete view of the effective dimensionality of the trajectories, as opposed to only the top 5 PCs.}
\end{figure*}

\pagebreak
\subsection{Full results}
\label{sec:full_results}

\subsubsection{Online and offline performance}
Here, we provide offline validation/test loss, online in-distribution rewards and online rewards under distribution shift for each Gym environment \citep{gym} that we ran experiments on: Seaquest, Alien and HalfCheetah. For each environment, we considered 5 types of models: CfCs, RNNs, LSTMs, GRUs and CNNs. For each of the recurrent architectures we considered models of various 5 different ranks and 4 different sparsities. Full details on the experimental setup are given in \ref{sec:experiments}. 

\begin{figure}[h! tbp]
\vspace{2.5cm} 
\begin{center}
\includegraphics[scale=0.77]{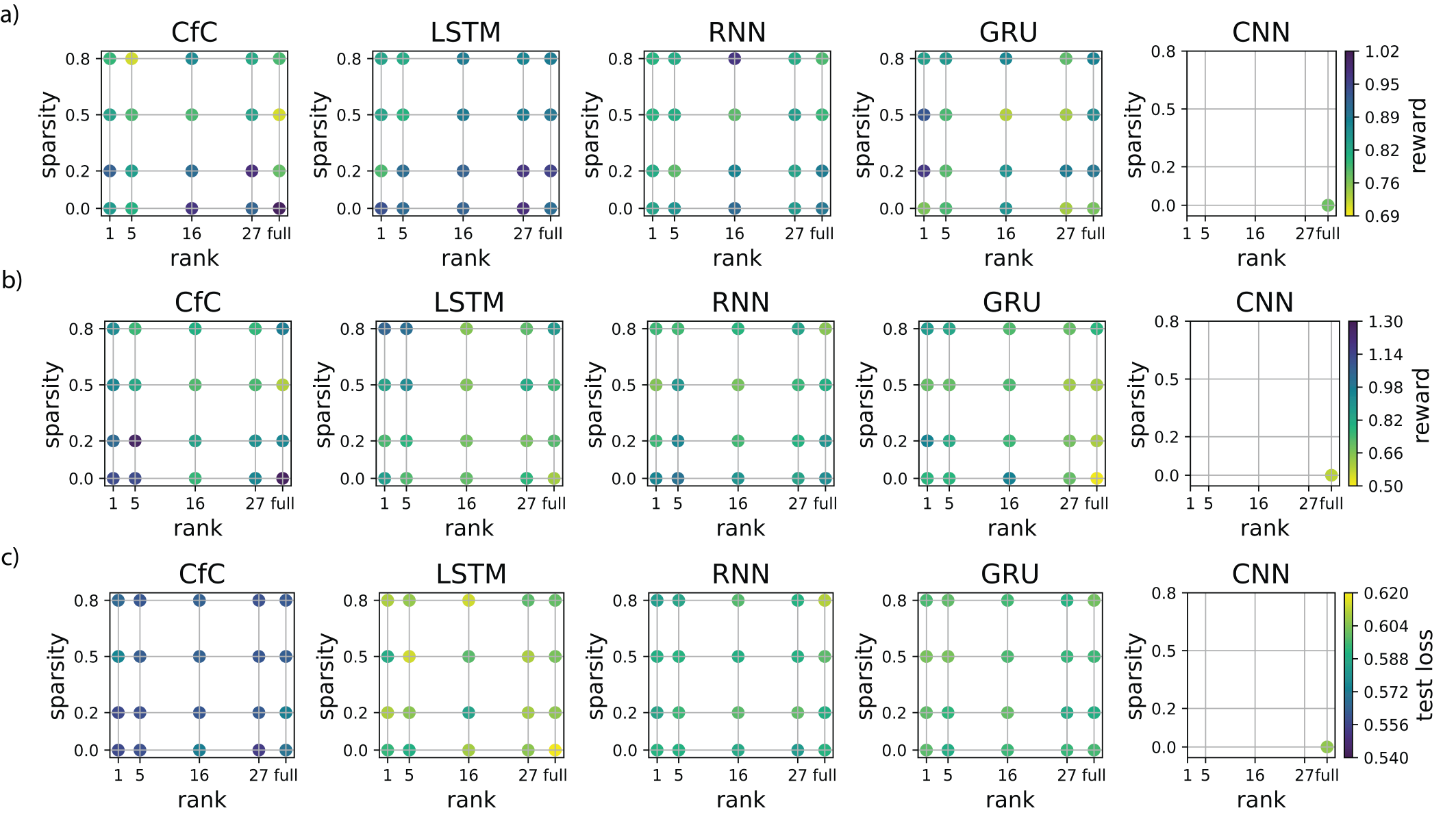}
\end{center}
\caption{Online and offline performance of recurrent networks under different ranks and sparsities in the Seaquest environment. Note that online performance of Seaquest was presented in the main portion of the paper, but here we include it alongside the offline losses evaluated on validation/test set. a) In-distribution rewards in the online, closed-loop setting normalized by the rewards obtained by the expert in-distribution. b) Rewards averaged across 5 distribution shifts, normalized by the rewards obtained by the expert under distribution shift. c) Loss of model evaluated on a validation/test set.}
\label{fig:seaquest}
\end{figure}

\clearpage

\begin{table}[p]
\begin{tabular}{cc*{4}{c}} 
\toprule 
& & \multicolumn{4}{c}{\textbf{Sparsity}} \\ 
\cmidrule{3-6} 
\multirow{-2.5}{*}{\makecell{\textbf{Model}}} & \multirow{-2.5}{*}{\makecell{\textbf{Rank}}} & {0} & {0.2} & {0.5} & {0.8}\\ 
\midrule 
CfC & 1 & \text{{0.86 (0.03)}} &\text{{0.93 (0.04)}} &\text{{0.88 (0.03)}} &\text{{0.8 (0.05)}} \\
 & 5  & \text{{0.81 (0.06)}} &\text{{0.86 (0.05)}} &\text{{0.79 (0.06)}} &\text{{0.71 (0.06)}} \\
 & 16  & \text{{1.01 (0.06)}} &\text{{0.92 (0.06)}} &\text{{0.79 (0.05)}} &\text{{0.9 (0.05)}} \\
 & 27  & \text{{0.91 (0.06)}} &\text{{0.97 (0.04)}} &\text{{0.88 (0.06)}} &\text{{0.83 (0.06)}} \\
 & full  & \text{{0.97 (0.04)}} &\text{{0.76 (0.08)}} &\text{{0.7 (0.05)}} &\text{{0.76 (0.06)}} \\\addlinespace

LSTM   & 1  & \text{{0.99 (0.04)}} &\text{{0.83 (0.04)}} &\text{{0.9 (0.06)}} &\text{{0.92 (0.05)}} \\
 & 5  & \text{{0.94 (0.03)}} &\text{{0.94 (0.05)}} &\text{{0.88 (0.06)}} &\text{{0.88 (0.05)}} \\
 & 16  & \text{{0.94 (0.04)}} &\text{{0.94 (0.07)}} &\text{{0.9 (0.06)}} &\text{{0.9 (0.07)}} \\
 & 27  & \text{{1.0 (0.03)}} &\text{{0.99 (0.05)}} &\text{{0.9 (0.06)}} &\text{{0.88 (0.06)}} \\
 & full  & \text{{0.9 (0.05)}} &\text{{0.98 (0.03)}} &\text{{0.9 (0.06)}} &\text{{0.9 (0.06)}} \\\addlinespace

RNN  & 1  & \text{{0.94 (0.06)}} &\text{{0.92 (0.06)}} &\text{{0.89 (0.05)}} &\text{{0.9 (0.05)}} \\
 & 5  & \text{{0.96 (0.05)}} &\text{{0.86 (0.05)}} &\text{{0.94 (0.06)}} &\text{{0.92 (0.06)}} \\
 & 16  & \text{{0.94 (0.06)}} &\text{{0.9 (0.05)}} &\text{{0.88 (0.06)}} &\text{{1.07 (0.02)}} \\
 & 27  & \text{{0.88 (0.05)}} &\text{{0.83 (0.05)}} &\text{{1.01 (0.06)}} &\text{{0.91 (0.03)}} \\ 
 &full  & \text{{0.92 (0.04)}} &\text{{0.92 (0.04)}} &\text{{0.92 (0.05)}} &\text{{0.86 (0.06)}} \\\addlinespace

GRU   & 1  & \text{{0.81 (0.06)}} &\text{{1.04 (0.04)}} &\text{{1.0 (0.05)}} &\text{{0.9 (0.04)}} \\
  & 5  & \text{{0.85 (0.04)}} &\text{{0.84 (0.06)}} &\text{{0.85 (0.06)}} &\text{{0.91 (0.04)}} \\
 & 16  & \text{{0.92 (0.06)}} &\text{{0.92 (0.05)}} &\text{{0.78 (0.05)}} &\text{{0.94 (0.05)}} \\
 & 27  & \text{{0.79 (0.05)}} &\text{{0.95 (0.06)}} &\text{{0.79 (0.06)}} &\text{{0.84 (0.05)}} \\
 & full  & \text{{0.82 (0.04)}} &\text{{0.95 (0.06)}} &\text{{0.93 (0.06)}} &\text{{0.94 (0.05)}} \\\addlinespace

CNN &1 &  \textemdash &  \textemdash&  \textemdash &  \textemdash\\ 
&5 &  \textemdash &  \textemdash&  \textemdash &  \textemdash \\
&16 &  \textemdash &  \textemdash&  \textemdash &  \textemdash \\
&27 &  \textemdash &  \textemdash&  \textemdash &  \textemdash\\
&full &  \text{ 0.77 (0.05)} &  \textemdash&  \textemdash &  \textemdash\\\addlinespace

\bottomrule 
\end{tabular}
    \caption{Mean episodic in-distribution rewards in Seaquest environment normalized by rewards obtained by expert policy. Normalized rewards are given $\pm 1$ SE which is shown in parentheses.}\label{table:seaquest}
\end{table}

\begin{table}[p]
\begin{tabular}{cc*{4}{c}} 
\toprule 
& & \multicolumn{4}{c}{\textbf{Sparsity}} \\ 
\cmidrule{3-6} 
\multirow{-2.5}{*}{\makecell{\textbf{Model}}} & \multirow{-2.5}{*}{\makecell{\textbf{Rank}}} & {0} & {0.2} & {0.5} & {0.8}\\ 
\midrule 
CfC &1 & \text{1.12 (0.05)} & \text{1.03 (0.03)} & \text{0.95 (0.02)} & \text{0.93 (0.03)}  \\ 
&5 & \text{1.12 (0.03)} & \text{1.26 (0.03)} & \text{0.83 (0.02)} & \text{0.76 (0.03)} \\
&16 & \text{0.76 (0.03)} & \text{0.85 (0.03)} & \text{0.73 (0.03)} & \text{0.8 (0.03)} \\
&27 &  \text{0.95 (0.02)} & \text{0.91 (0.03)} & \text{0.74 (0.02)} & \text{0.77 (0.03)}\\
&full & \text{1.28 (0.03)} & \text{0.95 (0.03)} & \text{0.59 (0.04)} & \text{0.98 (0.03)} \\\addlinespace

LSTM &1 & \text{0.77 (0.03)} & \text{0.68 (0.04)} & \text{0.76 (0.03)} & \text{0.91 (0.02)}\\ 
&5 & \text{0.67 (0.04)} & \text{0.71 (0.04)} & \text{0.82 (0.03)} & \text{0.89 (0.02)} \\
&16 & \text{0.66 (0.05)} & \text{0.64 (0.04)} & \text{0.62 (0.05)} & \text{0.62 (0.05)} \\
&27 &  \text{0.7 (0.04)} & \text{0.63 (0.05)} & \text{0.74 (0.04)} & \text{0.66 (0.05)}\\
&full &  \text{0.58 (0.05)} & \text{0.67 (0.04)} & \text{0.69 (0.04)} & \text{0.78 (0.05)} \\\addlinespace

RNN &1 & \text{0.78 (0.02)} & \text{0.96 (0.02)} & \text{0.7 (0.04)} & \text{0.87 (0.02)}  \\ 
&5 & \text{0.78 (0.02)} & \text{0.82 (0.02)} & \text{0.7 (0.03)} & \text{0.83 (0.03)} \\
&16 &\text{0.96 (0.02)} & \text{0.76 (0.03)} & \text{0.73 (0.03)} & \text{0.73 (0.03)} \\
&27 &  \text{0.71 (0.04)} & \text{0.68 (0.04)} & \text{0.62 (0.04)} & \text{0.71 (0.04)}\\
&full &  \text{0.51 (0.03)} & \text{0.61 (0.04)} & \text{0.61 (0.04)} & \text{0.78 (0.04)}\\\addlinespace

GRU &1 & \text{0.8 (0.05)} & \text{0.67 (0.05)} & \text{0.61 (0.04)} & \text{0.67 (0.04)}  \\ 
&5 & \text{0.84 (0.05)} & \text{0.8 (0.05)} & \text{0.76 (0.04)} & \text{0.66 (0.04)} \\
&16 &\text{0.76 (0.04)} & \text{0.66 (0.04)} & \text{0.62 (0.04)} & \text{0.69 (0.04)} \\
&27 &  \text{0.73 (0.05)} & \text{0.69 (0.04)} & \text{0.67 (0.04)} & \text{0.73 (0.04)}\\
&full &  \text{0.78 (0.04)} & \text{0.76 (0.04)} & \text{0.7 (0.04)} & \text{0.6 (0.04)}\\\addlinespace

CNN &1 &  \textemdash &  \textemdash&  \textemdash &  \textemdash\\ 
&5 &  \textemdash &  \textemdash&  \textemdash &  \textemdash \\
&16 &  \textemdash &  \textemdash&  \textemdash &  \textemdash \\
&27 &  \textemdash &  \textemdash&  \textemdash &  \textemdash\\
&full &  \text{0.58 (0.04)} &  \textemdash&  \textemdash &  \textemdash\\\addlinespace
\bottomrule 
\end{tabular}
    \caption{Mean episodic rewards under distribution shift in Seaquest environment normalized by performance of the expert policy under distribution shift. Rewards are given $\pm 1$ SE which is shown in parentheses.}\label{table:seaquest_ood}
\end{table}

\clearpage

\begin{figure}[p]
\begin{center}
\includegraphics[scale=0.77]{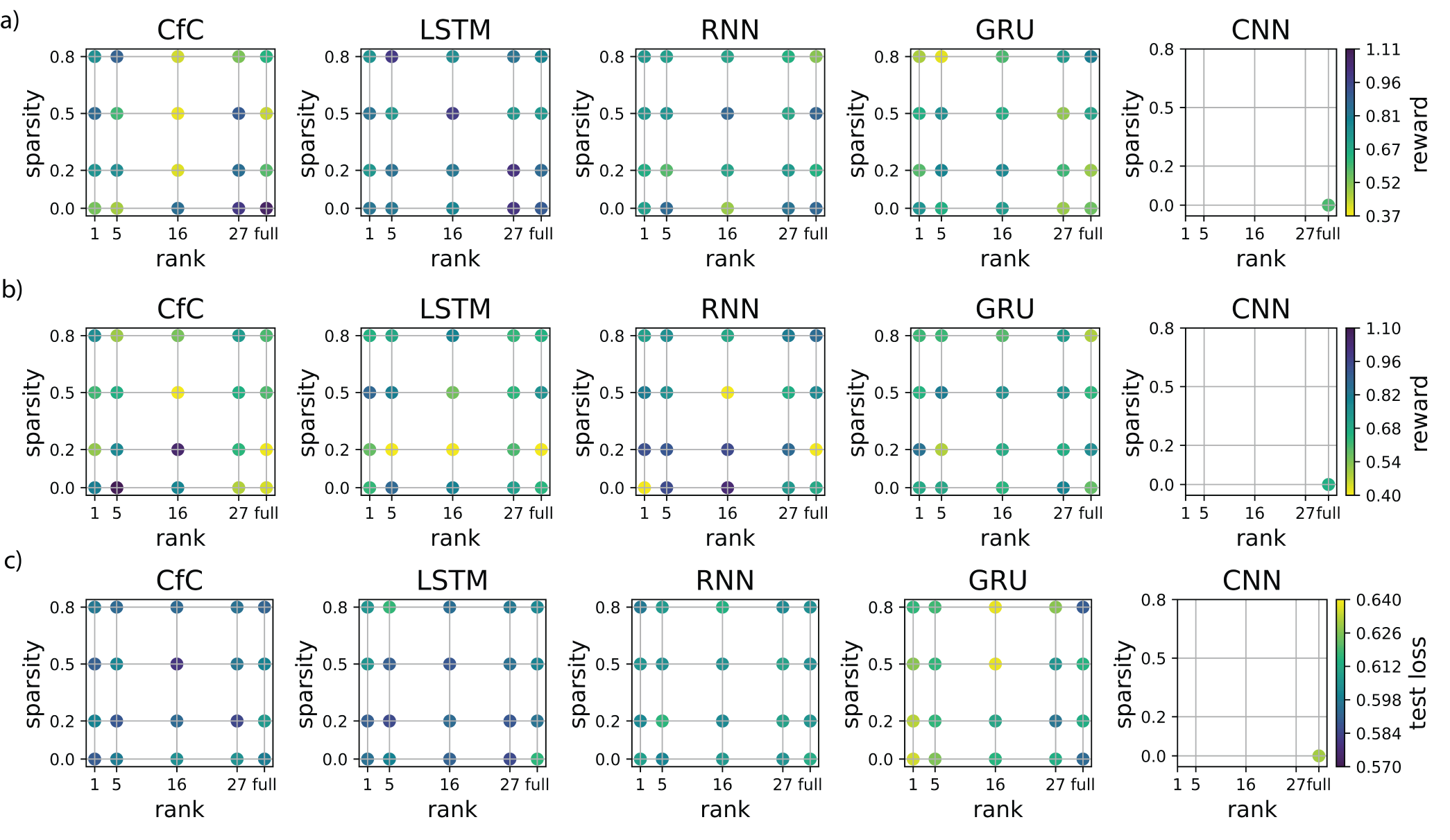}
\end{center}
\caption{Online and offline performance of recurrent networks under different ranks and sparsities in the Alien environment. a) In-distribution rewards in the online, closed-loop setting normalized by rewards obtained by the expert in-distribution. a) In-distribution rewards in the online, closed-loop setting normalized by the rewards obtained by the expert in-distribution. b) Rewards averaged across 5 distribution shifts, normalized by the rewards obtained by the expert under distribution shift. c) Loss of model evaluated on a validation/test set.}
\label{fig:alien}
\end{figure}

\clearpage

\begin{table}[p]
\begin{tabular}{cc*{4}{S[table-format=4.0(2)]}} 
\toprule 
& & \multicolumn{4}{c}{\textbf{Sparsity}} \\ 
\cmidrule{3-6} 
\multirow{-2.5}{*}{\makecell{\textbf{Model}}} & \multirow{-2.5}{*}{\makecell{\textbf{Rank}}} & {0} & {0.2} & {0.5} & {0.8}\\ 
\midrule 
CfC    & 1  & \text{{0.59 (0.07)}} &\text{{0.81 (0.14)}} &\text{{0.95 (0.11)}} &\text{{0.83 (0.19)}} \\
 & 5  & \text{{0.52 (0.1)}} &\text{{0.84 (0.14)}} &\text{{0.66 (0.13)}} &\text{{0.98 (0.14)}} \\
 & 16  & \text{{0.94 (0.14)}} &\text{{0.45 (0.05)}} &\text{{0.42 (0.05)}} &\text{{0.47 (0.04)}} \\
 & 27  & \text{{1.1 (0.18)}} &\text{{0.94 (0.19)}} &\text{{1.01 (0.17)}} &\text{{0.58 (0.08)}} \\
 & full  & \text{{1.2 (0.21)}} &\text{{0.64 (0.12)}} &\text{{0.47 (0.07)}} &\text{{0.7 (0.11)}} \\\addlinespace

LSTM  & 1  & \text{{0.92 (0.12)}} &\text{{0.8 (0.12)}} &\text{{0.92 (0.12)}} &\text{{0.81 (0.12)}} \\
 & 5  & \text{{0.9 (0.12)}} &\text{{0.93 (0.15)}} &\text{{0.8 (0.15)}} &\text{{1.1 (0.18)}} \\
 & 16  & \text{{0.87 (0.13)}} &\text{{0.9 (0.12)}} &\text{{1.09 (0.13)}} &\text{{0.84 (0.16)}} \\
 & 27  & \text{{1.11 (0.14)}} &\text{{1.12 (0.15)}} &\text{{0.8 (0.14)}} &\text{{0.92 (0.12)}} \\
 & full  & \text{{1.0 (0.14)}} &\text{{0.95 (0.14)}} &\text{{1.15 (0.12)}} &\text{{0.88 (0.17)}} \\\addlinespace

RNN  & 1  & \text{{0.77 (0.12)}} &\text{{0.73 (0.11)}} &\text{{0.8 (0.1)}} &\text{{0.8 (0.11)}} \\
 & 5  & \text{{0.96 (0.11)}} &\text{{0.64 (0.11)}} &\text{{0.82 (0.1)}} &\text{{0.78 (0.14)}} \\
 & 16  & \text{{0.54 (0.1)}} &\text{{0.76 (0.11)}} &\text{{0.97 (0.06)}} &\text{{0.76 (0.11)}} \\
 & 27  & \text{{0.92 (0.11)}} &\text{{0.79 (0.12)}} &\text{{0.75 (0.1)}} &\text{{0.71 (0.11)}} \\ 
 &full  & \text{{0.97 (0.09)}} &\text{{0.71 (0.13)}} &\text{{0.97 (0.11)}} &\text{{0.57 (0.1)}} \\\addlinespace

GRU  & 1  & \text{{0.81 (0.11)}} &\text{{0.64 (0.11)}} &\text{{0.71 (0.11)}} &\text{{0.52 (0.11)}} \\
 & 5  & \text{{0.72 (0.09)}} &\text{{0.87 (0.12)}} &\text{{0.83 (0.12)}} &\text{{0.44 (0.06)}} \\
 & 16  & \text{{0.8 (0.1)}} &\text{{0.86 (0.11)}} &\text{{0.72 (0.1)}} &\text{{0.64 (0.12)}} \\
 & 27  & \text{{0.54 (0.1)}} &\text{{0.67 (0.11)}} &\text{{0.55 (0.1)}} &\text{{0.79 (0.11)}} \\
 & full  & \text{{0.6 (0.1)}} &\text{{0.54 (0.12)}} &\text{{0.77 (0.14)}} &\text{{0.89 (0.11)}} \\\addlinespace

CNN &1 &  \textemdash &  \textemdash&  \textemdash &  \textemdash\\ 
&5 &  \textemdash &  \textemdash&  \textemdash &  \textemdash \\
&16 &  \textemdash &  \textemdash&  \textemdash &  \textemdash \\
&27 &  \textemdash &  \textemdash&  \textemdash &  \textemdash\\
&full & \text{{0.64 (0.09)}} &  \textemdash&  \textemdash &  \textemdash\\\addlinespace

\bottomrule 
\end{tabular}
    \caption{Mean episodic in-distribution rewards in Alien environment normalized by rewards obtained by expert policy. Normalized rewards are given $\pm 1$ SE which is shown in parentheses.}\label{table:alien}
\end{table}

\clearpage

\begin{table}[p]
\begin{tabular}{cc*{4}{c}} 
\toprule 
& & \multicolumn{4}{c}{\textbf{Sparsity}} \\ 
\cmidrule{3-6} 
\multirow{-2.5}{*}{\makecell{\textbf{Model}}} & \multirow{-2.5}{*}{\makecell{\textbf{Rank}}} & {0} & {0.2} & {0.5} & {0.8}\\ 
\midrule 
CfC &1 & \text{0.81 (0.04)} & \text{0.52 (0.02)} & \text{0.62 (0.03)} & \text{0.77 (0.05)} \\

&5 & \text{1.1 (0.02)} & \text{0.78 (0.02)} & \text{0.67 (0.02)} & \text{0.51 (0.03)} \\

&16 & \text{0.8 (0.02)} & \text{1.05 (0.05)} & \text{0.3 (0.02)} & \text{0.55 (0.03)}
\\
& 27 & \text{0.48 (0.02)} & \text{0.64 (0.02)} & \text{0.66 (0.02)} & \text{0.73 (0.01)} \\

& full & \text{0.44 (0.01)} & \text{0.41 (0.02)} & \text{0.6 (0.03)} & \text{0.6 (0.02)} \\\addlinespace

LSTM &1 &\text{0.63 (0.03)} & \text{0.58 (0.05)} & \text{0.88 (0.05)} & \text{0.66 (0.06)} \\

&5 &\text{0.87 (0.06)} & \text{0.08 (0.0)} & \text{0.81 (0.06)} & \text{0.69 (0.05)}
\\
&16& \text{0.81 (0.06)} & \text{0.08 (0.0)} & \text{0.56 (0.03)} & \text{0.8 (0.04)}
\\
&27 &\text{0.72 (0.06)} & \text{0.6 (0.05)} & \text{0.64 (0.05)} & \text{0.63 (0.05)}
\\
&full& \text{0.65 (0.06)} & \text{0.3 (0.01)} & \text{0.76 (0.06)} & \text{0.66 (0.04)}
\\\addlinespace
RNN &1& \text{0.68 (0.03)} & \text{0.84 (0.05)} & \text{0.65 (0.03)} & \text{0.62 (0.03)}
\\
&5 &\text{0.69 (0.06)} & \text{0.48 (0.04)} & \text{0.82 (0.05)} & \text{0.62 (0.04)}
\\
&16& \text{0.65 (0.03)} & \text{0.68 (0.05)} & \text{0.76 (0.05)} & \text{0.6 (0.04)}
\\
&27 &\text{0.8 (0.04)} & \text{0.67 (0.05)} & \text{0.74 (0.06)} & \text{0.72 (0.05)}
\\
&full& \text{0.58 (0.04)} & \text{0.77 (0.04)} & \text{0.64 (0.05)} & \text{0.49 (0.02)}
\\\addlinespace
GRU &1 &\text{0.1 (0.0)} & \text{0.94 (0.05)} & \text{0.82 (0.03)} & \text{0.71 (0.04)}
\\
&5&\text{0.95 (0.05)} & \text{0.92 (0.08)} & \text{0.76 (0.05)} & \text{0.77 (0.05)}
\\
&16&\text{1.04 (0.03)} & \text{0.95 (0.05)} & \text{0.08 (0.0)} & \text{0.69 (0.03)}
\\
&27&\text{0.73 (0.04)} & \text{0.87 (0.05)} & \text{0.67 (0.05)} & \text{0.82 (0.07)}
\\
&full&\text{0.69 (0.05)} & \text{0.08 (0.0)} & \text{0.78 (0.03)} & \text{0.86 (0.05)}
\\\addlinespace
CNN &1 &  \textemdash &  \textemdash&  \textemdash &  \textemdash\\ 
&5 &  \textemdash &  \textemdash&  \textemdash &  \textemdash \\
&16 &  \textemdash &  \textemdash&  \textemdash &  \textemdash \\
&27 &  \textemdash &  \textemdash&  \textemdash &  \textemdash\\
&full &  \text{0.67 (0.04)} &  \textemdash&  \textemdash &  \textemdash\\\addlinespace
\bottomrule 
\end{tabular}
    \caption{Mean episodic rewards under distribution shift in Alien environment normalized by performance of the expert policy under distribution shift. Rewards are given $\pm 1$ SE which is shown in parentheses.}\label{table:alien_ood}
\end{table}

\begin{figure}[p]
\begin{center}
\includegraphics[scale=0.77]{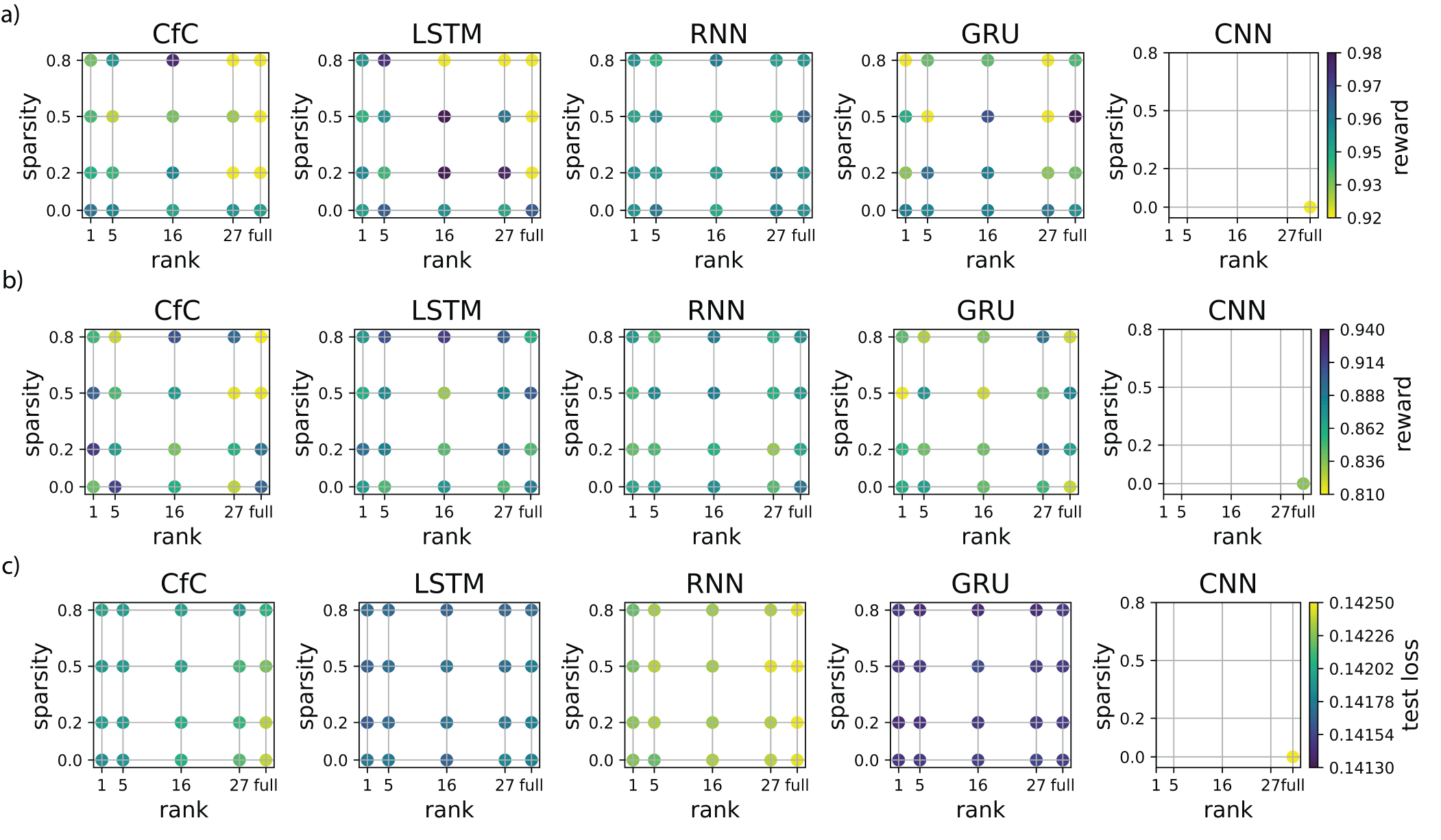}
\end{center}
\caption{Online and offline performance of recurrent networks under different ranks and sparsities in the HalfCheetah environment. a) In-distribution rewards in the online, closed-loop setting normalized by the rewards obtained by the expert in-distribution. b) Rewards averaged across 5 distribution shifts, normalized by the rewards obtained by the expert under distribution shift. c) Loss of model evaluated on a validation/test set.}
\label{fig:halfcheetah}
\end{figure}

\clearpage

\begin{table}[p]
\begin{tabular}{cc*{4}{S[table-format=4.0(2)]}} 
\toprule 
& & \multicolumn{4}{c}{\textbf{Sparsity}} \\ 
\cmidrule{3-6} 
\multirow{-2.5}{*}{\makecell{\textbf{Model}}} & \multirow{-2.5}{*}{\makecell{\textbf{Rank}}} & {0} & {0.2} & {0.5} & {0.8}\\ 
\midrule 
CfC  & 1  & \text{{0.945 (0.005)}} &\text{{0.929 (0.003)}} &\text{{0.924 (0.004)}} &\text{{0.92 (0.003)}} \\
 & 5  & \text{{0.941 (0.003)}} &\text{{0.926 (0.003)}} &\text{{0.91 (0.002)}} &\text{{0.936 (0.002)}} \\
 & 16  & \text{{0.932 (0.002)}} &\text{{0.941 (0.003)}} &\text{{0.919 (0.002)}} &\text{{0.963 (0.005)}} \\
 & 27  & \text{{0.937 (0.004)}} &\text{{0.874 (0.004)}} &\text{{0.914 (0.003)}} &\text{{0.883 (0.004)}} \\
 & full  & \text{{0.934 (0.003)}} &\text{{0.816 (0.006)}} &\text{{0.877 (0.002)}} &\text{{0.847 (0.003)}} \\\addlinespace

LSTM  & 1  & \text{{0.933 (0.003)}} &\text{{0.94 (0.003)}} &\text{{0.927 (0.003)}} &\text{{0.936 (0.003)}} \\
 & 5  & \text{{0.95 (0.003)}} &\text{{0.925 (0.004)}} &\text{{0.937 (0.002)}} &\text{{0.962 (0.003)}} \\
 & 16  & \text{{0.937 (0.003)}} &\text{{0.991 (0.003)}} &\text{{0.968 (0.005)}} &\text{{0.905 (0.003)}} \\
 & 27  & \text{{0.931 (0.003)}} &\text{{0.965 (0.004)}} &\text{{0.945 (0.003)}} &\text{{0.872 (0.003)}} \\
 &full  & \text{{0.951 (0.003)}} &\text{{0.875 (0.003)}} &\text{{0.889 (0.008)}} &\text{{0.842 (0.009)}} \\\addlinespace

RNN  & 1  & \text{{0.943 (0.004)}} &\text{{0.917 (0.007)}} &\text{{0.93 (0.005)}} &\text{{0.903 (0.003)}} \\ &5  & \text{{0.941 (0.003)}} &\text{{0.946 (0.003)}} &\text{{0.885 (0.003)}} &\text{{0.921 (0.003)}} \\
 & 16  & \text{{0.941 (0.002)}} &\text{{0.942 (0.003)}} &\text{{0.954 (0.003)}} &\text{{0.922 (0.004)}} \\
 & 27  & \text{{0.944 (0.003)}} &\text{{0.917 (0.003)}} &\text{{0.9 (0.029)}} &\text{{0.861 (0.006)}} \\ &full  & \text{{0.938 (0.003)}} &\text{{0.92 (0.002)}} &\text{{0.971 (0.003)}} &\text{{0.924 (0.021)}} \\\addlinespace

GRU  & 1  & \text{{0.937 (0.002)}} &\text{{0.938 (0.004)}} &\text{{0.932 (0.003)}} &\text{{0.937 (0.002)}} \\
 & 5  & \text{{0.944 (0.002)}} &\text{{0.935 (0.003)}} &\text{{0.938 (0.003)}} &\text{{0.927 (0.003)}} \\
 & 16  & \text{{0.93 (0.003)}} &\text{{0.934 (0.003)}} &\text{{0.927 (0.003)}} &\text{{0.943 (0.003)}} \\ &27  & \text{{0.94 (0.002)}} &\text{{0.943 (0.003)}} &\text{{0.926 (0.011)}} &\text{{0.934 (0.001)}} \\ &full  & \text{{0.937 (0.004)}} &\text{{0.936 (0.003)}} &\text{{0.949 (0.003)}} &\text{{0.935 (0.002)}} \\\addlinespace

CNN &1 &  \textemdash &  \textemdash&  \textemdash &  \textemdash\\ 
&5 &  \textemdash &  \textemdash&  \textemdash &  \textemdash \\
&16 &  \textemdash &  \textemdash&  \textemdash &  \textemdash \\
&27 &  \textemdash &  \textemdash&  \textemdash &  \textemdash\\
&full & \text{{0.90 (0.003)}} &  \textemdash&  \textemdash &  \textemdash\\\addlinespace

\bottomrule 
\end{tabular}
    \caption{Mean episodic in-distribution rewards in HalfCheetah environment normalized by rewards obtained by expert policy. Normalized rewards are given $\pm 1$ SE which is shown in parentheses.}
\label{table:HalfCheetah}
\end{table}

\clearpage

\begin{table}[p]
\begin{tabular}{cc*{4}{c}} 
\toprule 
& & \multicolumn{4}{c}{\textbf{Sparsity}} \\ 
\cmidrule{3-6} 
\multirow{-2.5}{*}{\makecell{\textbf{Model}}} & \multirow{-2.5}{*}{\makecell{\textbf{Rank}}} & {0} & {0.2} & {0.5} & {0.8}\\ 
\midrule 
CfC &1 & \text{0.842 (0.009)} & \text{0.919 (0.001)} & \text{0.903 (0.003)} & \text{0.852 (0.004)} \\
&5 & \text{0.915 (0.002)} & \text{0.869 (0.007)} & \text{0.848 (0.004)} & \text{0.82 (0.005)} \\
&16 & \text{0.86 (0.007)} & \text{0.838 (0.008)} & \text{0.871 (0.003)} & \text{0.908 (0.0)}\\
&27 &  \text{0.822 (0.008)} & \text{0.858 (0.001)} & \text{0.813 (0.006)} & \text{0.898 (0.003)}\\
&full & \text{0.9 (0.001)} & \text{0.891 (0.005)} & \text{0.812 (0.006)} & \text{0.795 (0.003)}\\\addlinespace

LSTN &1 & \text{0.871 (0.001)} & \text{0.894 (0.0)} & \text{0.86 (0.003)} & \text{0.872 (0.004)}\\ 
&5 &\text{0.849 (0.006)} & \text{0.885 (0.001)} & \text{0.88 (0.0)} & \text{0.909 (0.003)} \\
&16 & \text{0.875 (0.002)} & \text{0.847 (0.008)} & \text{0.83 (0.006)} & \text{0.918 (0.001)}\\
&27 &  \text{0.848 (0.006)} & \text{0.893 (0.002)} & \text{0.882 (0.0)} & \text{0.903 (0.003)}\\
&full & \text{0.888 (0.001)} & \text{0.848 (0.004)} & \text{0.905 (0.001)} & \text{0.86 (0.003)}\\\addlinespace

RNN &1 & \text{0.861 (0.003)} & \text{0.854 (0.002)} & \text{0.806 (0.009)} & \text{0.846 (0.003)}\\ 
&5 & \text{0.87 (0.003)} & \text{0.841 (0.006)} & \text{0.875 (0.001)} & \text{0.824 (0.008)} \\
&16 & \text{0.843 (0.006)} & \text{0.841 (0.007)} & \text{0.819 (0.009)} & \text{0.836 (0.005)} \\
&27 &  \text{0.852 (0.004)} & \text{0.903 (0.004)} & \text{0.844 (0.006)} & \text{0.891 (0.0)}\\
&full &  \text{0.82 (0.009)} & \text{0.869 (0.002)} & \text{0.884 (0.0)} & \text{0.819 (0.009)}\\\addlinespace

LSTM &1 & \text{0.866 (0.001)} & \text{0.843 (0.003)} & \text{0.85 (0.003)} & \text{0.872 (0.001)}\\ 
&5 & \text{0.872 (0.003)} & \text{0.849 (0.006)} & \text{0.882 (0.0)} & \text{0.849 (0.004)} \\
&16 & \text{0.878 (0.002)} & \text{0.858 (0.006)} & \text{0.887 (0.001)} & \text{0.888 (0.001)} \\
&27 &  \text{0.846 (0.004)} & \text{0.831 (0.007)} & \text{0.859 (0.004)} & \text{0.862 (0.004)}\\
&full &  \text{0.891 (0.001)} & \text{0.864 (0.0)} & \text{0.872 (0.003)} & \text{0.877 (0.001)}\\\addlinespace

CNN &1 &  \textemdash &  \textemdash&  \textemdash &  \textemdash\\ 
&5 &  \textemdash &  \textemdash&  \textemdash &  \textemdash \\
&16 &  \textemdash &  \textemdash&  \textemdash &  \textemdash \\
&27 &  \textemdash &  \textemdash&  \textemdash &  \textemdash\\
&full &  \text{0.836 (0.003)} &  \textemdash&  \textemdash &  \textemdash\\\addlinespace
\bottomrule 
\end{tabular}
    \caption{Mean episodic rewards under distribution shift in HalfCheetah environment normalized by performance of the expert policy under distribution shift. Rewards are given $\pm 1$ SE which is shown in parentheses.}\label{table:HalfCheetah_ood}
\end{table}

\clearpage



\subsection{Initialization of recurrent weights}
\label{sec:initialization}
We leverage an initialization scheme consistent across each of the recurrent networks in order to control for potential confounding effects that could arise due to differences present in the default initializers for the recurrent weights. We will first discuss what the default initialization scheme looks like for the recurrent networks we considered and then motivate our proposed method of initialization.

\textbf{Default initialization schemes.} We define the default initialization schemes of recurrent weights in RNNs, LSTMs and GRUs as those implemented by TensorFlow \citep{45381} and the default initialization of recurrent weights in a CfC as given by the open-source implementation presented in \cite{cfc}. The default initializers for these models differ in a couple key areas. For one, RNNs, LSTMs and GRUs are orthogonally initialized whereas CfCs are initialized from a Glorot uniform distribution. 

\begin{figure}[h]
\begin{center}
\includegraphics[scale=0.60]{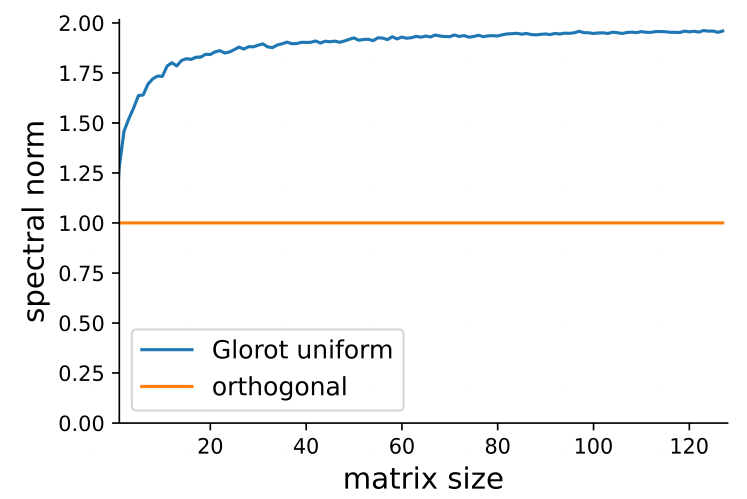}
\end{center}
\caption{Spectral norm of random square matrices drawn from orthogonal and Glorot uniform distributions for various sized matrices.}
\label{fig:uni}
\end{figure}
In Figure \ref{fig:uni}, we investigate the spectral norm of random matrices drawn from orthogonal and Glorot uniform distributions. Due to the significant disparity between the spectral norms inherent to the distributions, we opt to standardize the initialization scheme across all recurrent models by drawing the weights from an orthogonal distribution. We motivate this further in the description of our initializer.

Another concern is that RNNs, LSTMs and GRUs are distinct with respect to how they are orthogonally initialized: in particular, consider $[\mW_0]$, $[\mW_0, \mW_1, \mW_2, \mW_3]$, $[\mW_0, \mW_1, \mW_2]$ which represent the concatenated recurrent weights in RNNs, LSTMs and GRUs respectively. Rather than initialize each $\mW_i$ to be orthogonal, TensorFlow draws the concatenated set of matrices from an orthogonal distribution. The following result demonstrates why this is problematic in the context of our work.

\begin{lemma}
Consider matrices $\mA \in \mathbb{R}^{n \times p}$ and $\mB \in \mathbb{R}^{n \times q}$. Let $\mC = [\mA \ \mB]$ denote the concatenation of the two matrices. Then, $||\mA|| \leq ||\mC||$ and $||\mB|| \leq ||\mC||$ where $||\cdot||$ denotes the spectral norm. 
\label{lemma}
\end{lemma}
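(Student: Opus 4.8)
The plan is to work directly from the variational (operator-norm) characterization of the spectral norm, $\norm{\mM} = \max_{\norm{\vx}=1}\norm{\mM\vx}$, which coincides with the largest singular value. The key structural observation is that horizontal concatenation means $\mC$ acts on a vector $\vw\in\R^{p+q}$ by splitting it as $\vw=(\vu,\vv)$ with $\vu\in\R^p$ and $\vv\in\R^q$, returning $\mC\vw = \mA\vu + \mB\vv$. In particular, feeding $\mC$ a vector whose second block is zero recovers the action of $\mA$ exactly, and this is the only fact the argument really needs.

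First I would prove $\norm{\mA}\le\norm{\mC}$. Let $\vu\in\R^p$ be a unit vector attaining $\norm{\mA}=\norm{\mA\vu}$ (such a maximizer exists by compactness of the unit sphere), and define the zero-padded vector $\vw=(\vu,\vzero)\in\R^{p+q}$. Then $\norm{\vw}=\norm{\vu}=1$ and $\mC\vw = \mA\vu + \mB\vzero = \mA\vu$, so $\norm{\mA}=\norm{\mA\vu}=\norm{\mC\vw}\le \max_{\norm{\vx}=1}\norm{\mC\vx}=\norm{\mC}$. The bound $\norm{\mB}\le\norm{\mC}$ follows by the symmetric argument with the padding on the other side: for a unit maximizer $\vv\in\R^q$ of $\norm{\mB\,\cdot\,}$, set $\vw=(\vzero,\vv)$, so that $\norm{\vw}=1$ and $\mC\vw = \mB\vv$.

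An equivalent, more algebraic route, should it be preferred, is to note that $\mC\mC^T = \mA\mA^T + \mB\mB^T$. Since $\mB\mB^T$ is positive semidefinite, $\mC\mC^T \succeq \mA\mA^T$ in the Loewner order, and monotonicity of the largest eigenvalue under this order (Courant–Fischer) yields $\norm{\mC}^2 = \lambda_{\max}(\mC\mC^T)\ge \lambda_{\max}(\mA\mA^T)=\norm{\mA}^2$, with the analogous inequality for $\mB$. I expect essentially no obstacle here: the only point requiring care is the block bookkeeping, namely verifying that zero-padding the input genuinely isolates the action of the corresponding block (equivalently, that $\mC\mC^T$ splits as the sum of the two Gram matrices). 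Both facts are immediate from the definition of column concatenation, so the proof is short.
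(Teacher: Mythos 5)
Your primary argument is exactly the paper's proof: take a unit maximizer for $\mA$, zero-pad it to a unit vector in $\mathbb{R}^{p+q}$, and observe that $\mC$ applied to this padded vector reproduces $\mA$ applied to the maximizer, giving $||\mA|| \leq ||\mC||$ (and symmetrically for $\mB$); this is correct, and in fact slightly cleaner than the paper's write-up since you note the padded vector yields equality $\mC\vw = \mA\vu$ rather than just an inequality. Your alternative route via $\mC\mC^T = \mA\mA^T + \mB\mB^T$ and eigenvalue monotonicity in the Loewner order is also valid but is supplementary to the main argument, which matches the paper's.
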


\begin{proof}
We will first consider the case of matrix $\mA$. By definition, $||\mA|| = \sup_{\vx \neq 0} \frac{||\mA \vx||}{||\vx||}$. If we constrain $\vx$ to be of unit norm, then $||\mA|| = \sup_{\vx \neq 0} ||\mA \vx||$. Let $\vz$ denote the unit norm vector that maximizes $||\mA \vx||$. Then, we can construct another vector $\vm = [\vz \ 0 \ 0 \cdots \ 0]$ which represents the concatenation of $\vz$ which contains $p$ entries and a zero-vector which contains $q$ entries. Note that by construction $\vm$ is also unit norm. It follows that $||\mA \vz|| \leq ||\mC \vm||$ which means that the maximum attainable value of $||\mC \vx||$ is greater than or equal to the maximum attainable value of $||\mA \vx||$ over all unit vectors $\vx$. Thus, $||\mA|| \leq ||\mC||$. An analogous argument can be made to show that $||\mB|| \leq ||\mC||$.  
\end{proof}

By Lemma \ref{lemma}, it follows that the default initialization in LSTMs and GRUs produces submatrices $\mW_i$ with spectral norm less than the spectral norm of the default initialized recurrent weights in RNNs. We will address this discrepancy in our proposed initialization scheme. 

\textbf{Orthogonal spectral-initialization scheme.} 
Orthogonal spectral-initialization refers to the initialization scheme we utilized for the models in all the results we presented. First and foremost, this standardizes the distribution of the recurrent weights to be orthogonal. Note that we also formalize an analogous initializer which we call Glorot uniform spectral-initialization which is detailed in the next subsection. 

In practice, we use the orthogonal distribution as it leans on the intuition of our connectivity prior in which we show empirically that networks initialized at lower spectral norms tends to converge to solutions with lower spectral norms as well. Because we care about the robustness of the network as measured by spectral norm at convergence and found that orthogonal weights have lower spectral norm than Glorot uniform ones (Figure \ref{fig:uni}), we opt to draw from the orthogonal distribution. 

With respect to the concatenation issue discussed above, we address this by instead drawing each recurrent weight matrix from an orthogonal distribution as opposed to drawing the full, concatenated set of matrices from one. This ensures equivalent spectral norms across all recurrent weights and models.

Finally, the spectral-initialization refers specifically to the rank-r SVD performed on the full-rank weights $\mW_{rec}$ to generate $\mW_1, \mW_2$ (Section \ref{sec:connectivity}) for training. We opted to perform a spectral decomposition as opposed to drawing $\mW_1$ and $\mW_2$ from orthogonal distributions individually because $\mW_1 \mW_2$ is no longer orthogonally distributed. In addition, note that multiplying the two matrices does not preserve the spectral norm of $\mW_{rec}$. In contrast, by the Eckhart-Young-Minsky theorem, a rank-r SVD is the most efficient approximation of $\mW_{rec}$ under the spectral norm. 

Note that the drawback of this initialization scheme is that we are unable to prove all the results posed in Theorem \ref{theory} due to the dependence between entries in an orthogonal matrix (appendix \ref{sec:proofs_orthog_sparse}). In contrast, we are able to prove Theorem \ref{theory} in the case of uniformly distributed matrices, which we motivate next. 

\textbf{Glorot uniform spectral-initialization scheme.} 
The Glorot uniform spectral-initializer refers to the same construction as the orthogonal spectral-initializer, except recurrent weights are drawn from Glorot uniform distributions instead. Note that this distribution is valuable if one wants theoretical guarantees for all the properties given in Theorem \ref{theory}. However, this comes at the cost of empirically less robust models as evidenced by the learned weights possessing higher spectral norm relative to their orthogonally initialized counterparts (results not shown).  

\subsection{Details of experimental setup}
\label{sec:experiments}
In this section, we extensively detail each of the experiments and analyses that were presented in the main portion of the paper. 

\subsubsection{Generating expert trajectories}
\textbf{Arcade learning environments.} Within the set of ALEs, we considered the Seaquest and Alien environments. These environments in particular were chosen as we were able to train reinforcement learning agents that achieved performance competitive to state-of-the-art using out-of-the-box models provided by RLlib \citep{rllib}. For the ALEs, we train an Ape-X DQN \citep{dqn} model using the Atari pre-processing framework detailed in \cite{dqn}. We employ the default implementation of the Ape-X DQN model given by RLlib which is a tuned version of the model presented in \cite{dqn}. After the model is trained to perform sufficiently well in the environment, we use it to generate expert trajectories. Specifically, we generate 100 rollouts of the trained model's observations and actions taken in the environment which were later used to fit the recurrent models we examine in an imitation learning framework. 

\textbf{MuJoCo.} Within the set of MuJoCo environments, we considered the HalfCheetah environment. Again, this environment was chosen based on its amenability to models provided by RLlib. For the MuJoCo environment, we employ an implementation of proximal policy optimization (PPO) given by RLlib \citep{ppo}. We also leverage the default set of hyperparameters given by RLlib to train the model. As we did with the ALEs, we generate 100 rollouts of the trained model's observations and actions taken in the environment which were later used to fit the recurrent models using imitation learning. 

\subsubsection{Imitation learning framework}
For each recurrent architecture, we construct a model to be fit offline on the generated expert trajectories. We have layers in the network that act to preprocess the observations before entering the recurrent portion of the network as well as output layers from the recurrent portion in accordance to the action space specified by the environments. Since the observation and action spaces in ALEs and MuJuCo environments are distinct, we construct different network architectures for each.

First, we consider the network architectures used for ALEs. In their raw form, observations are given by a $210 \times 160 \times 3$ dimensional image, but they are preprocessed using the Atari preprocessing specified by \cite{dqn}. This yields new observations of dimension $84 \times 84 \times 3$ which are used as inputs to the model. 

\begin{table}[h]
    \centering
    \begin{tabular}{|c|c|}
        \hline
        layer type & activation \\
        \hline
        Conv(64, k=5, s=2) & relu  \\
        Conv(128, k=5, s=2) & relu  \\
        Conv(128, k=5, s=2) & relu \\
        AveragePooling & none  \\
        TimeDistributed & none  \\
        recurrentNet & none  \\
        FC(numActions) & softmax \\
        \hline
    \end{tabular}
    \caption{ALE network architecture}
    \label{tab:network_sizes}
\end{table}

In the table above, $k$ denotes the kernel size and $s$ denotes the stride of the convolution. recurrentNet refers to the recurrent network module that was varied and is discussed in further detail below. numActions refers to the number of actions the agent can take in the environment. Since the action space is a discrete categorical variable that can take on numActions different values, the network is trained using categorical cross-entropy loss. We detail the model hyperparameters and corresponding grid search in \ref{sec:hyper}. 

Next, we will consider the network architecture employed for MuJoCo networks. 
\begin{table}[h]
    \centering
    \begin{tabular}{|c|c|}
        \hline
        layer type & activation \\
        \hline
        FC(256) & relu  \\
        FC(256) & relu  \\
        TimeDistributed & none  \\
        recurrentNet & none  \\
        FC(numActions) & tanh \\
        \hline
    \end{tabular}
    \caption{MuJoCo network architecture}
    \label{tab:network_sizes}
\end{table}

Since the action space for MuJuCo environments in continuous, the output is a continuous variable that can take on numActions different values and the network is trained using MSE loss. Furthermore, since the action space is  bounded within the interval $[-1, 1]$, a tanh activation function is applied to the output layer. 

\subsubsection{Model hyperparameters}
\label{sec:hyper}
Below is a table with the model hyperparameters that were used in both the ALE networks and MuJoCo networks. Parameters in square brackets represent a grid search over which the best performing model was chosen. Note that only offline performance on the validation/test set was used to determine the best performing model in the grid search. Parameters in curly braces are dimensions across which the network was individually evaluated for comparison. 

\begin{table}[h]
    \centering
    \begin{tabular}{|c|c|}
        \hline
        hyperparameter & value \\
        \hline
        optimizer & Adam ($\beta_1 = 0.9$, $\beta_2 = 0.999$)  \\
        hidden size & 64  \\
        learning rate & $[5 * 10^{-5}, 1 * 10^{-4}, 5 * 10^{-4}]$  \\
        epochs & 150  \\
        rank & $\{1, 5, 16, 27, \textrm{full}\}$ \\
        sparsity & $\{0, 0.2, 0.5, 0.8\}$ \\
        \hline
    \end{tabular}
    \caption{Model hyperparameters}
    \label{tab:network_sizes}
\end{table}
The architecture of the convolutional head in the ALE networks was used on the basis of work presented in \cite{lechner2022vision} which conducting extensive grid searching across convolutional architectures over many ALEs. Analogously, the dense head used in the MuJoCo architecture was drawn from \cite{cfc}.  

In all the environments, we considered models of rank $1, 5, 16, 27, \textrm{full}$ and sparsities of $0, 0.2, 0.5, 0.8$. Note that the recurrent weights in the full rank networks are not rank decomposed, as they are for the low-rank networks. These ranks and sparsities were chosen such that for each rank we examined, we also looked at a sparsity level for which the number of recurrent parameters in each parameterization is roughly equivalent. For example, since the networks we constructed had a hidden dimension of size $64$, a recurrent matrix with a sparsity level of $0.5$ has the same number of parameters as one with a rank of $16$. We did not present results of networks with a very high recurrent sparsity like $0.95$ (which would roughly be the sparse analog to rank-1 networks) due to optimization difficulties encountered, particularly in LSTMs and GRUs. In order to ensure that the performance of a given network wasn't due to a favorable random sparse mask, we trained each model $3$ times from different random seeds and averaged over the model performance.

Finally, we also employed an initialization scheme for the recurrent weights distinct from the default initialization schemes given in open-source implementations. For details on this, refer to \ref{sec:initialization}.

\subsubsection{Evaluation metrics}
Trained models were evaluated offline on a validation/test set with respect to their cross-entropy loss in the case of ALE networks and mean-squared error in the case of MuJoCo networks. However, in practice, we care about how the agent performs when deployed in the environment closed-loop. The online rewards were computed by performing 10 simulations of the agent in the environments and averaging over the rewards from each episode. The in-distribution setting refers to the environments in which the observations are not modified aside from preprocessing performed by RLlib. 

In the distribution shift setting, for the ALE observations we perform 5 types of perturbations to the agent's observations: cutout, brighten, darken, noise, and blur. In cutout, we remove some pixels from the center of the input image. In brighten, we shift all the pixel values by a constant amount in the positive direction and do the same for darken but shift the pixels in the negative direction. To noise the image, we draw random noise from a uniform distribution and add it to the input image. For distribution shifts in the MuJoCo environments, we perform 3 types of perturbations: noise, dropout and offset. To noise the observations, we draw random noise from a normal distribution and add it to the observation vector. To blur the observations, we perform a Gaussian blurring on the image. To perform dropout, we mask a subset of the dimensions in the observation vector. To perform offset, we, with equal probability, shift the observation vector in the positive or negative direction by an amount constant across all dimensions. For each of the inputs during closed-loop navigation, we apply the distribution shift with probability $p = 0.1$. 

To evaluate the models, both in-distribution and under distribution shift, we normalize the rewards by the performance of the expert policy in order to convert the rewards into units that allow us to benchmark the performance of the imitation learning agents while also making comparisons across recurrent models and connectivities.
\subsubsection{Details on network analyses}
\label{sec:analysis_details}
In this section, we further elaborate on the chosen modes of analysis in the trained recurrent networks and in cases where the metric computation is potentially ambiguous we elaborate on the methodology used to compute it. All the plots shown in the paper regarding these analyses were performed on models trained in the Seaquest environment.

\textbf{Spectral analyses.} 
For recurrent networks that have more than one set of recurrent weights (CfCs, LSTMs, GRUs), any of the spectral statistics that were computed (i.e. eigenspectrum, recurrent weights singular value spectrum, input weights singular value spectrum, etc.)
we compute and plot a single statistic that represents the average over the statistics of each of the individual recurrent weight matrices.

On a separate note, with respect to measuring the decay of the singular value spectrum, note that we normalized the sorted set of singular values by the spectral norm. This was done in order to explain away the magnitude of the transformation, since we separately motivate and analyze the spectral norm. Doing so also ensures each of the decay curves start at 1 which enables comparison across models, ranks and sparsities.

\textbf{Recurrent gradient analysis.} 
In Section \ref{sec:vanishing}, we briefly discussed the analysis performed to analyze the evolution of the recurrent gradients across time. Here, we provide some additional details that were not addressed in the main portion of the paper.

Firstly, we computed the norm of the gradients in log space due to the exponential decay associated with the gradients over time (which happens in practice because gradient propagation is multiplicative). Secondly, note that each of the recurrent gradients start evolving from $0$. We enforced the decay to start from $0$ since we do not care about the starting value of the gradients at the end of time, but rather how this gradient evolves in units relative to the start value (i.e. the rate of decay matters, but not the value it started decaying at). So, we translated the curves to begin at $0$, irrespective of their starting value. 

\textbf{Effective dimensionality of trajectories.} To understand the complexity of the dynamics across recurrent models, connectivity ranks and connectivity sparsities, we collected the trajectories of the model during the simulation of the agent in the online, closed-loop setting and fit a PCA on the entire set of trajectories. We then computed the explained variance of the top 5 principal components (PCs) as a proxy for the effective dimensionality of the trajectories. This is a canonical approach to analyzing state-space trajectories in recurrent networks \citep{NCPpaper}. 

However, we extended the canonical analysis beyond just the full state-space trajectories by decomposing the trajectories into recurrently-driven activity and input-driven activity. In particular, we considered $\mW_{rec} \vh_{t-1}$ as the recurrent activity and $\mW_{inp} \vx_t$ as the input activity. We analogously aggregated these trajectories individually and computed the explained variance of the top 5 PCs. This gives us separate measurements for what the dimensionality of activity looks like in the recurrent subspace versus the input subspace and allows us to explicitly disentangle these two axes. 

For recurrent networks that have more than one set of recurrent (and input) weights (CfCs, LSTMs, GRUs), we computed the explained variance of the top 5 PCs individually for each set of weights and then averaged across them to produce a single number summary for the complexity of network dynamics. 

\subsection{Model parameter counts}
\label{sec:model_sizes}

\begin{table}[h]
    \centering
    \begin{tabular}{|c|c|}
        \hline
        model & parameter count \\
        \hline
        CfC &  61632\\
        LSTM & 81726  \\
        RNN &  20544\\
        GRU &  61632\\
        \hline
    \end{tabular}
    \caption{Parameter counts of the fully-connected, full-rank versions of each recurrent model. Input size = 256, hidden state size = 64.}
    \label{tab:model_sizes}
\end{table}

\end{document}